\newtheorem{theorem}{Theorem}[section]
\newtheorem{remark}{Remark}[section]
\newtheorem{lemma}{Lemma}[section]
\newtheorem{assumption}{Assumption}[section]
\renewcommand{\bm}{\mathbf}
\newcommand{\err}{\boldsymbol \xi}
\newcommand{\res}{\bm r}
\newcommand{\indset}{\mathcal{S}}
\renewcommand{\S}{\indset}
\newcommand{\norm}[1]{\left\lVert#1\right\rVert}
\newcommand{\abs}[1]{\left| #1\right| }
\newcommand{\innerp}[2]{\left\langle #1, #2 \right\rangle}
\newcommand{\sign}{\operatorname{sign}}
\newcommand{\E}{\textup{E}}
\newcommand{\Cov}{ \textup{Cov}}
\newcommand{\supp}{\operatorname{supp}}
\newcommand{\Nor}{\mathcal N}
\newcommand{\erre}{\tilde\err}
\newcommand{\mumax}{\mu_{\max}}
\newcommand{\vv}{\bm v}
\newcommand{\tmin}{\theta_{\min}}
\newcommand{\tmax}{\theta_{\max}}
\newcommand{\R}{\mathbb{R}}
\newcommand{\rhot}{\rho}
\newcommand{\jc}{j_{\text{center}}}
\newcommand{\tv}{\boldsymbol{\theta}}
\newcommand{\x}{\bm x}
\newcommand{\X}{\bm X}
\newcommand{\y}{\bm y}
\newcommand{\z}{\bm z}
\newcommand{\Mc}{M_c}
\newcommand{\tc}{t_c}
\newcommand{\crit}{{\mbox{\tiny crit}}}
\newcommand{\kmax}{k}
\newcommand{\EM}{\bm A} 
\newcommand{\Ev}{\bm a} 
\newcommand{\Ku}{K_u} 
\newcommand{\Kd}{K_d} 
\newcommand{\lmin}{\lambda_{\min}}
\newcommand{\lmax}{\lambda_{\max}}
\renewcommand{\P}{\bm{P}}
\newcommand{\I}{\bm{I}}
\newcommand{\hatS}{S}
\newcommand{\hats}{s}
\newcommand{\mud}{\mu_{\hats}}
\newcommand{\F}{F}
\newcommand{\xn}{\tilde{\x}}
\newcommand{\Xn}{\tilde{\X}}
\newcommand{\tn}{\tilde{\theta}}
\newcommand{\tvn}{\tilde{\tv}}
\newcommand{\tminn}{\tilde{\theta}_{\min}}
\newcommand{\tmaxn}{\tilde{\theta}_{\max}}
\newcommand{\OMP}{\texttt{OMP}\xspace}
\newcommand{\OMPStep}{\texttt{OMP\_Step}\xspace}
\newcommand{\DS}{\texttt{D-OMP}\xspace}
\renewcommand{\DJ}{\texttt{DJ-OMP}\xspace}
\newcommand{\DC}{\texttt{DC-OMP}\xspace}
\newcommand{\DebL}{\texttt{Deb-Lasso}\xspace}
\newcommand{\DebLK}{\texttt{Deb-Lasso-K}\xspace}
\newcommand{\SIS}{\texttt{SIS}\xspace}
\newcommand{\SISSCAD}{\texttt{SIS-SCAD-K}\xspace}
\newcommand{\SISOMP}{\texttt{SIS-OMP-K}\xspace}
\newcommand{\SCAD}{\texttt{SCAD}\xspace}
\begin{document}

\title{Recovery Guarantees for Distributed-OMP\footnote{An earlier version of the paper was titled ``Distributed Sparse Linear Regression with Sublinear Communication''.}}
\author{{
		Chen Amiraz, Robert Krauthgamer and Boaz Nadler}\\
	Weizmann Institute of Science\\
}	

\maketitle

\begin{abstract}
 
We study distributed schemes for high-dimensional sparse linear regression, based on 
 orthogonal matching pursuit (\OMP). Such schemes are particularly suited for settings where a central fusion center is connected to end machines,  that have both computation and communication limitations. 	
	We prove that under suitable assumptions, distributed-\OMP schemes recover the support of 
	the regression vector 
	with communication per machine linear in its sparsity and logarithmic in the dimension. Remarkably, this holds even at low signal-to-noise-ratios, where individual machines are unable to detect the support. 
  Our simulations show that distributed-\OMP schemes 
  are competitive with 
  more 
  computationally intensive methods, and 
  in some cases even outperform them.   
\end{abstract}

\section{INTRODUCTION}
Sparse linear regression is a fundamental problem in machine learning, statistics and signal processing. 
Indeed, sparsity is a natural and widely applied modeling assumption in high dimensional settings. A sparsity assumption gives rise to the
variable selection problem, of 
identifying a small subset of variables which are most informative for a given prediction problem. 
We consider the popular sparse linear regression model with random noise, 
\begin{equation}\label{eq:reg_model}
y=\x^\top\tv+\sigma\xi,
\end{equation}
where $y\in\R$ is the response, $\x\in\mathbb{R}^d$ is a vector of explanatory variables, $\tv\in\mathbb{R}^d$ is the unknown vector of regression coefficients, 
$\xi\in\R$ is a standard normal random variable, i.e., $\xi\sim \Nor \left(0,1 \right) $, and $\sigma>0$ is the noise level.
We consider a high dimensional setting $d\gg 1$
with a vector $\tv$ 
of sparsity $ K=\norm{\tv}_0 \ll d$.
In a centralized setting, given $N$ samples
from \eqref{eq:reg_model}, common tasks are to 
accurately estimate $\tv$ as well as its support $\S=\mbox{supp}(\tv)=\{i\mid \theta_i \neq 0\}$. 
Many 
methods have been proposed and analyzed to solve these tasks, including combinatorial algorithms, linear programming, greedy approaches and regularization schemes \citep{mallat1993matching,tibshirani1996regression,chen2001atomic,miller2002subset,candes2005decoding,tropp2007signal,blumensath2008iterative,needell2009cosamp,dai2009subspace,bertsimas2016best,hastie2020best,amir2021trimmed}.

In various contemporary applications, the data is stored across multiple  machines. 
Moreover, due to communication or privacy constraints, the data at each machine cannot be sent to other machines in the network.  
Such cases 
bring about
various distributed learning problems, see  \cite{wimalajeewa2017application,jordan2019communication} and references therein.

A common distributed setting, which we also consider here, consists of $M$ machines connected in a star topology to a fusion center, with each machine having 
for simplicity an equal number of samples, $n=N/M$. 
For the sparse model \eqref{eq:reg_model}, 
some distributed methods attempt to recover the centralized solution that would have been computed by the fusion center, if it had access to all $N=nM$ samples 
of the $M$ machines.
Examples include optimization-based methods \citep{mateos2010distributed,ling2011decentralized,ling2012decentralized,fosson2016distributed,mota2011distributed},
Bayesian approaches \citep{makhzani2013distributed,khanna2016decentralized},
and greedy schemes \citep{sundman2012greedy,li2015decentralized,patterson2014distributed,han2015modified,chouvardas2015greedy}.
These methods are in general communication intensive, as they are iterative and may require many rounds to converge.
A simpler single round divide-and-conquer scheme, is for each machine to send its own estimate
of $\tv$ and for the fusion center to average these estimates. 
For a wide range of 
problems, the resulting estimator has a risk comparable to that of the centralized solution
\citep{rosenblatt2016optimality,wang2017efficient,jordan2019communication,liu2023communication}.
For
the sparse linear regression model \eqref{eq:reg_model}, 	
\cite{lee2017communication} and \cite{battey2018distributed} proposed a single round distributed debiased-Lasso scheme, and 
proved that under suitable conditions it achieves the same error rate as the centralized solution. Yet, these debiased-Lasso methods have two limitations: (i) the communication per machine is at least linear
in $d$;  and (ii) the computational costs are considerable, 
as
each machine has to solve $d+1$ Lasso problems. 
\citet{barghi2021distributed} and 
\citet{fonseca2023distributed} proposed 
debiased-Lasso methods with much less communication, 
where  
each machine sends to the center only the indices of its few largest
coordinates.

We consider distributed estimation of the sparse vector $\tv$ 
in the model \eqref{eq:reg_model}, 	under the following setting: 
The $M$ end machines have both limited processing power and a restricted communication budget. 
This 
is motivated by modern applications where end machines are computationally weak, but 
collect high dimensional data. For example, 
in spectrum sensing, 
a network of sensors 
continuously monitor and collect high dimensional data, and 
repeatedly need to estimate the current vector $\tv$. 
In these settings, computationally intensive methods such as debiased Lasso may be infeasible or prohibitively slow. In addition, regardless of computational considerations, most of the above methods are not applicable in high dimensions, as their
communication per machine is at least linear in $d$. 

As the quantity of interest $\tv$ is $K$-sparse with $K\ll d$, this 
gives rise to the following challenge: develop a scheme that accurately estimates the vector $\tv$ with number of operations per machine linear in $d$ and communication  \emph{sublinear} in $d$, and derive theoretical guarantees for it. 
Here we focus on accurately estimating the support of $\tv$. 
Indeed, as discussed in \citet{battey2018distributed,fonseca2023distributed}, 
given an accurate estimate of the support, an additional single round of communication allows distributed estimation of $\tv$ with the same error rate as in the centralized setting.

A natural base algorithm for machines with low computational resources is Orthogonal Matching Pursuit (\OMP), as it is one of the fastest methods for sparse recovery  \citep{chen1989orthogonal,pati1993orthogonal,mallat1993matching}. 
Several 
distributed-\OMP schemes to estimate the support of $\tv$, 
which are computationally fast and incur little communication, were proposed
in 
 \cite{duarte2005distributed,wimalajeewa2013cooperative,sundman2014distributed}.
In terms of theory, to the best of our knowledge, the only work to derive support recovery guarantees for distributed-\OMP methods is  \cite{wimalajeewa2014omp}. However, their analysis is restricted to a noise-less compressed-sensing setting, with samples $\x_i$ that are random and independent across machines.
Their proof is based on an underlying symmetry between
all non-support variables. 
In contrast, we consider a more general setting with deterministic samples corrupted by additive noise, for which their proof technique is not applicable.

Our key contribution is the derivation of a recovery guarantee for a distributed-\OMP scheme, see
Theorem \ref{thm:DJ-OMP}. Remarkably, our guarantee holds 
even at low signal-to-noise ratios (SNRs), where each individual machine fails to recover the support. 
The main 
challenge in our analysis is that 
the samples $\x_i$, assumed deterministic, may be similar (or even identical) across machines. 
Hence, at low signal-to-noise ratios,
several machines
might
send the \emph{same} incorrect support variable to the fusion center. Deriving a theoretical guarantee in this case requires a different and more delicate analysis than that of previous works.
Specifically, to bound the probability that a non-support variable is sent to the fusion center we use 
recent lower bounds on the maximum of correlated Gaussian random variables \citep{lopes2022sharp}.
Our analysis 
provides insight how distributed-\OMP methods can achieve exact support recovery even at low SNR where individual machines fail to do so.

To complement our theoretical analysis, 
in Section \ref{sec:sim}
we compare via simulations the support-recovery 
success of several algorithms including distributed-\OMP, 
debiased Lasso schemes \citep{lee2017communication,battey2018distributed,  barghi2021distributed}, 
and distributed variants of sure independence screening
(\SIS) \citep{fan2008sure}, which are also suitable for computationally weak machines.  
In a distributed variant of \SIS,  each machine first excludes variables weakly correlated to the response, and then estimates the sparse vector $\tv$ on the remaining ones via any appropriate algorithm.
In our experiments we considered
smoothly clipped absolute deviation (\SCAD) \citep{fan2001variable} and \OMP. 
As expected, our simulations show that the best performing scheme is debiased Lasso, but 
at the expense of significantly higher communication and computational costs. Interestingly, 
in comparison to a communication-restricted thresholded variant of debiased Lasso, distributed-\OMP methods perform comparably, and in 
some cases even outperform it, while being orders of magnitude faster. Furthermore, 
applying \SIS followed by \OMP at each machine
performed in some cases even slightly better that
distributed-\OMP. 
We conclude the paper with a discussion in Section \ref{sec:disc}.

\paragraph{Notation}
We use the standard $O(\cdot),\Omega(\cdot),\Theta(\cdot)$ notation to hide
constants independent of the problem parameters and $\tilde{O}(\cdot)$ to hide terms polylogarithmic in $d$. 
For functions $f,g$,  the notations $f=o(g)$ and $f\ll g$ mean that $f(d)/g(d)\to 0$ as
$d\to\infty$.
We say that an estimator $\hat S$ achieves exact support recovery with high probability if 
$\Pr\left[ \hat S=\indset\right] \to 1$ as both $d\to\infty$ and the number of
machines $M=M(d)\to\infty$ at a suitable rate. 
The smallest integer larger than or
equal to $x$ is denoted $\lceil x \rceil$.
For a standard Gaussian $Z\sim \Nor(0,1)$,  the complement of its
cumulative distribution function is $\Phi^c(t)=\Pr[Z>t ]$. 
We denote the inner product of two vectors ${\bm u},{\bm v}$ 
by $\langle {\bm u},{\bm v} \rangle= {\bm u}^\top {\bm v}$.

\section{PROBLEM SETUP}\label{sec:problem_setup}
We consider linear regression with a sparse coefficient vector in a distributed
setting, where \(M\) machines are connected in a star topology to a fusion center. 
Each machine $m\in[M]$ holds $n$ samples from the sparse regression model \eqref{eq:reg_model}, i.e.,
a design matrix $\X^{(m)}\in\R^{n\times d}$ and a response vector $\y^{(m)}\in \R^n$,
related via 
\begin{equation}\label{eq:sig_dec}
\y^{(m)} =  \X^{(m)}\tv +\sigma\err^{(m)},
\end{equation}
where 
$\err^{(m)}\sim \Nor\left(0,\bm{I}_n\right)$ and 
$\sigma$ is the unknown noise level. 
While the $M$ machines may have the same or similar design matrices,
their noises $\err^{(m)}$ are assumed to be independent.
We assume $\tv$ is \emph{$K$-sparse}, namely 
$\norm{\tv}_0=\abs{\supp(\tv)}=K$, with 
the value of $K$ known to the center.

The problem we consider is exact recovery of the support of $\tv$, which is a standard goal in sparse linear regression,  and has been widely studied in both non-distributed and distributed settings.  
We study this problem under the constraints that the $M$ machines have limited computational resources and limited communication with the fusion center. 
This setting is relevant in various applications including distributed compressed sensing and sensor networks.

\section{DISTRIBUTED OMP SCHEMES}\label{sec:algs}

\OMP-based schemes are popular for sparse support recovery, and are highly attractive in distributed settings where computation and communication are limited. 
We consider two distributed \OMP schemes to estimate the
support of $\tv$. 
Both schemes use the following subroutine, denoted 
\OMPStep, 
 which performs a single step of the \OMP algorithm, and outputs a new variable to be added to the current support set.
 As outlined in Algorithm \ref{alg:OMP_step}, 
  given a matrix $\X$, a vector $\y$, and a support set $\hatS$, the subroutine computes $\hat{\tv}$, the least squares approximation of $\tv$ on the support
$\hatS$ and 
its
residual vector $\res$. 
It then outputs an 
index $j\in[d]$ whose column
$\x_j$ has maximal correlation with  $\res$. 
A key property of \OMPStep is the orthogonality of the
residual to the columns of $\X$ in the set $S$.
Hence, the output of \OMPStep is a new index $j\notin \hatS$.

The simplest distributed \OMP method is for each machine to separately run \OMP for $K$ steps and send its $K$ locally-computed indices to the fusion center. The center 
estimates the support of $\tv$ by the 
$K$ indices that received the largest number of votes. 
To cope with low-SNR regimes where the top $K$ indices at individual machines
may not include all support indices, 
we propose a variant where each machine runs \OMP for a 
{\em larger} number of steps and thus sends a support of size $L>K$. This scheme,  which we call Distributed OMP (\DS), is outlined
in Algorithm \ref{alg:DS-OMP}.

\begin{algorithm2e}[!t]
	\caption{ \OMPStep}
	\label{alg:OMP_step}
	\DontPrintSemicolon
	\SetAlgoLined
	\SetKwInOut{Input}{input~}
	\SetKwInOut{Output}{output}

	\Input{~$\X\in\mathbb{R}^{n\times d}$, 
		$\y\in\mathbb{R}^n$, support set $\hatS$}
	\Output{support index $j $}
	compute
	 $\hat{\tv}= \mbox{argmin}_{\z\in\mathbb{R}^d, \supp(\z)=\hatS}
	\left\Vert
	\y-\X\z\right\Vert_2
	$ 
	\;
	compute residual $\res=\y-\X \hat{\tv}$ \;
	output index $j=\arg\max\left\lbrace \frac{  \lvert \langle \x
		_i,\res\rangle \rvert}{\norm{\x_i}}: i\in [d]\right\rbrace $ \;
	
\end{algorithm2e}

\begin{algorithm2e}[!t]
	\caption{\texttt{Distributed OMP (\DS)}}
	\label{alg:DS-OMP}
	\DontPrintSemicolon
	\SetAlgoLined
	\SetKwInOut{Input}{input}\SetKwInOut{Output}{output}
	
	\MachinesM{}{
		\Input{
			$\X^{(m)}\in\mathbb{R}^{n\times d}$, 
			$\y^{(m)}\in\mathbb{R}^n$, integer $L$ 
			}
		\Output{message $\hatS^{(m)}_L$ to center}
		initialize $\hatS^{(m)}_0=\emptyset$ \;
		\For{round $t=1,\dots,L$}{
			$j^{(m,t)}=\OMPStep\left( \X^{(m)},\y^{(m)},\hatS^{(m)}_{t-1}\right) $ \;
			update support set
			$\hatS^{(m)}_t=\hatS^{(m)}_{t-1}\cup \left\lbrace
			j^{(m,t)}\right\rbrace $
			
		}
		send  $\hatS^{(m)}_L$ to the center
	}	
	\Center{}{
		\Input{messages $\left\lbrace \hatS^{(m)}_L\right\rbrace _{m\in[M]}$, sparsity 
			$K$}
		\Output{estimated support $\hatS$ }
		for each index $j\in \left[ d\right] $, calculate the number of votes it received
		$\vv_j=\sum_{m\in [M]} \mathbb{1}\left\lbrace j \in \hatS^{(m)}_L \right\rbrace $ \;
		sort indices by number of
		votes, $\vv_{\pi(1)}  \geq \dots \geq \vv_{\pi(d)} $ \;
		return $K$ indices with most votes $\hatS=\left\lbrace \pi(1),\dots,
		\pi(K) \right\rbrace $	 \;
	}	
\end{algorithm2e}

\begin{algorithm2e}[!t]
	\caption{\texttt{Distributed Joint OMP}} 
	\label{alg:DJ-OMP}
	\DontPrintSemicolon
	\SetAlgoLined
	\SetKwInOut{Input}{input}\SetKwInOut{Output}{output}

	initialize $\hatS_0=\emptyset$
	\;
	
	\For{round $t=1,\dots,K$}{
		\MachinesM{}{
			$j^{(m,t)} = \OMPStep\left( \X^{(m)},\y^{(m)},\hatS_{t-1}\right)$ 

				send index $j^{(m,t)}$ to fusion center

		}	
		\Center{}{
			\Input{messages $j^{(m,t)}$, 
				sparsity 
				$K$}
			calculate number of votes for each index $j$, 
			$\vv^{(t)}_j=\sum_{m\in [M]} \mathbb{1}\left\lbrace j=j^{(m,t)} \right\rbrace $ \;
			find most voted index
			$j_t=\mbox{argmax}_{j} \vv^{(t)}_j$ 	 \;
			add $j_t$ to support set
			$\hatS_t=\hatS_{t-1}\cup\{j_t\}$ \;
			send $j_t$ to all machines \;
			if $t=K$ output $\hatS_K$
		}	
	}
\end{algorithm2e}

A second scheme, which we call Distributed Joint OMP (\DJ), computes the support set one index at a time, 
using $K$ communication rounds.
Starting with an empty support set $\hatS_0=\emptyset$, at each round $t=1,\dots,K$, the center
sends the current set $\hatS_{t-1}$ to the $M$ machines.
Then, each machine calls \OMPStep and sends the resulting index $j^{(m,t)} $ to the center.
At the end of each
round, the center adds to the support set an index $j_t$ that received the 
most
votes, $\hatS_t=\hatS_{t-1}\cup \{j_t\}$. 
After $K$ rounds, the center outputs the support set $\hatS_K$.
Since \OMPStep outputs an index not in the current set $\hatS_{t-1}$, at each
round $t$ of \DJ,  a new index is indeed added by the center, 
$j_t\notin\hatS_{t-1}$. 
This scheme is outlined in Algorithm \ref{alg:DJ-OMP}.	

\paragraph{Computation and Communication Complexity.}
Let us first analyze the number of operations in a single execution of \OMPStep. Given a support set $\hatS$, computing $\hat{\tv}$ via least squares involves multiplying a $\left| \hatS\right| \times n$ matrix by its transpose, and then inverting the resulting $\left| \hatS\right|\times\left| \hatS\right|$ matrix. Next, finding 
the index $j$ most correlated to the residual requires $d$ inner products of vectors in $\R^n$. For $\left| \hatS\right|$ sufficiently small, say $o\left(  d^{1/3}\right) $, the computational cost of \OMPStep is dominated by the latter step whose cost is $O(nd)$.

We now compare the two schemes \DJ and \DS with $L=K$. 
In terms of computational complexity, in both schemes each machine performs
the same number of operations. 
Thus, for $K=o\left(  d^{1/3}\right) $ their computational complexity per machine is $O(ndK)$. 
In terms of communication, in both schemes each machine sends and receives a total of $K$ indices,  
and so the communication per machine is $O(K\log d)$ bits. 
The main difference 
is that \DS performs a single round, whereas \DJ performs $K$ rounds. Hence, \DJ requires synchronization and is slower in comparison to \DS.

\paragraph{Related Works.}
Various distributed-\OMP methods were proposed in the past decade. 
\cite{wimalajeewa2013cooperative} considered the same \DS scheme as we do, with $L=K$.
In addition, they proposed a \DC algorithm, which is similar to \DJ. In \DC, at each round, instead of adding just one index to the support, 
the fusion center adds all
indices that received at least two votes. 
A distributed-\OMP approach for a different setting where each machine has its own regression vector $\tv^{(m)}$ was proposed in \cite{sundman2014distributed}. In their setting, the support sets of the $M$ vectors $\tv^{(m)}$ are assumed to be similar, and the $M$ machines are connected in a general topology without a fusion center.

\section{THEORETICAL RESULTS}\label{sec:theory}

Despite their simplicity, to the best of our knowledge, distributed-\OMP schemes lack rigorous mathematical support
and
only limited theoretical results have been derived for them. 
\citet{wimalajeewa2014omp} proved a support recovery guarantee for \DC, but only in 
a restricted noise-less compressed-sensing setting, where the entries of the design matrices are all random and i.i.d.\ across machines. 
In contrast, in this section we 
derive a support recovery guarantee
for \DJ, under  
a more general setting, where the design matrices are deterministic and potentially structured, and the responses $y$ are noisy.
Specifically, we prove in Theorem \ref{thm:DJ-OMP} that if the 
SNR is large enough (the non-zero entries
of $\tv$ are sufficiently large in absolute value), then with high probability
\DJ recovers the support set $\S$.
Remarkably, the SNR required by our theorem is well below that required for individual machines to succeed. 
Its proof appears in Appendix \ref{sec:proofs}.

Towards stating our result formally, we first recall recovery guarantees for \OMP on a single machine, and mathematically define the SNR in our problem.

\paragraph{Distributed Coherence Condition.}
The coherence of a matrix $\EM$ with columns 
$\Ev_j$ is defined as
\begin{equation}\label{eq:coh}
\mu\left( \EM\right) =\max_{i\neq
	j}\frac{\left|\innerp{\Ev_{i}}{\Ev_{j}}\right|}{\norm{\Ev_i}_2\norm{\Ev_j}_2}.
\end{equation}
A matrix $\EM$ satisfies the {\em Mutual Incoherence Property} (MIP) with respect to a
sparsity level $K$ if
\begin{equation}\label{eq:MIP}
\mu(\EM)<\frac{1}{2K-1}. 
\end{equation}
A fundamental result by \citet{tropp2004greed} is that in an ideal noise-less
setting ($\sigma=0$), the MIP condition \eqref{eq:MIP} is sufficient for exact support
recovery by \OMP.

In our distributed setting, each machine $m$ has its own design matrix $\X^{(m)}$ with coherence
$\mu^{(m)}=\mu(\X^{(m)})$.
We denote their maximal coherence by
\begin{equation}\label{eq:mumax}
\mumax= \mumax(\X^{(1)},\dots,\X^{(M)})=\max_{m\in [M]} \mu^{(m)}.
\end{equation}
We say that a set of matrices $\X^{(1)},\dots,\X^{(M)}$ satisfies the max-MIP
condition w.r.t.\ a sparsity level $K$ if
\begin{equation}
\label{eq:max_mip_cond}
\mumax<\frac{1}{2K-1}.  
\end{equation}
Eq. 
\eqref{eq:max_mip_cond}
implies that all machines satisfy the MIP condition \eqref{eq:MIP}. Hence, in a noise-less setting, 
\OMP at 
each machine will correctly recover the support of $\tv$.
The coherence plays a key role for \OMP recovery also in the presence of noise, as we discuss next.

\paragraph{SNR Regime.}
We formally define the SNR in our distributed setting. We then focus on an interesting regime, in which the SNR is sufficiently high for \OMP to recover the support of $\tv$ in a centralized setting,
where the center has access to all the samples from all machines, and yet too low for \OMP at a single machine to individually recover it. 
For a $K$-sparse vector $\tv\in\R^d$, a matrix $\EM\in\R^{n\times d}$ with coherence $\mu$ 
whose columns have unit norm, 
and a noise level $\sigma$,
define
\begin{equation}\label{eq:theta_crit}
\theta_{\crit}(\mu,d,K,\sigma)=\frac{\sigma\sqrt{2\log
		d}}{1-(2K-1)\mu}.
\end{equation}
Notice it is well defined under the MIP condition \eqref{eq:MIP}. 

As in previous works, to derive exact support recovery guarantees, we consider vectors $\tv$ whose non-zero entries 
have magnitude lower bounded by $\tmin$,
namely $\min_{k\in \S} \left|\theta_k\right| \geq \tmin $.
For a matrix $\EM$ with unit-norm columns, define the SNR as $r=\left( \frac{\tmin}{\theta_{\crit}(\mu,d,K,\sigma)}\right) ^2$.
Near the value $r=1$, \OMP (at a single machine) exhibits a phase transition from failure to success of support recovery. 
If the SNR is slightly higher, i.e.,
$r >\left( 1+\sqrt{\frac{\log K}{\log d}}\right)^2 $, then
with high probability
\OMP exactly recovers the support $\S$
\citep{ben2010coherence}. In contrast, if the SNR is slightly lower, i.e.,
$r<\left( 1-\sqrt{\frac{\log K}{\log d}}-\mu\right)^2 $,
then there are matrices $\EM\in\R^{n\times d}$ with coherence $\mu$ and $K$-sparse 
vectors
$\tv\in\R^d$ for which
given $\y=\EM\tv+\sigma\err$, 
\OMP fails 
with high probability
to recover the support
of $\tv$. In addition, this occurs empirically for several common families of matrices $\EM$ and
vectors $\tv$ \citep{amiraz2021tight}.

In our distributed setting the matrices $\X^{(m)}$ are assumed to be deterministic and 
do not necessarily have unit-norm columns. However, \eqref{eq:sig_dec} 
is equivalent to 
\begin{equation}\label{eq:norm_model}
\y^{(m)} =  \Xn^{(m)}\tvn^{(m)} +\sigma\err^{(m)},
\end{equation}	
where each column $\xn_j^{(m)}$ of the matrix $\Xn^{(m)}$ is scaled to have unit norm, i.e., 
$\xn_j^{(m)}={\x_j^{(m)}} / \|\x_j^{(m)}\|$, 
and accordingly $\tn_j^{(m)}=\|\x_j^{(m)}\|\theta_j$. Clearly, the support of each $\tvn^{(m)}$ is identical to that of $\tv$.
We assume that for a suitable $\tminn$, the vector $\tv$ satisfies that
\begin{equation}\label{eq:tminn}
	\min_m \| \x_k^{(m)}\|
\left|\theta_k\right| \geq 
		\tminn, 
\quad \forall k\in \S. 
\end{equation}

Given the above discussion, in our distributed setting we define
the SNR parameter $r$ as follows, 
\begin{equation}\label{eq:r_def}
r = \left(\frac{\tminn}{\theta_{\crit}( \mumax,d,K,\sigma)}\right)^2.
\end{equation}
If $r>1$ then $\tminn>\theta_{\crit}( \mu^{(m)},d,K,\sigma)$ at every machine $m\in[M]$, and hence \OMP in any single machine would recover the support of $\tv$ with high probability. 

Next, consider a centralized setting where all $N=Mn$ samples are available to the fusion center.
This setting corresponds to a response vector $\y\in\R^N$ and
measurement matrix $\X\in\R^{N\times d}$ formed by column-stacking
$\y^{(1)},\dots,\y^{(M)}$ and $\X^{(1)},\dots,\X^{(M)}$, respectively.
In analogy to \eqref{eq:r_def}, to guarantee
support recovery in this case, a sufficient condition 
is that the centralized SNR $r^{(c)}=
\left(
\frac{\tminn^{(c)} }
{\theta_{\crit}(\mu(\X),d,K,\sigma)}  
\right)^2 > 1$. Here
$\tminn^{(c)}$ is a value such 
that for all support indices $k\in\S$, 
$|\theta_k| \geq \tminn^{(c)}/\|\X_k\|$, where $\X_k$ is the $k$-th column of $\X$. 
Since $\norm{\X_k} \geq \sqrt{M} \min_m \norm{\x_k^{(m)}}$, then in a centralized setting
\OMP is guaranteed to succeed when  $\sqrt{M}\tminn>\theta_{\crit}(\mu(\X),d,K,\sigma)$.
Given the definition \eqref{eq:theta_crit} for $\theta_{\crit}$, 
an SNR regime that is interesting to study in the distributed setting is 
\begin{equation}\label{eq:reg_SNR_regime}
\frac{1}{M} \left( \frac{1-(2K-1) \mumax}{1-(2K-1)\mu(\X)}\right) ^2 < r 
< 1.
\end{equation}

As we show next, for a subrange of the SNR values in Eq. \eqref{eq:reg_SNR_regime}, 
the \DJ scheme can still achieve 
exact support recovery.

\subsection{Support Recovery Guarantee}

We present three assumptions for our recovery guarantee to hold. As \OMP is based on dot products between the residual and normalized columns of the design matrices,
we first introduce the following quantity that bounds how large these can be, 
\begin{equation}\label{eq:delta}
\delta=\delta\left(K,\mumax \right)  = \tfrac{\left( K-1\right)
	\mumax^{2}}{1-\left(K-2\right)\mumax}.
\end{equation}
As we show in Section \ref{app:lemmas_K}, under the max-MIP condition \eqref{eq:max_mip_cond}, $\delta\leq \mumax$.
Our first assumption is that the number of machines is sufficiently large, with the dependence on $K$ encoded in the quantity $\delta$. 
\begin{assumption}\label{assum:M}
$M\geq  \Mc\left(d,K,\mumax,r \right)$, where
\begin{equation}\label{eq:Mc_K-maintext}
	\Mc\left(d,K,\mumax,r\right)= K\left\lceil  \frac{16\log
		d}{\Phi^{c}\left(\tfrac{(1-\sqrt{r})\sqrt{2\log d}}{ \sqrt{1-\delta}(1-\mumax)
		}\right)}\right\rceil.
\end{equation}
\end{assumption}
In our analysis, we assume that $d\gg 1$ and that $\mumax$ is small. 
This implies that also $\delta$ is small and hence 
\begin{equation}\label{eq:Mcy_Ot}
\Mc\left(d,K,\mumax,r \right)\approx Kd^{\left(\frac{1-\sqrt{r}}{\sqrt{1-\delta}(1-\mumax)}\right)^{2}} ,
\end{equation}
{which follows from the approximation $\Phi^{c}(t)\approx e^{-t^2/2}$ and omitting $O(\log d)$ factors.}
Thus, larger SNR values (though still smaller than one), require fewer machines to guarantee support recovery.

To guarantee support recovery by \DJ, we also need to upper bound the probability that a non-support index is sent to the fusion center. 
As described in the appendix, 
for this we use a recent result on the left tail of the maximum of correlated Gaussian random variables \citep{lopes2022sharp}. 
The SNR that guarantees recovery thus depends on a parameter $\epsilon=\epsilon(K,\mumax)$, with smaller values of $\epsilon$ leading to a lower SNR. 
However, for our proof to work, $\epsilon$ cannot be arbitrarily small, and we set it as follows.  
\begin{assumption}\label{assum:eps}
The scalar $\epsilon=\epsilon(K,\mumax)$ satisfies 
\begin{equation}\label{eq:eps_cond_K}
\frac{\sqrt{\mumax+\delta}}{1+\sqrt{\mumax+\delta}}<\epsilon<1.
\end{equation}
\end{assumption}
Importantly, for $\mumax$ small,  $\epsilon$ can be chosen to be as small as {$O(\sqrt{\mumax})$. As detailed in the theorem below, this allows recovery at low SNRs.

Finally, we define a few quantities 
that characterize the lower bound we impose on the SNR $r$. 
Let
\begin{equation}\label{eq:Q_0_K}
Q_{0}\left(d,K\right)=\tfrac{\log\left(88\sqrt{2}K\right) }{\log d},
\end{equation}
and define $Q_{1}\left(d,K,\mumax,\epsilon\right)$
and $Q_2\left(d,K,\mumax\right)$ by 
\begin{equation}\label{eq:Q_1_K}
  Q_{1}
  =\tfrac{1-\left(1-\mumax\right)\sqrt{1-\delta}\left(\left(1-\epsilon\right)\sqrt{1-\mumax}-\sqrt{Q_{0}}\right)}{1-2\mumax K\sqrt{1-\delta}\frac{1-\mumax}{1-\left(2K-1\right)\mumax}},
\end{equation}
\begin{equation}\label{eq:Q_2_K}
  Q_2
  =\tfrac{\sqrt{2+2\left( \mumax+\delta\right) }\left(1+\sqrt{1-\delta}\left(1-\mumax\right)\sqrt{Q_{0}}\right)}{\sqrt{1-\delta}\left(1-\mumax\right)+\sqrt{2+2\left( \mumax+\delta\right)}}.
\end{equation}
\begin{assumption}[SNR Condition]\label{assum:r}
	The SNR $r$ is lower bounded as follows
	\begin{equation}\label{eq:r_cond_K}
		\sqrt{r} \geq
		\left\{
			\begin{array}{ll}
				Q_2 & (4K-1)\mumax-2K\mumax^2\geq 1 \\ 
				\min(Q_1,Q_2) & \mbox{\rm otherwise}
			\end{array}
				\right.
	\end{equation}
\end{assumption}

We can now state our support recovery guarantee.
The following theorem shows that under the above assumptions, 
the
\DJ algorithm, which indeed requires lightweight communication and computation, recovers the support of $\tv$, with high probability.

\begin{theorem}\label{thm:DJ-OMP}
	Under Assumptions \ref{assum:M}-\ref{assum:r} and the max-MIP condition \eqref{eq:max_mip_cond}, for sufficiently large $d=d(\epsilon)$, with probability at least 
	$1-2^{K+1}/d ,$ 
	\DJ with $K$ rounds recovers
	the support of the $K$-sparse vector $\tv$. 
\end{theorem}

Let us analyze the implications of the theorem when 
$K \ll d$ and $\mumax,\epsilon,\delta\ll 1$. In this case
$Q_1\approx\epsilon$ and $Q_2\approx \frac{\sqrt{2}}{1+\sqrt{2}}$. 
Hence, Assumption \ref{assum:r} is approximately
$r > (\min(Q_1,Q_2))^2 \approx \epsilon^2$ 
or $r\gtrsim \mumax$.  
Thus, there is a 
range of relatively low SNR values for which 
with a sufficiently large number of machines, \DJ is guaranteed to
recover the support,
even though individual machines fail to do so.
}

\begin{remark}
	Several works considered distributed settings where each machine has a different vector $\tv^{(m)}$, but they are all share the same support $\S$
	\citep{duarte2005distributed,ling2011decentralized,ling2012decentralized,wimalajeewa2014omp,li2015decentralized}. 
	Theorem \ref{thm:DJ-OMP} also holds in such cases, 
	under the following condition on the vectors
	$\tv^{(m)}$, instead of  \eqref{eq:tminn},
	$$
	\min_{m\in[M]} \norm{\x_k^{(m)}}\left|\theta_k^{(m)}\right|  \geq \tminn \quad \forall k\in \S.
	$$
\end{remark}

We now compare Theorem \ref{thm:DJ-OMP} to related works. 
\citet{amiraz2022distributed} studied 
distributed sparse mean estimation, which 
is
a special case 
of distributed sparse linear regression
where the design matrices are orthogonal.
They designed low-communication distributed schemes that provably recover the support
for a wide range of SNR values. 
However, their proofs rely on the design matrices being orthogonal, and 
do not generalize to incoherent matrices.
Their schemes are single-round, essentially using the orthogonality to recover all $K$ support indices in parallel,
in contrast to our \DJ scheme which has $K$ iterations, and requires a careful analysis of error propagation.
As mentioned above, \cite{wimalajeewa2014omp} considered a compressed-sensing setting with incoherent random matrices 
whose entries are drawn i.i.d.\ from the same distribution, and with no noise ($\sigma=0$). 
In both of these papers, a key property that greatly simplifies the analysis is that 
at all machines the probability for selecting a non-support index is the same
for all $k \notin \S$. 
Our theorem shows that even without this symmetry between the non-support indices, distributed-\OMP algorithms can achieve exact support recovery.

\section{SIMULATION RESULTS}\label{sec:sim}

We compare experimentally the following algorithms, which have different
computation and communication costs (see Table \ref{tab:alg}):
(i) \DebL
where each machine computes a 
debiased-Lasso estimate of $\tv\in\R^d$ and sends it to the center. The center averages these $M$
vectors and returns its top $K$ indices
\citep{lee2017communication,battey2018distributed}; 
(ii) \DebLK, a variant of \citet{barghi2021distributed}, where each machine sends the top $K$ indices of its debiased-Lasso estimate;
(iii) \SISSCAD, a distributed variant \SIS, where each machine performs variable screening followed by \SCAD \citep{fan2008sure}. It sends its resulting support set to the center, which selects the top $K$ indices by majority voting; 
(iv) \SISOMP, another distributed variant of \SIS. Here, 
each machine estimates its support set using \OMP on the remaining features;
(v) \DS with $L=K$; 
(vi) \DS with
$L=2K$; and 
(vii) \DJ. 
To illustrate the ability of \DJ to recover the support when individual machines fail, for
reference we also ran \OMP on a single machine, ignoring the data in all other $M-1$ machines.
Note that while \OMP-based schemes are essentially parameter free (beyond the sparsity $K$), the
debiased-Lasso schemes required knowledge of the noise level $\sigma$ in all machines. 

\begin{table}
	\caption {
		Communication and Computation Costs 
}\label{tab:alg}
	\begin{tabularx}{\columnwidth}{lXX}
		\textbf{Algorithm}  & \textbf{Communication cost} 	&  \textbf{Computational cost, $K\ll d^{1/3}$}\\
		\hline \\
		Single \OMP 		&  $\tilde{O}\left(K\right)$ 		&  $O\left(ndK\right)$\\
		\addlinespace
		\DebL				&  $\tilde{O}\left(d\right)$\footnotemark 	&  \multirow{2}{=}{\begin{tabular}[t]{@{}X@{}}solving $d+1$ Lasso optimization problems\end{tabular}}\\
		\DebLK				&  $\tilde{O}\left(K\right)$	&  \\
		& & \\
		\SISSCAD & {SNR dependent} & \multirow{2}{=}{\begin{tabular}[t]{@{}X@{}}$O\left(nd\right)$ \end{tabular}}\\
		\SISOMP & $\tilde{O}\left(K\right)$ &   \\ 
		\addlinespace
		\DS, $L\!=\!K$		
		&  \multirow{2}{=}{$\tilde{O}\left(K\right)$}	&  \multirow{2}{=}{\begin{tabular}[t]{@{}X@{}}$O\left(ndK\right)$ \end{tabular}}\\
		\DJ	
	\end{tabularx}
\end{table}
\footnotetext[1]{For \DebL, each machine sends the vector $\hat\theta^{(m)}$ itself, so the $\tilde{O}(\cdot)$ notation hides the number of bits used for each quantized value. }

\begin{figure}[!t]
	\centering
	\begin{subfigure}[t]{\columnwidth}
		\centering\includegraphics[width=0.75\columnwidth]{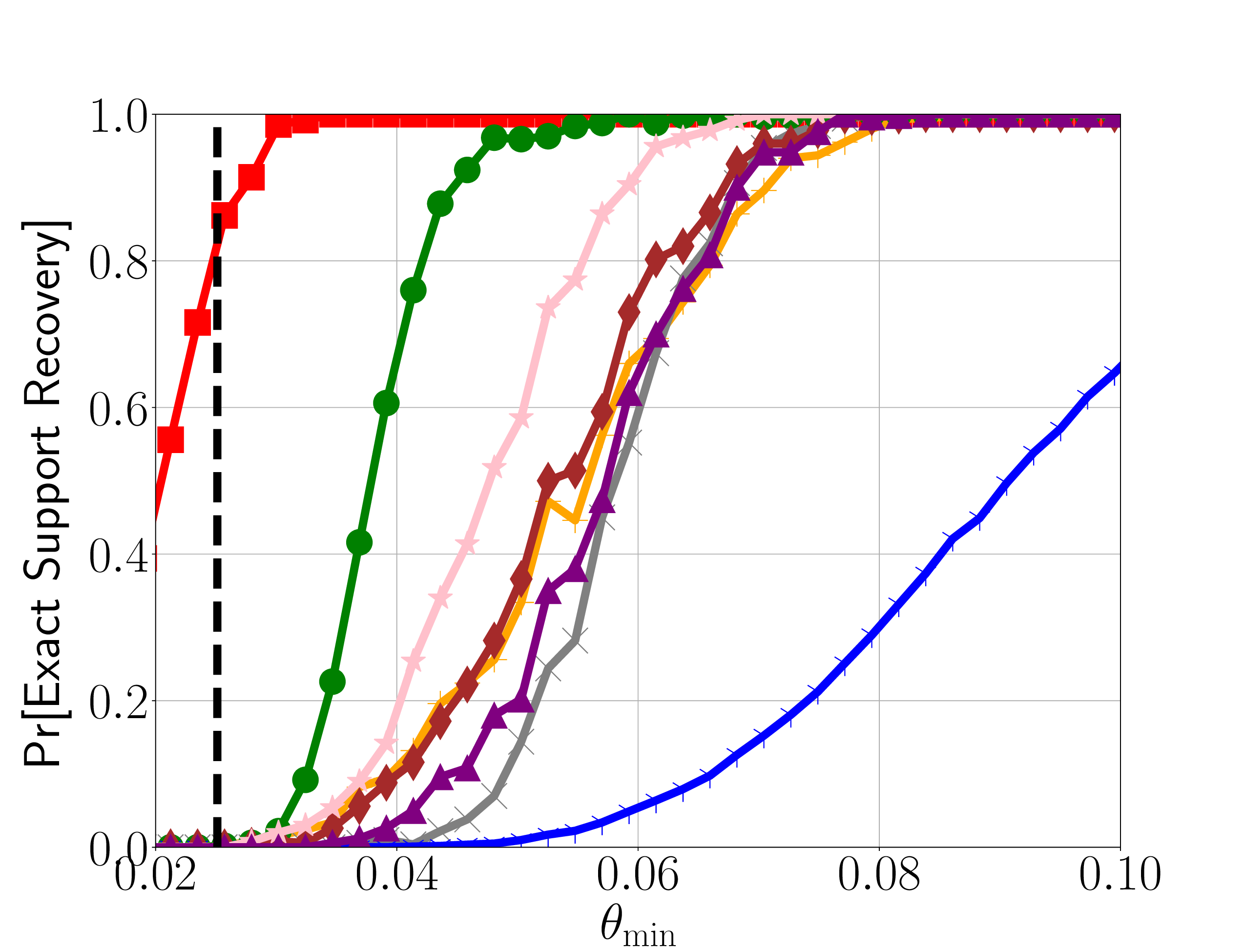}
		\caption{IID Design Matrices ($\alpha=0$)}
	\end{subfigure}
	\begin{subfigure}[t]{\columnwidth}
		\includegraphics[width=0.75\columnwidth]{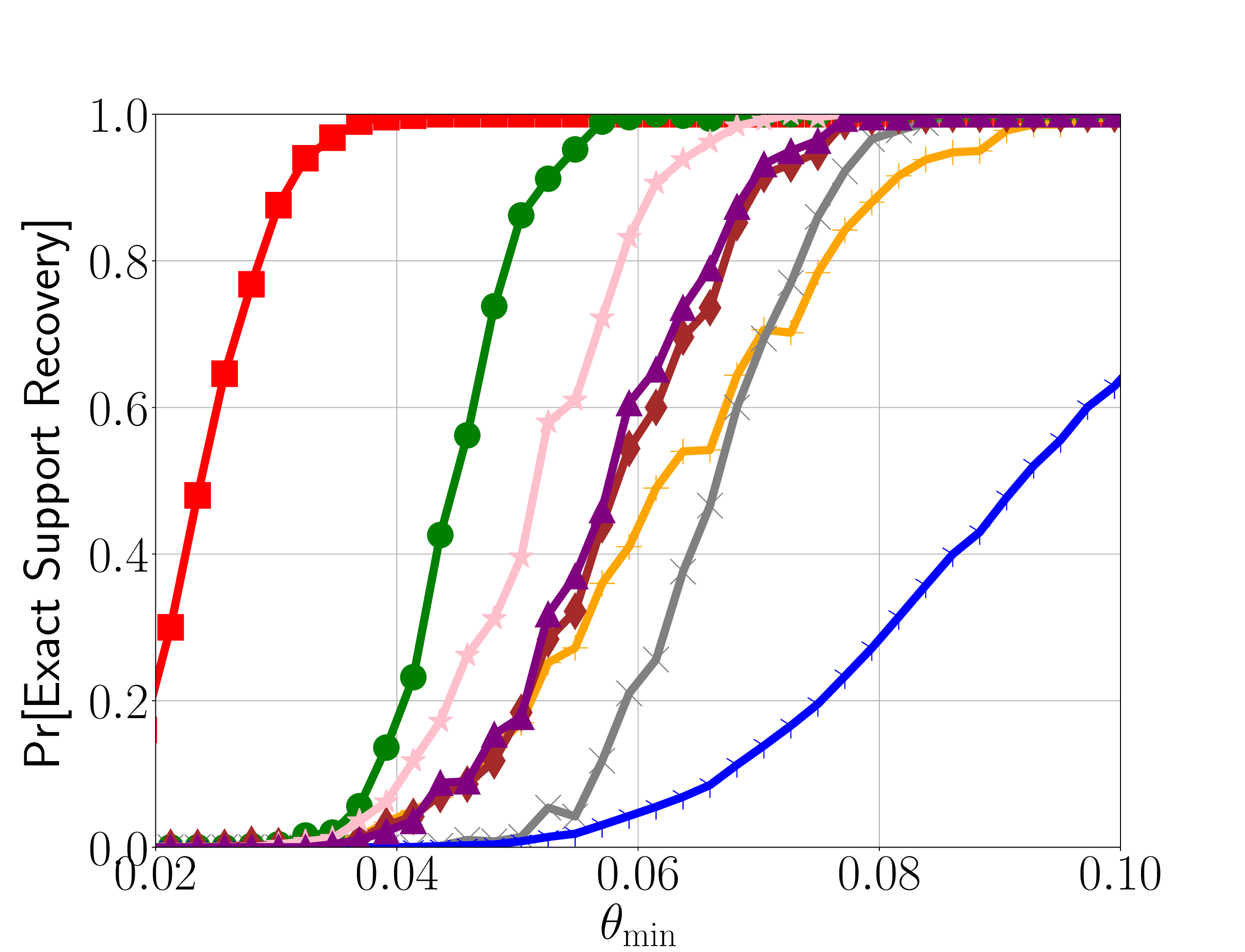}
		\centering\includegraphics[width=0.6\columnwidth]{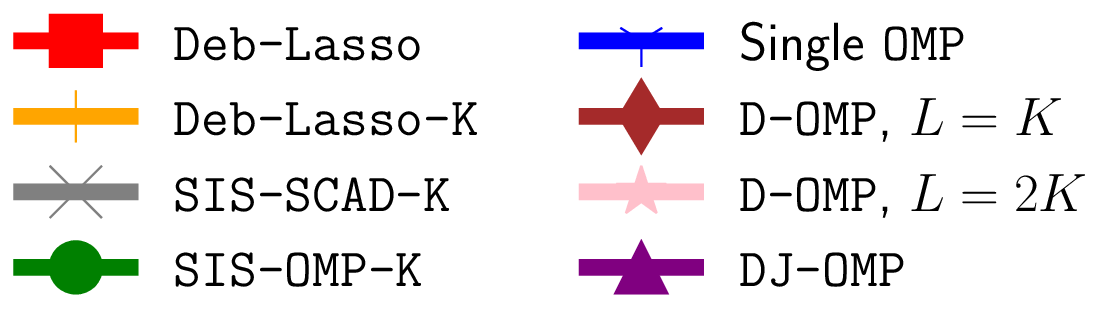}
		\caption{Correlated Design Matrices ($\alpha=0.1$)}
	\end{subfigure}
	\caption{Support Recovery as a Function of $\tmin$.
	}
	\label{fig:succ}
\end{figure}

\begin{figure}[!t]
	\centering
	\includegraphics[width=0.75\columnwidth]{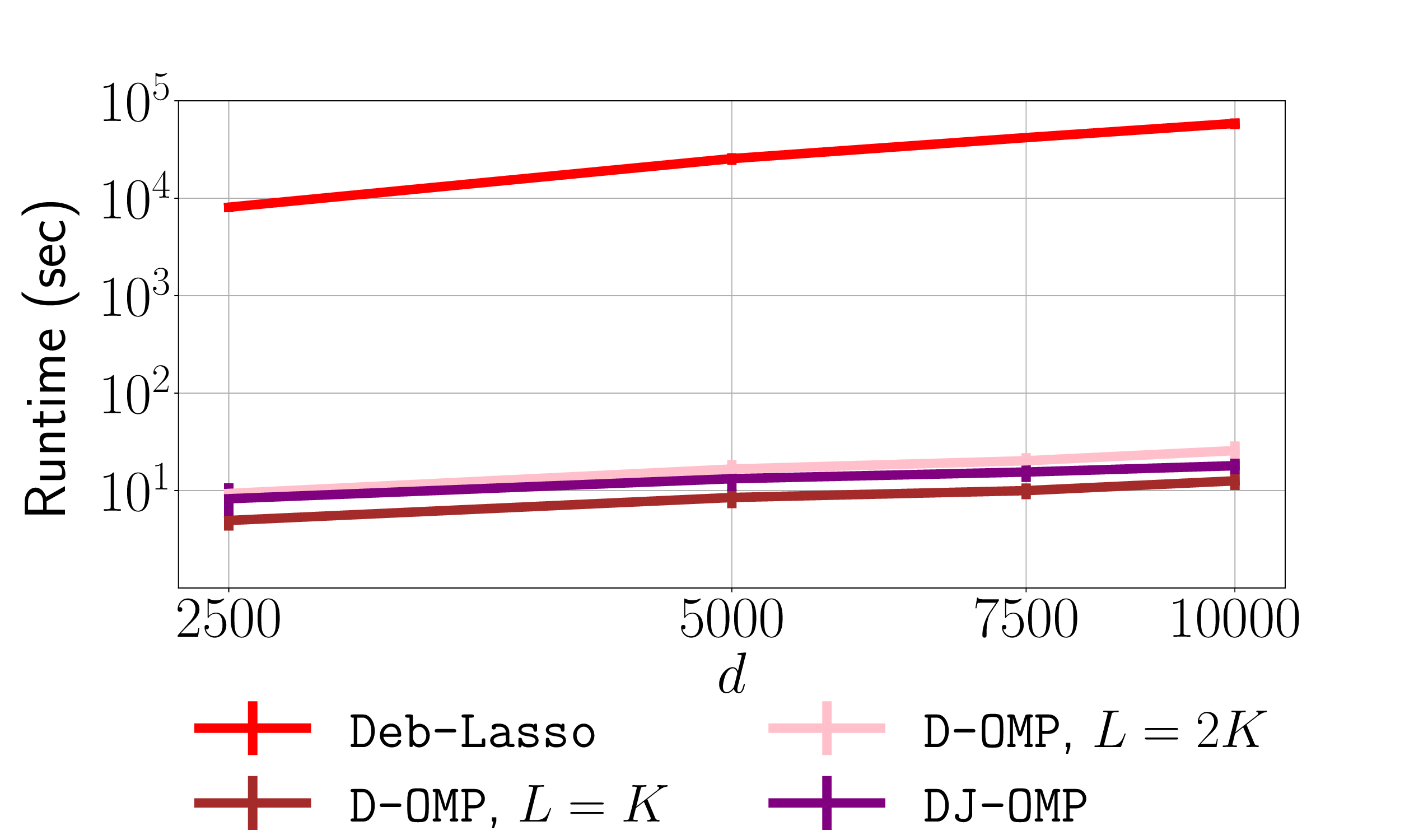}
	\caption{Runtime as a Function of $d$.}
	\label{fig:runtime}
\end{figure}

We now describe the experimental setup.
Each matrix $X^{(m)}$ is generated as follows. Each row is drawn
independently from $\Nor(\bm{0},\Sigma)$, where $\Sigma$ is a Toeplitz matrix
with $\Sigma_{ii}=1$ and $\Sigma_{ij}=\alpha^{\left| i-j\right| }$ for $i\neq j$ for some $\alpha\in [0,1)$.
In all settings, we generate $M=20$ such matrices, each containing $n=2000$ samples.
The noise level is $\sigma=1$, and the vector $\tv$ has a sparsity  $K=5$, with $\tv=\tmin \cdot
[1, -1.5, 2, -2.5, 3, 0, \dots, 0]^\top$.
The tuning parameter in the debiased-Lasso methods, which scales the $\ell_1$ term of each of the $d+1$ Lasso objectives, is set to $\lambda = 2\sigma \sqrt{\frac{\log d}{n}} $.
We consider two settings both of dimension $d=10000$. In Setting (a), $\alpha=0$, i.e., all matrix entries 
are i.i.d. $\mathcal N(0,1)$.
In Setting (b), $\alpha=0.1$, so the columns of $X^{(m)}$ are
weakly correlated.
Further implementation details appear in Appendix \ref{sec:implmnt}.

Figure \ref{fig:succ} illustrates the empirical success probability of the various algorithms as a function of $\tmin$ in the two settings outlined above.
Formally, for an algorithm $A$,
\[
p_{\text{ success }}^A(\tmin)=\frac{1}{J}\sum_{j=1}^{J} \mathbb{1}\left\lbrace \hatS_j^A(\tmin)=\S \right\rbrace, 
\] 
where $\hatS_j^A(\tmin)$ is the support set computed by algorithm $A$, for noise realization $j$ and lower bound $\tmin$ on the non-zero coefficients of $\tv$, and $J$ is the total number of noise realizations, set to $J=500$. 
The dashed vertical line in panel (a) is the lower bound $\theta_{\crit}( \mu(\X),d,K,\sigma)$ of Eq. \eqref{eq:theta_crit}, above which in a centralized setting, \OMP is guaranteed to recover the support. In panel (b), the MIP condition does not hold and the dashed line is not shown. Nonetheless, distributed schemes still succeed in this case.

Figure \ref{fig:succ} reveals several behaviors.
First, as anticipated, the distributed-\OMP algorithms perform inferior to \DebL, which incurs much higher computational and communication costs. 
Second, in accordance with Theorem \ref{thm:DJ-OMP}, the distributed-\OMP algorithms succeed at low SNR values, where \OMP on a single machine fails with high probability. 
Third, \DJ's performance is comparable to \DS with $L=K$. 
For scenarios requiring one-shot communication, \DS with more steps, $L=2K$ in this example, exceeds \DJ's performance, while incurring a slightly higher communication cost.
In Setting (a) where each entry of the matrices $\X^{(m)}$ is i.i.d.\ Gaussian, the performance of distributed-\OMP algorithms is on par with the computationally demanding \DebLK. Notably, in Setting (b) where the matrices $\X^{(m)}$ 
have correlated columns, distributed-\OMP methods surpass \DebLK.
In the context of variable screening methods, for a wide range of SNR values, a single machine often misses the full support set during the screening step. Yet, incorporating voting schemes enables distributed support recovery. 
Similar to \DebLK, \SISSCAD matches the performance of distributed-\OMP algorithms in Setting (a) but lags behind them in Setting (b). 
In all the studied settings, \SISOMP consistently outperforms both \SISSCAD and \DJ. 
A theoretical study of 
this behavior is an interesting topic for future research.

Figure \ref{fig:runtime} shows the runtime and error bars of several schemes, all implemented in Python, as a function of $d$ on a logarithmic scale. In this simulation, $\alpha=0$ and $\tmin=0.1$ and we averaged over $J=20$ realizations. 
The runtime of \DebLK is similar to that of \DebL, and thus not shown. 
As seen in the figure, distributed-\OMP methods are
more than three orders of magnitude faster than
\DebL. 
Not shown are the runtimes 
\SIS-based schemes, which are slower than \DJ, but are not directly comparable since for them we called from Python an existing R 
code. 

Finally, in Appendix \ref{app:add_exp} we show empirically 
that the number of machines to recover the support scales as $M\approx d^{\beta}$ for some $\beta < 1$,
in accordance with 
\eqref{eq:Mcy_Ot}.

\section{DISCUSSION}\label{sec:disc}
Distributed inference schemes can be assessed based on three criteria: (i) the SNR above which they succeed with high probability;  (ii) the communication per machine; and (iii) the computational requirements. 
Figure \ref{fig:Pareto} illustrates how distributed-\OMP schemes compare with other methods 
for sparse linear regression. 
Each scheme is represented by a circle, whose size encodes the computational cost per machine (on a logarithmic scale).
The $x$-axis is the SNR and the $y$-axis is the communication per machine. 

\begin{figure}[!t]
	\centering
\includegraphics[width=0.7\columnwidth]{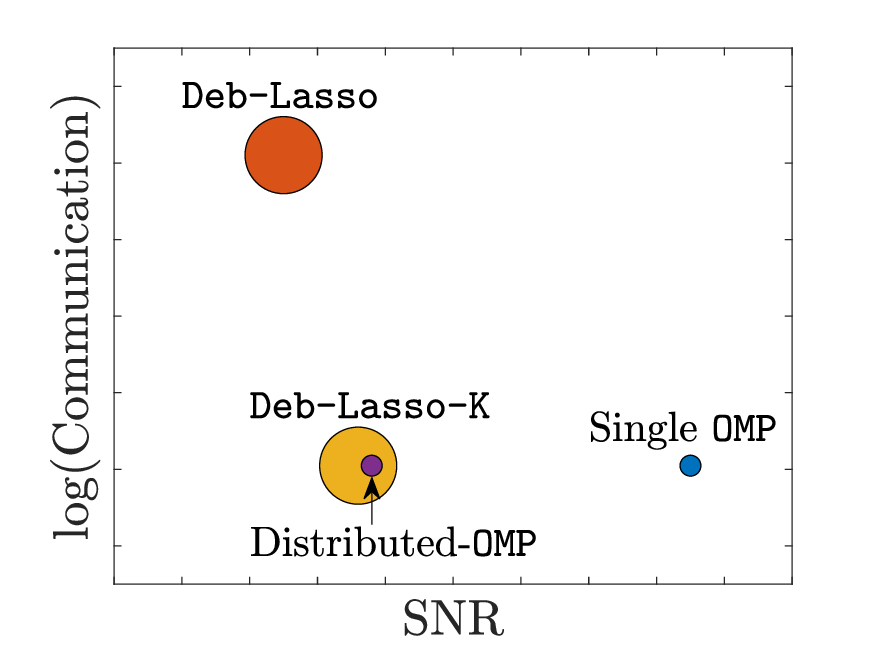}
	\caption{SNR, Communication and
		Computation Tradeoffs.
	}
	\label{fig:Pareto}
\end{figure}

As seen in the figure, known schemes exhibit a tradeoff between SNR and communication. 
When the SNR is sufficiently high for an individual machine to
recover the support of $\tv$, for example by \OMP, the fusion center may recover the support $\S$ by contacting
only one machine, incurring an incoming communication of only $O(K\log d)$ bits. 
Note that even in a noise-less setting, for the fusion center to recover the support, $K$ indices must be sent to the center, so $K\log d$ bits is a lower bound on the total required communication.
On the other hand, when the SNR is low, \DebL succeeds to recover the support of $\tv$ but
incurs a communication cost of $\tilde{O}\left( d\right) $ bits per machine, which might be prohibitive in  high-dimensional settings.

We conjecture that at low-SNR values, no distributed algorithm can achieve exact support recovery with 
communication per machine $O(K\log d)$ bits. 
We note that  
for closely related problems,
{achieving the centralized minimax $\ell_2$ risk or the
centralized prediction error is possible at low SNR
but requires communication cost of $\Omega(d)$ bits}
\citep{shamir2014fundamental,steinhardt2015minimax,acharya2019distributed}.
Our work shows that 
for a range of SNR values between these two extremes, 
distributed-\OMP algorithms do recover the support of $\tv$ with communication per machine $O(K\log d)$. 
An interesting open question is to determine the optimal rate at which the
required communication  decreases as a function of the SNR by any distributed algorithm that achieves exact support recovery.
Another interesting direction for future research is to 
characterize the tradeoff between communication costs and computational resources.

\bibliographystyle{plainnat}
\bibliography{sparse_bib}

\begin{thebibliography}{61}
\providecommand{\natexlab}[1]{#1}
\providecommand{\url}[1]{\texttt{#1}}
\expandafter\ifx\csname urlstyle\endcsname\relax
  \providecommand{\doi}[1]{doi: #1}\else
  \providecommand{\doi}{doi: \begingroup \urlstyle{rm}\Url}\fi

\bibitem[Acharya et~al.(2019)Acharya, De~Sa, Foster, and
  Sridharan]{acharya2019distributed}
Jayadev Acharya, Chris De~Sa, Dylan Foster, and Karthik Sridharan.
\newblock Distributed learning with sublinear communication.
\newblock In \emph{International Conference on Machine Learning}, pages 40--50,
  2019.

\bibitem[Amir et~al.(2021)Amir, Basri, and Nadler]{amir2021trimmed}
Tal Amir, Ronen Basri, and Boaz Nadler.
\newblock The trimmed lasso: Sparse recovery guarantees and practical
  optimization by the generalized soft-min penalty.
\newblock \emph{SIAM journal on mathematics of data science}, 3\penalty0
  (3):\penalty0 900--929, 2021.

\bibitem[Amiraz et~al.(2021)Amiraz, Krauthgamer, and Nadler]{amiraz2021tight}
Chen Amiraz, Robert Krauthgamer, and Boaz Nadler.
\newblock Tight recovery guarantees for orthogonal matching pursuit under
  gaussian noise.
\newblock \emph{Information and Inference: A Journal of the IMA}, 10\penalty0
  (2):\penalty0 573--595, 2021.

\bibitem[Amiraz et~al.(2022)Amiraz, Krauthgamer, and
  Nadler]{amiraz2022distributed}
Chen Amiraz, Robert Krauthgamer, and Boaz Nadler.
\newblock Distributed sparse normal means estimation with sublinear
  communication.
\newblock \emph{Information and Inference: A Journal of the IMA}, 11\penalty0
  (3):\penalty0 1109--1142, 2022.

\bibitem[Barghi et~al.(2021)Barghi, Najafi, and
  Motahari]{barghi2021distributed}
Hanie Barghi, Amir Najafi, and Seyed~Abolfazl Motahari.
\newblock Distributed sparse feature selection in communication-restricted
  networks.
\newblock \emph{arXiv preprint arXiv:2111.02802}, 2021.

\bibitem[Battey et~al.(2018)Battey, Fan, Liu, Lu, and
  Zhu]{battey2018distributed}
Heather Battey, Jianqing Fan, Han Liu, Junwei Lu, and Ziwei Zhu.
\newblock Distributed testing and estimation under sparse high dimensional
  models.
\newblock \emph{Annals of statistics}, 46\penalty0 (3):\penalty0 1352, 2018.

\bibitem[Ben-Haim et~al.(2010)Ben-Haim, Eldar, and Elad]{ben2010coherence}
Zvika Ben-Haim, Yonina~C Eldar, and Michael Elad.
\newblock Coherence-based performance guarantees for estimating a sparse vector
  under random noise.
\newblock \emph{IEEE Transactions on Signal Processing}, 58\penalty0
  (10):\penalty0 5030--5043, 2010.

\bibitem[Bertsimas et~al.(2016)Bertsimas, King, and
  Mazumder]{bertsimas2016best}
Dimitris Bertsimas, Angela King, and Rahul Mazumder.
\newblock Best subset selection via a modern optimization lens.
\newblock \emph{The annals of statistics}, 44\penalty0 (2):\penalty0 813--852,
  2016.

\bibitem[Birnbaum(1942)]{birnbaum1942inequality}
Zygmunt~Wilhelm Birnbaum.
\newblock An inequality for {M}ill's ratio.
\newblock \emph{The Annals of Mathematical Statistics}, 13\penalty0
  (2):\penalty0 245--246, 1942.

\bibitem[Blumensath and Davies(2008)]{blumensath2008iterative}
Thomas Blumensath and Mike~E Davies.
\newblock Iterative thresholding for sparse approximations.
\newblock \emph{Journal of Fourier analysis and Applications}, 14\penalty0
  (5):\penalty0 629--654, 2008.

\bibitem[Candes and Tao(2005)]{candes2005decoding}
Emmanuel~J Candes and Terence Tao.
\newblock Decoding by linear programming.
\newblock \emph{IEEE Transactions on Information Theory}, 51\penalty0
  (12):\penalty0 4203--4215, 2005.

\bibitem[Chen et~al.(2001)Chen, Donoho, and Saunders]{chen2001atomic}
Scott~Shaobing Chen, David~L Donoho, and Michael~A Saunders.
\newblock Atomic decomposition by basis pursuit.
\newblock \emph{SIAM review}, 43\penalty0 (1):\penalty0 129--159, 2001.

\bibitem[Chen et~al.(1989)Chen, Billings, and Luo]{chen1989orthogonal}
Sheng Chen, Stephen~A Billings, and Wan Luo.
\newblock Orthogonal least squares methods and their application to non-linear
  system identification.
\newblock \emph{International Journal of Control}, 50\penalty0 (5):\penalty0
  1873--1896, 1989.

\bibitem[Chernoff(1952)]{chernoff1952measure}
Herman Chernoff.
\newblock A measure of asymptotic efficiency for tests of a hypothesis based on
  the sum of observations.
\newblock \emph{The Annals of Mathematical Statistics}, 23\penalty0
  (4):\penalty0 493--507, 1952.

\bibitem[Chouvardas et~al.(2015)Chouvardas, Mileounis, Kalouptsidis, and
  Theodoridis]{chouvardas2015greedy}
Symeon Chouvardas, Gerasimos Mileounis, Nicholas Kalouptsidis, and Sergios
  Theodoridis.
\newblock Greedy sparsity-promoting algorithms for distributed learning.
\newblock \emph{IEEE Transactions on Signal Processing}, 63\penalty0
  (6):\penalty0 1419--1432, 2015.

\bibitem[Dai and Milenkovic(2009)]{dai2009subspace}
Wei Dai and Olgica Milenkovic.
\newblock Subspace pursuit for compressive sensing signal reconstruction.
\newblock \emph{IEEE Transactions on Information Theory}, 55\penalty0
  (5):\penalty0 2230--2249, 2009.

\bibitem[Duarte et~al.(2005)Duarte, Sarvotham, Baron, Wakin, and
  Baraniuk]{duarte2005distributed}
Marco~F Duarte, Shriram Sarvotham, Dror Baron, Michael~B Wakin, and Richard~G
  Baraniuk.
\newblock Distributed compressed sensing of jointly sparse signals.
\newblock In \emph{Asilomar Conference on Signals, Systems and Computers,
  2005.}, pages 1537--1541. IEEE, 2005.

\bibitem[Fan and Li(2001)]{fan2001variable}
Jianqing Fan and Runze Li.
\newblock Variable selection via nonconcave penalized likelihood and its oracle
  properties.
\newblock \emph{Journal of the American statistical Association}, 96\penalty0
  (456):\penalty0 1348--1360, 2001.

\bibitem[Fan and Lv(2008)]{fan2008sure}
Jianqing Fan and Jinchi Lv.
\newblock Sure independence screening for ultrahigh dimensional feature space.
\newblock \emph{Journal of the Royal Statistical Society Series B: Statistical
  Methodology}, 70\penalty0 (5):\penalty0 849--911, 2008.

\bibitem[Fonseca and Nadler(2023)]{fonseca2023distributed}
Rodney Fonseca and Boaz Nadler.
\newblock Distributed sparse linear regression under communication constraints.
\newblock \emph{arXiv preprint arXiv:2301.04022}, 2023.

\bibitem[Fosson et~al.(2016)Fosson, Matamoros, Ant{\'o}n-Haro, and
  Magli]{fosson2016distributed}
Sophie~M Fosson, Javier Matamoros, Carles Ant{\'o}n-Haro, and Enrico Magli.
\newblock Distributed recovery of jointly sparse signals under communication
  constraints.
\newblock \emph{IEEE Transactions on Signal Processing}, 64\penalty0
  (13):\penalty0 3470--3482, 2016.

\bibitem[Gordon(1941)]{gordon1941values}
Robert~D Gordon.
\newblock Values of {M}ills' ratio of area to bounding ordinate and of the
  normal probability integral for large values of the argument.
\newblock \emph{The Annals of Mathematical Statistics}, 12\penalty0
  (3):\penalty0 364--366, 1941.

\bibitem[Han et~al.(2015)Han, Niu, and Eldar]{han2015modified}
Puxiao Han, Ruixin Niu, and Yonina~C Eldar.
\newblock Modified distributed iterative hard thresholding.
\newblock In \emph{2015 IEEE International Conference on Acoustics, Speech and
  Signal Processing (ICASSP)}, pages 3766--3770. IEEE, 2015.

\bibitem[Harris et~al.(2020)Harris, Millman, van~der Walt, Gommers, Virtanen,
  Cournapeau, Wieser, Taylor, Berg, Smith, Kern, Picus, Hoyer, van Kerkwijk,
  Brett, Haldane, del R{\'{i}}o, Wiebe, Peterson, G{\'{e}}rard-Marchant,
  Sheppard, Reddy, Weckesser, Abbasi, Gohlke, and Oliphant]{harris2020array}
Charles~R. Harris, K.~Jarrod Millman, St{\'{e}}fan~J. van~der Walt, Ralf
  Gommers, Pauli Virtanen, David Cournapeau, Eric Wieser, Julian Taylor,
  Sebastian Berg, Nathaniel~J. Smith, Robert Kern, Matti Picus, Stephan Hoyer,
  Marten~H. van Kerkwijk, Matthew Brett, Allan Haldane, Jaime~Fern{\'{a}}ndez
  del R{\'{i}}o, Mark Wiebe, Pearu Peterson, Pierre G{\'{e}}rard-Marchant,
  Kevin Sheppard, Tyler Reddy, Warren Weckesser, Hameer Abbasi, Christoph
  Gohlke, and Travis~E. Oliphant.
\newblock Array programming with {NumPy}.
\newblock \emph{Nature}, 585\penalty0 (7825):\penalty0 357--362, September
  2020.
\newblock \doi{10.1038/s41586-020-2649-2}.
\newblock URL \url{https://doi.org/10.1038/s41586-020-2649-2}.

\bibitem[Hastie et~al.(2020)Hastie, Tibshirani, and Tibshirani]{hastie2020best}
Trevor Hastie, Robert Tibshirani, and Ryan Tibshirani.
\newblock Best subset, forward stepwise or lasso? analysis and recommendations
  based on extensive comparisons.
\newblock \emph{Statistical Science}, 35\penalty0 (4):\penalty0 579--592, 2020.

\bibitem[Hunter(2007)]{hunter2007matplotlib}
J.~D. Hunter.
\newblock Matplotlib: A 2d graphics environment.
\newblock \emph{Computing in Science \& Engineering}, 9\penalty0 (3):\penalty0
  90--95, 2007.
\newblock \doi{10.1109/MCSE.2007.55}.

\bibitem[Jordan et~al.(2019)Jordan, Lee, and Yang]{jordan2019communication}
Michael~I Jordan, Jason~D Lee, and Yun Yang.
\newblock Communication-efficient distributed statistical inference.
\newblock \emph{Journal of the American Statistical Association}, 114\penalty0
  (526):\penalty0 668--681, 2019.

\bibitem[Khanna and Murthy(2016)]{khanna2016decentralized}
Saurabh Khanna and Chandra~R Murthy.
\newblock Decentralized joint-sparse signal recovery: A sparse bayesian
  learning approach.
\newblock \emph{IEEE Transactions on Signal and Information Processing over
  Networks}, 3\penalty0 (1):\penalty0 29--45, 2016.

\bibitem[Komatu(1955)]{komatu1955elementary}
Y{\^u}saku Komatu.
\newblock Elementary inequalities for {M}ills’ ratio.
\newblock \emph{Rep. Statist. Appl. Res. Un. Jap. Sci. Engrs}, 4:\penalty0
  69--70, 1955.

\bibitem[Lee et~al.(2017)Lee, Liu, Sun, and Taylor]{lee2017communication}
Jason~D Lee, Qiang Liu, Yuekai Sun, and Jonathan~E Taylor.
\newblock Communication-efficient sparse regression.
\newblock \emph{The Journal of Machine Learning Research}, 18\penalty0
  (1):\penalty0 115--144, 2017.

\bibitem[Li et~al.(2015)Li, Wimalajeewa, and Varshney]{li2015decentralized}
Gang Li, Thakshila Wimalajeewa, and Pramod~K Varshney.
\newblock Decentralized and collaborative subspace pursuit: A
  communication-efficient algorithm for joint sparsity pattern recovery with
  sensor networks.
\newblock \emph{IEEE Transactions on Signal Processing}, 64\penalty0
  (3):\penalty0 556--566, 2015.

\bibitem[Ling and Tian(2011)]{ling2011decentralized}
Qing Ling and Zhi Tian.
\newblock Decentralized support detection of multiple measurement vectors with
  joint sparsity.
\newblock In \emph{2011 IEEE International Conference on Acoustics, Speech and
  Signal Processing (ICASSP)}, pages 2996--2999. IEEE, 2011.

\bibitem[Ling et~al.(2012)Ling, Wen, and Yin]{ling2012decentralized}
Qing Ling, Zaiwen Wen, and Wotao Yin.
\newblock Decentralized jointly sparse optimization by reweighted $\ell_q$
  minimization.
\newblock \emph{IEEE Transactions on Signal Processing}, 61\penalty0
  (5):\penalty0 1165--1170, 2012.

\bibitem[Liu et~al.(2023)Liu, Zhao, and Pan]{liu2023communication}
Zhan Liu, Xiaoluo Zhao, and Yingli Pan.
\newblock Communication-efficient distributed estimation for high-dimensional
  large-scale linear regression.
\newblock \emph{Metrika}, 86\penalty0 (4):\penalty0 455--485, 2023.

\bibitem[Lopes and Yao(2022)]{lopes2022sharp}
Miles~E Lopes and Junwen Yao.
\newblock A sharp lower-tail bound for gaussian maxima with application to
  bootstrap methods in high dimensions.
\newblock \emph{Electronic Journal of Statistics}, 16\penalty0 (1):\penalty0
  58--83, 2022.

\bibitem[Makhzani and Valaee(2013)]{makhzani2013distributed}
Alireza Makhzani and Shahrokh Valaee.
\newblock Distributed spectrum sensing in cognitive radios via graphical
  models.
\newblock In \emph{2013 5th IEEE International Workshop on Computational
  Advances in Multi-Sensor Adaptive Processing (CAMSAP)}, pages 376--379. IEEE,
  2013.

\bibitem[Mallat and Zhang(1993)]{mallat1993matching}
St{\'e}phane~G Mallat and Zhifeng Zhang.
\newblock Matching pursuits with time-frequency dictionaries.
\newblock \emph{IEEE Transactions on signal processing}, 41\penalty0
  (12):\penalty0 3397--3415, 1993.

\bibitem[Mateos et~al.(2010)Mateos, Bazerque, and
  Giannakis]{mateos2010distributed}
Gonzalo Mateos, Juan~Andr{\'e}s Bazerque, and Georgios~B Giannakis.
\newblock Distributed sparse linear regression.
\newblock \emph{IEEE Transactions on Signal Processing}, 58\penalty0
  (10):\penalty0 5262--5276, 2010.

\bibitem[Miller(2002)]{miller2002subset}
Alan Miller.
\newblock \emph{Subset selection in regression}.
\newblock CRC Press, 2002.

\bibitem[Mota et~al.(2011)Mota, Xavier, Aguiar, and
  Puschel]{mota2011distributed}
Jo{\~a}o~FC Mota, Jo{\~a}o~MF Xavier, Pedro~MQ Aguiar, and Markus Puschel.
\newblock Distributed basis pursuit.
\newblock \emph{IEEE Transactions on Signal Processing}, 60\penalty0
  (4):\penalty0 1942--1956, 2011.

\bibitem[Needell and Tropp(2009)]{needell2009cosamp}
Deanna Needell and Joel~A Tropp.
\newblock {CoSaMP}: Iterative signal recovery from incomplete and inaccurate
  samples.
\newblock \emph{Applied and Computational Harmonic Analysis}, 26\penalty0
  (3):\penalty0 301--321, 2009.

\bibitem[Pati et~al.(1993)Pati, Rezaiifar, and
  Krishnaprasad]{pati1993orthogonal}
Yagyensh~Chandra Pati, Ramin Rezaiifar, and PS~Krishnaprasad.
\newblock Orthogonal matching pursuit: Recursive function approximation with
  applications to wavelet decomposition.
\newblock In \emph{Conference Record of The Twenty-Seventh Asilomar Conference
  on Signals, Systems and Computers}, pages 40--44. IEEE, 1993.

\bibitem[Patterson et~al.(2014)Patterson, Eldar, and
  Keidar]{patterson2014distributed}
Stacy Patterson, Yonina~C Eldar, and Idit Keidar.
\newblock Distributed compressed sensing for static and time-varying networks.
\newblock \emph{IEEE Transactions on Signal Processing}, 62\penalty0
  (19):\penalty0 4931--4946, 2014.

\bibitem[Pedregosa et~al.(2011)Pedregosa, Varoquaux, Gramfort, Michel, Thirion,
  Grisel, Blondel, Prettenhofer, Weiss, Dubourg, Vanderplas, Passos,
  Cournapeau, Brucher, Perrot, and Duchesnay]{scikit}
F.~Pedregosa, G.~Varoquaux, A.~Gramfort, V.~Michel, B.~Thirion, O.~Grisel,
  M.~Blondel, P.~Prettenhofer, R.~Weiss, V.~Dubourg, J.~Vanderplas, A.~Passos,
  D.~Cournapeau, M.~Brucher, M.~Perrot, and E.~Duchesnay.
\newblock Scikit-learn: Machine learning in {P}ython.
\newblock \emph{Journal of Machine Learning Research}, 12:\penalty0 2825--2830,
  2011.

\bibitem[{Python Core Team}(2019)]{python38}
{Python Core Team}.
\newblock \emph{{Python: A dynamic, open source programming language}}.
\newblock {Python Software Foundation}, 2019.
\newblock URL \url{https://www.python.org/}.
\newblock Python version 3.8.

\bibitem[{R Core Team}(2023)]{R2023}
{R Core Team}.
\newblock \emph{R: A Language and Environment for Statistical Computing}.
\newblock R Foundation for Statistical Computing, Vienna, Austria, 2023.
\newblock URL \url{https://www.R-project.org/}.

\bibitem[Rosenblatt and Nadler(2016)]{rosenblatt2016optimality}
Jonathan~D Rosenblatt and Boaz Nadler.
\newblock On the optimality of averaging in distributed statistical learning.
\newblock \emph{Information and Inference: A Journal of the IMA}, 5\penalty0
  (4):\penalty0 379--404, 2016.

\bibitem[Saldana and Feng(2018)]{saldana2018sis}
Diego~F Saldana and Yang Feng.
\newblock {SIS}: An {R} package for sure independence screening in
  ultrahigh-dimensional statistical models.
\newblock \emph{Journal of Statistical Software}, 83\penalty0 (2):\penalty0
  1--25, 2018.
\newblock \doi{10.18637/jss.v083.i02}.

\bibitem[Shamir(2014)]{shamir2014fundamental}
Ohad Shamir.
\newblock Fundamental limits of online and distributed algorithms for
  statistical learning and estimation.
\newblock In \emph{Advances in Neural Information Processing Systems}, pages
  163--171, 2014.

\bibitem[{\v{S}}id{\'a}k(1967)]{vsidak1967rectangular}
Zbyn{\v{e}}k {\v{S}}id{\'a}k.
\newblock Rectangular confidence regions for the means of multivariate normal
  distributions.
\newblock \emph{Journal of the American Statistical Association}, 62\penalty0
  (318):\penalty0 626--633, 1967.

\bibitem[Steinhardt and Duchi(2015)]{steinhardt2015minimax}
Jacob Steinhardt and John Duchi.
\newblock Minimax rates for memory-bounded sparse linear regression.
\newblock In \emph{Conference on Learning Theory}, pages 1564--1587, 2015.

\bibitem[Sundman et~al.(2012)Sundman, Chatterjee, and
  Skoglund]{sundman2012greedy}
Dennis Sundman, Saikat Chatterjee, and Mikael Skoglund.
\newblock A greedy pursuit algorithm for distributed compressed sensing.
\newblock In \emph{2012 IEEE International Conference on Acoustics, Speech and
  Signal Processing (ICASSP)}, pages 2729--2732. IEEE, 2012.

\bibitem[Sundman et~al.(2014)Sundman, Chatterjee, and
  Skoglund]{sundman2014distributed}
Dennis Sundman, Saikat Chatterjee, and Mikael Skoglund.
\newblock Distributed greedy pursuit algorithms.
\newblock \emph{Signal Processing}, 105:\penalty0 298--315, 2014.

\bibitem[Tibshirani(1996)]{tibshirani1996regression}
Robert Tibshirani.
\newblock Regression shrinkage and selection via the lasso.
\newblock \emph{Journal of the Royal Statistical Society. Series B
  (Methodological)}, 58:\penalty0 267--288, 1996.

\bibitem[Tropp(2004)]{tropp2004greed}
Joel~A Tropp.
\newblock Greed is good: {A}lgorithmic results for sparse approximation.
\newblock \emph{IEEE Transactions on Information Theory}, 50\penalty0
  (10):\penalty0 2231--2242, 2004.

\bibitem[Tropp and Gilbert(2007)]{tropp2007signal}
Joel~A Tropp and Anna~C Gilbert.
\newblock Signal recovery from random measurements via orthogonal matching
  pursuit.
\newblock \emph{IEEE Transactions on Information Theory}, 53\penalty0
  (12):\penalty0 4655--4666, 2007.

\bibitem[Virtanen et~al.(2020)Virtanen, Gommers, Oliphant, Haberland, Reddy,
  Cournapeau, Burovski, Peterson, Weckesser, Bright, {van der Walt}, Brett,
  Wilson, Millman, Mayorov, Nelson, Jones, Kern, Larson, Carey, Polat, Feng,
  Moore, {VanderPlas}, Laxalde, Perktold, Cimrman, Henriksen, Quintero, Harris,
  Archibald, Ribeiro, Pedregosa, {van Mulbregt}, and {SciPy 1.0
  Contributors}]{2020SciPy}
Pauli Virtanen, Ralf Gommers, Travis~E. Oliphant, Matt Haberland, Tyler Reddy,
  David Cournapeau, Evgeni Burovski, Pearu Peterson, Warren Weckesser, Jonathan
  Bright, St{\'e}fan~J. {van der Walt}, Matthew Brett, Joshua Wilson, K.~Jarrod
  Millman, Nikolay Mayorov, Andrew R.~J. Nelson, Eric Jones, Robert Kern, Eric
  Larson, C~J Carey, {\.I}lhan Polat, Yu~Feng, Eric~W. Moore, Jake
  {VanderPlas}, Denis Laxalde, Josef Perktold, Robert Cimrman, Ian Henriksen,
  E.~A. Quintero, Charles~R. Harris, Anne~M. Archibald, Ant{\^o}nio~H. Ribeiro,
  Fabian Pedregosa, Paul {van Mulbregt}, and {SciPy 1.0 Contributors}.
\newblock {{SciPy} 1.0: Fundamental Algorithms for Scientific Computing in
  Python}.
\newblock \emph{Nature Methods}, 17:\penalty0 261--272, 2020.
\newblock \doi{10.1038/s41592-019-0686-2}.

\bibitem[Wang et~al.(2017)Wang, Kolar, Srebro, and Zhang]{wang2017efficient}
Jialei Wang, Mladen Kolar, Nathan Srebro, and Tong Zhang.
\newblock Efficient distributed learning with sparsity.
\newblock In \emph{International Conference on Machine Learning}, pages
  3636--3645, 2017.

\bibitem[Wimalajeewa and Varshney(2013)]{wimalajeewa2013cooperative}
Thakshila Wimalajeewa and Pramod~K Varshney.
\newblock Cooperative sparsity pattern recovery in distributed networks via
  distributed-{OMP}.
\newblock In \emph{2013 IEEE International Conference on Acoustics, Speech and
  Signal Processing}, pages 5288--5292. IEEE, 2013.

\bibitem[Wimalajeewa and Varshney(2014)]{wimalajeewa2014omp}
Thakshila Wimalajeewa and Pramod~K Varshney.
\newblock {OMP} based joint sparsity pattern recovery under communication
  constraints.
\newblock \emph{IEEE Transactions on Signal Processing}, 62\penalty0
  (19):\penalty0 5059--5072, 2014.

\bibitem[Wimalajeewa and Varshney(2017)]{wimalajeewa2017application}
Thakshila Wimalajeewa and Pramod~K Varshney.
\newblock Application of compressive sensing techniques in distributed sensor
  networks: A survey.
\newblock \emph{arXiv preprint arXiv:1709.10401}, 2017.

\end{thebibliography}

\appendix
\section{PROOFS}\label{sec:proofs}
In this section we prove Theorem \ref{thm:DJ-OMP}.
For ease of presentation, in Section \ref{sec:K1} we state and prove Theorem \ref{thm:DJ-OMP1} which
addresses the simpler case $K=1$.
The proof of Theorem
\ref{thm:DJ-OMP} for the general case $K\geq 1$ appears in Section \ref{sec:proofK}.
The proofs of various auxiliary lemmas appear in Sections \ref{app:cher}-\ref{app:tech}.

Towards proving both theorems, we first present a few preliminaries, state useful lemmas and outline the proof.

\paragraph{Preliminaries.}
Recall that \DJ is an iterative algorithm, whereby at each round $t$, all $M$ machines call the subroutine
\OMPStep with the same input set $S_{t-1}$. In principle, except at the first round where $S_0=\emptyset$, this input
set depends on all the data in all $M$ machines. This statistical dependency significantly complicates the analysis. Instead, as discussed below, in our proof we will analyze a single round
of \DJ, assuming all machines are provided with a {\em fixed} input set $\hats$. 

Given an input set $s$ to the subroutine \OMPStep, each machine $m$ computes a sparse vector supported on $\hats$, i.e.,
\begin{equation}\label{eq:hat_tv}
\hat{\tv}^{(m)}= \arg\min_{\z\in\mathbb{R}^{d}}  \left\Vert
\y^{(m)}-\X^{(m)}\z\right\Vert_2 \text{ s.t. } \supp(\z)=\hats . 
\end{equation}
Then, it calculates the corresponding residual vector 
\begin{equation}\label{eq:res_m}
\res^{(m)}=\y^{(m)}-\X^{(m)} \hat{\tv}^{(m)}.
\end{equation}
Finally, each machine $m$ sends to the
fusion center the index
\begin{equation}\label{eq:m_picks_j}
j^{(m)}=\arg\max_{i\in [d]} \lvert \langle
\xn^{(m)}_i,\res^{(m)}\rangle \rvert ,
\end{equation}
where $\xn^{(m)}_i=\frac{\x^{(m)}_i}{\norm{\x^{(m)}_i}}$ is the $i$-th column of $\X^{(m)}$ divided by its norm.

As also described in  
Algorithm \ref{alg:DJ-OMP}, given the messages
sent by all 
$M$ machines, the fusion center 
computes a vector $\vv\in\mathbb{R}^d$, where $\vv_j$ counts the number of votes
received by index $j$ for all $j\in[d]$. As discussed in the main text, indices in $s$ receive no votes and at
each round a new index $\jc$ is chosen by the center, 
\[
\jc=\jc\left( s\right) =\arg\max_{j\in [d]\setminus \hats}  \vv_j.
\]
Towards proving that with high probability $\jc\in \S\setminus \hats$, 
we define an additional quantity
$\rho^{(m)} = \rho^{(m)}(s)$ that corresponds to the local SNR at machine $m$ given an input set $s$.
Denote 
\begin{equation}\label{eq:tmaxn}
\tmaxn^{(m)}= \tmaxn^{(m)}(s) =  \max_{k\in \S\setminus \hats} \left\lbrace  \norm{\x_{k}^{(m)}}\left| \theta_k\right| \right\rbrace.
\end{equation}
Similar to the definition of $r$ in Eq. \eqref{eq:r_def}, we define 
\begin{equation}\label{eq:rho}
\rho^{(m)} = \rho^{(m)}(s)  
= \left(\frac{\tmaxn^{(m)}}{\theta_{\crit}\left(\mumax,d,K,\sigma \right) } \right) ^2,
\end{equation}
where $\theta_{\crit}$ is defined in Eq. \eqref{eq:theta_crit}. 
Where clear from the context and to simplify notation we will not write the dependence on the input set $s$ explicitly. 
Note that by its definition, for any input set $s$ that is strictly contained in $\S$, it follows that $\rho^{(m)} \geq r$. 
As discussed in Section \ref{sec:theory}, if $\rho^{(m)}> 1 $, then 
with high probability 
machine $m$ would recover a support index, namely $j^{(m)} \in \S\setminus s$
\citep{amiraz2021tight}. 
Therefore, in what follows, we consider a worst case scenario whereby $\rho^{(m)}\leq 1$ in all machines $m\in[M]$.

\paragraph{Proof outline and lemmas.}
For simplicity we prove the theorem assuming the number of machines is the smallest that still satisfies
Assumption \ref{assum:M}, namely $M=\Mc\left(d,K,\mumax,r\right)$, with $M_c$ defined in Eq. \eqref{eq:Mc_K-maintext}. A larger number of machines would only increase the probability of exact support recovery.
The main idea of the proof is to show that at each of the $K$ rounds, with high probability the center
indeed chooses a support index. Specifically, consider 
a single round of \DJ with a fixed input set $\hats \subset \S$.
Then, for the center to choose an index $\jc\in \S\setminus\hats$, it
suffices that there exists some support index $k\in \S\setminus \hats$ that
received more votes than any non-support index, namely,
\begin{equation}\label{eq:supp_wins}
\vv_{k}>\max_{j\notin \S}\vv_{j}.
\end{equation}
A sufficient condition for \eqref{eq:supp_wins} to occur is that for some
suitable threshold $\tc=\tc(s)>0$, 
both
\begin{equation}\label{eq:sig_above}
\vv_k > \tc,
\end{equation}
and
\begin{equation}\label{eq:noise_below}
\max_{j\notin \S}\vv_{j}<\tc.
\end{equation}
As described below, our chosen threshold $\tc$ depends on the following quantity $F$, 
which provides a lower bound for the probability that a support index is sent to the center by one of the machines, 
\begin{equation}\label{eq:F}
\F\left(d,K,\mumax,r\right)=\frac{1}{2}\Phi^{c}\left(\tfrac{(1-\sqrt{r})\sqrt{2\log d}}{ \sqrt{1-\delta}(1-\mumax)
}\right).
\end{equation}
Note that by this definition, Eq. \eqref{eq:Mc_K-maintext} can be rewritten as
\begin{equation}\label{eq:Mc_K}
\Mc\left(d,K,\mumax,r\right)= K\left\lceil  \frac{8\log
	d}{\F\left(d,K,\mumax,r\right)}\right\rceil.
\end{equation}
We will show that Eqs. \eqref{eq:sig_above} and 
\eqref{eq:noise_below} indeed hold with high probability
with the following threshold
\begin{equation}\label{eq:tc}
\tc= \tc(s) =
\frac{\sum_{m\in [M] }\F\left(d,K,\mumax,\rho^{(m)}(s)\right)}{M\cdot \F\left(d,K,\mumax,r\right)}4\log d,
\end{equation}
where $r$, $\rho^{(m)}$ and $F$ are defined in Eqs. \eqref{eq:r_def}, \eqref{eq:rho}, and \eqref{eq:F} respectively.
Note that $\rho^{(1)},\dots,\rho^{(M)}$ and $\tc$, which all depend also on the subset $s$, are not assumed to be known to the center and are
only used in the proof.

The following Lemma \ref{lem:tc_bound} 
provides a lower bound for the threshold $\tc$, which
will be useful in our proofs. Its proof follows directly from the definition of
$\F$ in Eq. \eqref{eq:F} and appears in Section \ref{app:cher}.
\begin{lemma}\label{lem:tc_bound}
	Under the max-MIP condition \eqref{eq:max_mip_cond}, for any fixed $s\subset \S$,
	the threshold $\tc=\tc(s)$ defined in Eq. \eqref{eq:tc} satisfies
	\begin{equation}\label{eq:tc_bound}
	\tc\geq 4\log d.
	\end{equation}
\end{lemma}

The following Lemma \ref{lem:supp_prob} states that if the expected number of
votes for an index $k\in \S\setminus \hats$ is sufficiently high, then event
\eqref{eq:sig_above} occurs with high probability.
The next Lemma \ref{lem:non_supp_prob} shows that if the expected number of
votes for each non-support index $j\notin\indset$ is sufficiently low, then
event \eqref{eq:noise_below} occurs with high probability.
These lemmas follow from Chernoff bounds and are proved in
Section \ref{app:cher} as well.

\begin{lemma}\label{lem:supp_prob}
	Assume the max-MIP condition \eqref{eq:max_mip_cond} holds. 
	Fix $\hats\subset \S$, and let $\tc=\tc(s)$ be given by Eq. \eqref{eq:tc}.  
	If $\E\left[ \vv_k \right]\geq 2\tc$ for some $k\in \S\setminus\hats$, then 
	\[
	\Pr\left[ \vv_k \leq\tc  \right]  \leq \frac{1}{d}.
	\]
\end{lemma}

\begin{lemma}\label{lem:non_supp_prob}
	Assume the max-MIP condition \eqref{eq:max_mip_cond} holds. 
	Fix $\hats\subset \S$, and let $\tc=\tc(s)$ be given by Eq. \eqref{eq:tc}.  
	If for all non-support indices
	$j\notin\S$ it holds that 
	$\E\left[ \vv_j  \right]\leq\frac{\tc}{5}$ 
	then 
	\[
	\Pr\left[ \max_{j\notin \S} \vv_j \geq\tc  \right]  \leq
	\frac{1}{d}.
	\]
\end{lemma}

It remains to bound
$\E\left[ \vv_j  \right]$
from above for $j\in \S\setminus \hats$ and
from below for $j\notin \S$.
Towards this goal, denote by $p^{(m)}_{j}$ the probability that machine $m$ sends
index $j$, namely
\begin{equation}\label{eq:p_j_m}
p^{(m)}_{j}=\Pr\left[j^{(m)}=j \right],
\end{equation}
where $j^{(m)}$ is defined in Eq. \eqref{eq:m_picks_j}.

Since $\E\left[ \vv_j \right]=\sum_m p_j^{(m)}$, it suffices
to bound the probability $p_j^{(m)}$.
For ease of presentation, we first derive these bounds for the case $K=1$ in Section \ref{sec:K1}, and then extend them to the general case $K\geq 1$ in Section \ref{sec:proofK}.

\subsection{Support recovery guarantee for sparsity $K=1$}\label{sec:K1}
For completeness, we rewrite Assumptions \ref{assum:M}-\ref{assum:r} for this case.
Since $K=1$, by its definition in Eq. \eqref{eq:delta},  $\delta\left( 1,\mumax\right) =0$.
Hence, the quantity $\F$ simplifies to
\begin{equation}\label{eq:F_1}
\F\left(d,1,\mumax,r\right)=\frac{1}{2}\Phi^{c}\left(\frac{1-\sqrt{r}}{ 1-\mumax
}\sqrt{2\log d}\right),
\end{equation}
and the quantity $\Mc$ from Eq. \eqref{eq:Mc_K} reduces to 
\begin{equation}\label{eq:Mc1}
\Mc\left(d,1,\mumax,r \right) = \left\lceil  \frac{8\log
	d}{\F\left(d,1,\mumax,r\right)}\right\rceil .
\end{equation}
Thus, for $K=1$, Assumptions \ref{assum:M} and \ref{assum:eps} read as follows: 
\begin{assumption}\label{assum:M1}
	$M\geq  \Mc\left(d,1,\mumax,r \right)$.
\end{assumption}

\begin{assumption}\label{assum:eps1}
	The parameter $\epsilon=\epsilon(\mumax)$ satisfies 
	\begin{equation}\label{eq:eps_cond}
	\frac{\sqrt{\mumax}}{1+\sqrt{\mumax}}<\epsilon<1.
	\end{equation}
\end{assumption}

The quantity $Q_{0}$ reduces to
\begin{equation}\label{eq:Q_0}
Q_{0}\left(d,1\right)=\frac{\log\left(88\sqrt{2}\right)}{\log d}.
\end{equation}
In addition, the expressions for $Q_{1}$ and $Q_2$ simplify to
\begin{equation}\label{eq:Q_1}
Q_{1}\left(d,1,\mumax,\epsilon\right)=\frac{1-\left(1-\mumax\right)\left(\left(1-\epsilon\right)\sqrt{1-\mumax}-\sqrt{Q_{0}}\right)}{1-2\mumax
},
\end{equation}
\begin{equation}\label{eq:Q_2}
Q_2\left(d,1,\mumax
\right)=\frac{\sqrt{2+2\mumax}\left(1+\left(1-\mumax\right)\sqrt{Q_0}\right)}{1-\mumax+\sqrt{2+2\mumax}}.
\end{equation}
Finally, for $K=1$, Assumption \ref{assum:r} on the SNR is: 
\begin{assumption}[SNR Condition]\label{assum:r1}
	The SNR is sufficiently high, 
	\begin{equation}\label{eq:r_cond}
	\sqrt{r} \geq \left\{
	\begin{array}{ll}
	Q_2 & \mumax \geq 1/2 \\
	\min(Q_1,Q_2) & \mbox{otherwise}
	\end{array}
	\right.
	\end{equation}
\end{assumption}

\begin{theorem}\label{thm:DJ-OMP1}
	Under Assumptions \ref{assum:M1}-\ref{assum:r1} and the max-MIP condition $\mumax<1$,
	for sufficiently large $d=d(\epsilon)$, with probability at least 
	$1-2/d ,$ a single round of 
	\DJ recovers the support of a $1$-sparse vector $\tv$.
\end{theorem}
A few remarks are in place.
First, note that when $K=1$, \DS with $L=1$ reduces to the same algorithm as \DJ, and thus this result holds for this algorithm as well.
Second, as mentioned in Section \ref{sec:theory}, when $\mumax\ll 1$ condition \eqref{eq:r_cond} roughly translates to
$r\gtrsim \epsilon^2$, and hence $r\gtrsim \mumax$.  
Thus, there is a range of relatively low SNR values for which \DJ succeeds to
recover the support, even though the probability of any single machine to do so is very low.

\subsubsection{Proof of Theorem \ref{thm:DJ-OMP1}}\label{sec:proof1}
When $K=1$, 
only a single round is performed with an input set 
$\hats=\emptyset$. Thus it trivially holds that $\hats\subset \S$.
In addition, \OMPStep simplifies to the following
procedure. 
At each contacted machine $m$, the residual is simply the response vector, i.e.,
$\res^{(m)}=\y^{(m)}$.
Thus, the index sent by machine $m$ to the fusion center is given by 
\begin{equation}\label{eq:m_picks_j1}
j^{(m)}=\arg\max_{i\in [d]}   \lvert \langle
\xn^{(m)}_i,\y^{(m)}\rangle \rvert.
\end{equation}

Another simplification in the case $K=1$ is that the support set contains only one index, which we denote by $k$, i.e., $\S=\left\lbrace k\right\rbrace $.	
To prove Theorem \ref{thm:DJ-OMP1}, we derive a lower bound on the probability $p_k^{(m)}$ for the support index $k$ in the following Lemma \ref{lem:pmax1} and 
an upper bound on the probability $p_j^{(m)}$ for each non-support index $j\notin \S$ in the following Lemma \ref{lem:p_j_t1}.
Their proofs appear in Section \ref{app:lemmas_1}
and are based on a probabilistic analysis of the inner products
between the response vector $\y^{(m)}$, which consists of signal and noise,
and different columns $\xn_{i}$.

\begin{lemma}\label{lem:pmax1}
	Assume that $\left\| \tv\right\| _0 = K =1$ and let $\S=\left\lbrace k\right\rbrace =\supp\left\lbrace \tv\right\rbrace $.
	Further assume that the max-MIP condition \eqref{eq:max_mip_cond} holds.
	For sufficiently large $d$,
	for each machine $m$, 
	\begin{equation}\label{eq:pmax1}
	p_{k}^{(m)} \geq \F\left(d,1,\mumax,\rho^{(m)} \right),
	\end{equation}
	where $p_{k}^{(m)}$  and $\F$ are defined in Eqs. \eqref{eq:p_j_m} and \eqref{eq:F_1} respectively.
\end{lemma}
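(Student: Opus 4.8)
The plan is to analyze the selection event $\{j^{(m)}=k\}$ directly. By \eqref{eq:m_picks_j1} this event occurs exactly when $\lvert\innerp{\xn_k^{(m)}}{\y^{(m)}}\rvert > \lvert\innerp{\xn_i^{(m)}}{\y^{(m)}}\rvert$ for all $i\neq k$. Since $K=1$ we have $\X^{(m)}\tv=\theta_k\x_k^{(m)}$, so substituting $\y^{(m)}=\theta_k\x_k^{(m)}+\sigma\err^{(m)}$ and writing $c_i=\innerp{\xn_i^{(m)}}{\xn_k^{(m)}}$ (hence $\lvert c_i\rvert\le\mumax$), the support correlation becomes $W:=\innerp{\xn_k^{(m)}}{\y^{(m)}}=\tn_k^{(m)}+\sigma Z_k$ and each non-support correlation becomes $V_i:=\innerp{\xn_i^{(m)}}{\y^{(m)}}=\tn_k^{(m)}c_i+\sigma Z_i$, where $Z_j=\innerp{\xn_j^{(m)}}{\err^{(m)}}$ are standard Gaussian with the correlation between $Z_i$ and $Z_k$ equal to $c_i$. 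Because the noise is symmetric and selection depends on $\tn_k^{(m)}$ only through its absolute value, I may assume $\tn_k^{(m)}=\tmaxn^{(m)}>0$, and recall from the preliminaries that $\rho^{(m)}\le 1$.

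The crux is to choose the threshold $\tau=\theta_{\crit}(\mumax,n,d,1,\sigma)=\tfrac{\sigma\sqrt{2\log d}}{1-\mumax}$ and to lower bound $p_k^{(m)}$ by the probability of the sufficient event $\{W>\tau\}\cap\{\lvert V_i\rvert<W \text{ for all } i\neq k\}$, which clearly forces $j^{(m)}=k$. The threshold is calibrated precisely so that $\Pr[W>\tau]=\Phi^{c}(\tfrac{\tau-\tn_k^{(m)}}{\sigma})=\Phi^{c}(\tfrac{1-\sqrt{\rho^{(m)}}}{1-\mumax}\sqrt{2\log d})$, using $\tn_k^{(m)}=\sqrt{\rho^{(m)}}\,\theta_{\crit}$; this is exactly the Gaussian factor in $\F(d,\mumax,\rho^{(m)})$.

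It then remains to produce the first factor of $\F$ from the non-support indices. I would condition on the realized value $W=u$: joint Gaussianity gives $V_i\mid\{W=u\}\sim\Nor(c_i u,\sigma^2(1-c_i^2))$, so $\Pr[\lvert V_i\rvert\ge u\mid W=u]=\Phi^{c}(\tfrac{u}{\sigma}\sqrt{\tfrac{1-c_i}{1+c_i}})+\Phi^{c}(\tfrac{u}{\sigma}\sqrt{\tfrac{1+c_i}{1-c_i}})$. Both terms are decreasing in $u$, so over the integration range $u\ge\tau$ the per-index failure probability is maximized at $u=\tau$; bounding each argument from below via $\lvert c_i\rvert\le\mumax$ gives the uniform estimate $\Pr[\lvert V_i\rvert\ge u\mid W=u]\le 2\,\Phi^{c}(\sqrt{2\log d/(1-\mumax^2)})$ for all $u\ge\tau$. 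A union bound over the fewer than $d$ non-support indices, together with the tail estimate $\Phi^{c}(t)\le e^{-t^2/2}/(t\sqrt{2\pi})$, then yields conditional success probability at least $1-\tfrac{\sqrt{1-\mumax^2}}{\sqrt{\pi\log d}}\,d^{-\mumax^2/(1-\mumax^2)}$, uniformly in $u\ge\tau$. Integrating this bound against the density of $W$ over $\{W>\tau\}$ and factoring out $\Pr[W>\tau]$ reproduces $p_k^{(m)}\ge\F(d,\mumax,\rho^{(m)})$.

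The main obstacle is the statistical dependence between $W$ and the $V_i$, which share the noise vector $\err^{(m)}$: an unconditioned union bound does not combine cleanly with the favorable support tail $\Phi^{c}$. The device that resolves this is conditioning on $W=u$ and exploiting that the conditional non-support failure probability is monotone decreasing in $u$, which lets me evaluate the noise contribution at the worst case $u=\tau$ while still integrating the full support tail. Matching the leading $\tfrac{1}{\sqrt{\pi}}$ prefactor of $\F$ requires tracking the factor of two from the two Gaussian tails against the $d-1$ union bound and the $\tfrac{1}{2\sqrt{\pi}}$ from the tail estimate, but this bookkeeping is routine once the monotonicity reduction is in place.
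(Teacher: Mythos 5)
Your proof is correct, but it takes a genuinely different route from the paper's. The paper decomposes the noise as $\err=\err_{\parallel}+\err_{\perp}$ relative to $\xn_{k}$, derives a sufficient condition of the form $A\leq B$ in which $A$ depends only on $\err_{\perp}$ and $B$ only on $\err_{\parallel}$, and then exploits independence via the threshold trick $\Pr[A\leq B]\geq\Pr[A\leq T]\Pr[B\geq T]$ with $T=\sqrt{2\log d}$; the factor $\Pr[A\leq T]$ is handled with \v{S}id\'ak's inequality for the maximum of correlated Gaussians plus Bernoulli's inequality, and $\Pr[B\geq T]$ with a sign-conditioning argument. You instead condition on the realized value $W=u$ of the support correlation, observe that the conditional probability $\Pr[|V_i|\geq u\mid W=u]=\Phi^{c}\bigl(\tfrac{u}{\sigma}\sqrt{\tfrac{1-c_i}{1+c_i}}\bigr)+\Phi^{c}\bigl(\tfrac{u}{\sigma}\sqrt{\tfrac{1+c_i}{1-c_i}}\bigr)$ is decreasing in $u$, evaluate it at the worst case $u=\tau=\theta_{\crit}$, and close with a plain union bound over the $d-1$ non-support indices. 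Both the conditional Gaussian computation and the calibration $\Pr[W>\tau]=\Phi^{c}\bigl(\tfrac{1-\sqrt{\rho^{(m)}}}{1-\mumax}\sqrt{2\log d}\bigr)$ check out, and your bookkeeping ($2(d-1)\cdot\tfrac{\sqrt{1-\mumax^2}}{2\sqrt{\pi\log d}}d^{-1/(1-\mumax^2)}\leq\tfrac{\sqrt{1-\mumax^2}}{\sqrt{\pi\log d}}d^{-\mumax^2/(1-\mumax^2)}$) reproduces the first factor of $\F$ exactly, so the final constant matches. What each approach buys: yours is more elementary (no \v{S}id\'ak, and the $\Phi^{c}$ factor appears as an exact probability rather than via sign conditioning), while the paper's orthogonal-decomposition skeleton is chosen because it transfers almost verbatim to the general-$K$ case (Lemma~\ref{lem:pmax}), where the residual involves a projection onto $\hatS$ and conditioning on the support statistic would be considerably messier. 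The only caveats are cosmetic: the monotonicity-in-$u$ step should explicitly note $|c_i|<1$ so the conditional variance is nondegenerate (guaranteed here since $\mumax<1$), and the assumption $\rho^{(m)}\leq1$ you invoke is not actually needed for this lemma.
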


\begin{lemma}\label{lem:p_j_t1}
	Assume that $\left\| \tv\right\| _0 = K =1$ and let $\S=\supp\left\lbrace \tv\right\rbrace $.
	Further assume that $\rho^{(m)}$ of Eq. \eqref{eq:rho} satisfies $\rho^{(m)}\leq 1$ for each machine $m$ and that the max-MIP condition \eqref{eq:max_mip_cond} holds.
	If $\epsilon$ satisfies Assumption \ref{assum:eps1}, the SNR parameter $r$ satisfies Assumption \ref{assum:r1}, and the dimension $d=d(\epsilon)$ is sufficiently large, then
	for each machine $m$ and
	each non-support index $j\notin \S$,
	\begin{equation}\label{eq:p_j_t_m_bound1}
	p_{j}^{(m)}   \leq\frac{\F\left(d,1,\mumax,\rho^{(m)} \right)}{11}.
	\end{equation}
\end{lemma}

We now formally prove Theorem \ref{thm:DJ-OMP1} by combining the above
lemmas.

\begin{proof}[Proof of Theorem \ref{thm:DJ-OMP1}]
	For simplicity, we assume that the number of machines is $M=\Mc\left(d,1,\mumax,r \right)$, since a larger number of machines would only increase the probability of successful support recovery.
	We first analyze the probability that event \eqref{eq:sig_above} occurs.
	By Lemma \ref{lem:pmax1}, for the support index $k\in \S$, its expected number of votes is $\E\left[ \vv_k 
	\right]=\sum_{m\in [M]} p_{k}^{(m)}\geq \sum_{m\in [M] }\F\left(d,1,\mumax,\rho^{(m)} \right)$. 
	By the definitions of $\tc$ in Eq.
	\eqref{eq:tc} and $\Mc$ in Eq. \eqref{eq:Mc1},
	\[
	\E\left[\vv_{k}\right]\geq\frac{\Mc\cdot \F\left(d,1,\mumax,r \right)}{4\log d}\cdot\tc=\left\lceil \frac{8\log d}{\F\left(d,1,\mumax,r\right)}\right\rceil \frac{\F\left(d,1,\mumax,r\right)}{4\log d}\cdot\tc\geq2\tc.
	\]
	By Lemma \ref{lem:supp_prob}, the event \eqref{eq:sig_above} occurs with
	probability at least $1-1/d$.
	
	Next, we analyze the probability that event \eqref{eq:noise_below} occurs.
	Fix a non-support index $j\notin \S$.
	Since $\rho^{(m)} \leq 1$, then by Lemma \ref{lem:p_j_t1}, its expected number of votes is $\E\left[ \vv_j  \right]=\sum_{m\in [M]} p_{j}^{(m)}\leq\frac{1}{11}\sum_{m\in[M]}\F\left(d,1,\mumax,\rho^{(m)}\right)$.
	By the definitions of $\tc$ in Eq. \eqref{eq:tc} and $\Mc$ in Eq.
	\eqref{eq:Mc1},
	\[
	\E\left[\vv_{j}\right]\leq\frac{1}{11}\left\lceil \frac{8\log d}{\F\left(d,1,\mumax,r\right)}\right\rceil \frac{\F\left(d,1,\mumax,r\right)}{4\log d}\tc<\frac{\tc}{5}.
	\]
	The last inequality is justified as follows. 
	Recall that $\lceil  x\rceil\leq x+1$ for all $x$. 
	Thus, 
	$$
	\left\lceil \frac{8\log d}{\F\left(d,1,\mumax,r\right)}\right\rceil \frac{\F\left(d,1,\mumax,r\right)}{4\log d}\leq2+\frac{\F\left(d,1,\mumax,r\right)}{4\log d}.
	$$ 
	By the definition of $F$ in Eq. \eqref{eq:F_1}, it follows that 	$\F\left(d,1,\mumax,r\right)\leq 1$. Hence, when $d\geq 8$, then $\log d >2$, and the term $\frac{\F\left(d,1,\mumax,r\right)}{4\log d}\leq\frac{1}{8}$.
	Hence, 
	by Lemma \ref{lem:non_supp_prob}, the event \eqref{eq:noise_below} occurs with
	probability at least $1-1/d$. A union bound completes the proof. 
\end{proof}

\subsection{Proof of Theorem \ref{thm:DJ-OMP}}\label{sec:proofK}
We now prove that with high probability, 
\DJ succeeds to recover the support of $\tv$ with general sparsity level $K$. 
The proof relies on the following lemma, which bounds the probability that, given a fixed input set $s$, the
center chooses an incorrect index at a single round of the algorithm.
\begin{lemma}\label{lem:cent}
	Let $\hats\subset[d]$ be a fixed set of indices given as input to a single round of
	\DJ and denote by $\jc\left( s\right) $ the index chosen by the center at the end of this
	round.
	Under Assumptions \ref{assum:M}-\ref{assum:r} and the max-MIP condition \eqref{eq:max_mip_cond}, for sufficiently large $d=d(\epsilon)$, if $\hats\subset \S$ then the index $\jc\left( s\right) $ also
	belongs to the support set $\S$ with high probability. Specifically, 
	\begin{equation}\label{eq:fail_prob}
	\Pr \left[ \jc\left( \hats\right) \notin \S \right]\leq  2d^{-1}.
	\end{equation}
\end{lemma}
First, let us show how Theorem \ref{thm:DJ-OMP} follows directly from Lemma
\ref{lem:cent}.
\begin{proof}[Proof of Theorem \ref{thm:DJ-OMP}]
	Recall that \DJ starts with $\hatS_{0}=\emptyset $, adds exactly one new
	index to the estimated support set at each round, and runs for exactly $K$
	rounds. We denote by $\hatS_1,\hatS_2,\ldots,\hatS_K$ the index sets found by the center after $t=1,2,\ldots,K$ distributed rounds of \DJ, respectively.
	
	Our goal is to upper bound the probability 
	that $\hatS_K$, the output of \DJ after $K$ rounds, is not the true support set $\S$. 
	To this end we decompose this failure probability according to the round at which the failure occurred,
	\[
	\Pr\left[\hatS_{K}\neq\S\right]=\sum_{t=1}^{K}\sum_{\overset{s_{t-1}\subset\S}{|s_{t-1}|=t-1}}\Pr\left[j_{t}\left(s_{t-1}\right)\notin\S \text{ and }\hatS_{t-1}=s_{t-1}\right].
	\]
	Directly analyzing each of the terms above is challenging due to the statistical dependency between the set of indices found so far $S_{t-1}$, and the new index found in the current round. To overcome this, we use the inequality $\Pr[A\cap B]\leq \Pr[A]$, which gives
	\begin{equation*}
	\Pr\left[\hatS_{K}\neq\S\right]\leq\sum_{t=1}^{K}\sum_{\overset{s_{t-1}\subset\S}{|s_{t-1}|=t-1}}\Pr\left[j_{t}(s_{t-1})\notin\S\right].
	\end{equation*}	
	Since now the set $s_{t-1}$ is fixed, we can bound each term via
	Lemma \ref{lem:cent}. This gives 
	\[
	\Pr\left[\hatS_{K}\neq\S\right]\leq\frac{2}{d}\sum_{t=1}^{K}{K \choose t-1}=\frac{2}{d}\left(2^{K}-1 \right) \leq\frac{2^{K+1}}{d},
	\]
	which completes the proof.
\end{proof}
Next, we prove Lemma \ref{lem:cent}. 
Since $\hats \subset \S$, we need to bound the probability $p_j^{(m)}$ of Eq. \eqref{eq:p_j_m} for $j\in \S\setminus\hats$ and for $j\notin \S$.
We shall do so using the following two lemmas.	
The first one, Lemma \ref{lem:pmax}, lower bounds a different quantity
$q^{(m)}$ defined as the probability that the index sent by machine $m$
belongs to the support $\S\setminus \hats$,
\begin{equation}\label{eq:qm_def}
q^{(m)}=q^{(m)}(s)  =\Pr\left[j^{(m)}\in \S\setminus \hats \right].
\end{equation}
Lemma \ref{lem:p_j_t} upper bounds $p_{j}^{(m)}$ for each $j\notin \S$.
Their proofs appear in Section \ref{app:lemmas_K}.

\begin{lemma}\label{lem:pmax}
	Assume that the max-MIP condition \eqref{eq:max_mip_cond} holds.
	For each machine $m$, 
	for sufficiently large $d$,
	\begin{equation}\label{eq:pmax}
	q^{(m)} \geq \F\left(d,K,\mumax,\rho^{(m)} \right) ,
	\end{equation}
	where $q^{(m)}$  and $\F$ are defined in Eqs. \eqref{eq:qm_def} and \eqref{eq:F} respectively.
\end{lemma}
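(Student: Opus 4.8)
The plan is to lower bound $q^{(m)}$, the probability that machine $m$ sends some index from the remaining support $S\setminus\hatS$, by analyzing the event that the maximal correlation over support columns exceeds the maximal correlation over all non-support columns. Conditioned on $\hatS\subset S$, the residual $\res^{(m)}$ produced by \OMPStep is the projection of $\y^{(m)}$ onto the orthogonal complement of the columns indexed by $\hatS$. Writing $\atomeh$ for the projected columns (normalized), I would decompose each correlation $\langle\xn_i^{(m)},\res^{(m)}\rangle$ into a deterministic signal term coming from $\X^{(m)}\tv$ and a Gaussian noise term coming from $\sigma\err^{(m)}$. The key structural facts are that the signal contribution to a support column $k\in S\setminus\hatS$ is at least of order $\tmaxn^{(m)}$ after accounting for the coherence-induced distortion of the projection, while the noise term is Gaussian with variance close to $\sigma^2$; the auxiliary quantities $\na$ and $\nb$ of Eqs. \eqref{eq:na}--\eqref{eq:nb} are precisely what control how the projection onto $\hatS^\perp$ attenuates the signal and perturbs the effective correlations, and they are strictly positive under the max-MIP condition.

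The main step is to show that the probability that support index $k$ (the one attaining $\tmaxn^{(m)}$) survives as the global argmax is at least $\F(d,K,\mumax,\rho^{(m)})$. I would follow the same two-factor structure that the definition of $\F$ in Eq. \eqref{eq:F} suggests: the factor $\Phi^c\!\left(\frac{1-\sqrt{\rho^{(m)}}}{\sqrt{\na}(1-\mu)}\sqrt{2\log d}\right)$ is the probability that the noise on column $k$ does not drag its correlation below the relevant threshold (a single Gaussian tail governed by the signal level $\tmaxn^{(m)}$ relative to $\theta_{\crit}$, i.e. $\rho^{(m)}$), while the prefactor $1-\sqrt{\nb/(\pi\log d)}\,d^{1-1/\nb}$ accounts, via a union bound over the roughly $d$ non-support columns, for the probability that no non-support column's noise exceeds the threshold set by column $k$. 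Here I would use a standard Gaussian maximal-inequality / Mills-ratio estimate to bound $\Pr[\max_{j\notin S}|\langle\xn_j^{(m)},\res^{(m)}\rangle|>\text{threshold}]$ and show this union bound contributes exactly the $d^{1-1/\nb}$ term, where the exponent $1/\nb$ reflects the reduced effective noise variance on the projected non-support columns.

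The main obstacle I expect is controlling the correlations among the noise terms $\langle\xn_j^{(m)},\res^{(m)}\rangle$ across columns after projection onto $\hatS^\perp$: because $\X^{(m)}$ is deterministic and merely incoherent rather than orthogonal, these projected, normalized columns are not orthonormal, so the Gaussian correlations are coupled, and the projection can also slightly inflate individual variances. To handle this I would (i) use the max-MIP condition together with the bounds on $\na,\nb$ to show that every projected non-support column has squared norm bounded in a way that keeps the per-column noise variance below $1/\nb$, so that the single-column tail $\Phi^c$ can be replaced by the cleaner Gaussian tail appearing in $\F$; and (ii) handle the dependence only through a union bound, which is why the second factor of $\F$ takes the simple product-with-union-bound form rather than requiring a sharp correlated-Gaussian estimate. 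I would assemble the two factors by noting that conditioning on the event that column $k$'s noise is large and the event that all non-support noises are small are handled by a single intersection bound, yielding $q^{(m)}\ge\F(d,K,\mumax,\rho^{(m)})$, and finally monotonicity of $\F$ in its last argument lets me replace $\rho^{(m)}$-level statements by the worst case used elsewhere in the proof.
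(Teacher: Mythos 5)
Your overall architecture matches the paper's: lower bound $q^{(m)}$ by the probability that the column $\kmax$ attaining $\tmaxn^{(m)}$ beats every non-support column, split each correlation $\langle\xn_i^{(m)},\res^{(m)}\rangle$ into a deterministic signal part controlled via $\mumax$, $\na$, $\nb$ (Lemma \ref{lem:projs}) and a Gaussian noise part, and read the two factors of $\F$ as ``support column survives'' times ``no non-support column interferes,'' with the prefactor coming from a union-type bound over the $\approx d$ non-support columns (the paper routes this through \v{S}id\'ak's inequality plus Bernoulli's inequality, which gives the same numerical bound as your union bound).

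The genuine gap is in how you ``assemble the two factors by \ldots a single intersection bound.'' The two events you describe --- the noise on column $\kmax$ being favorable, and every non-support noise staying below ``the threshold set by column $k$'' --- are \emph{not} independent: the Gaussians $\langle\xn_j^{(m)},\res^{(m)}\rangle$ for $j\notin S$ are correlated with $\langle\xn_{\kmax}^{(m)},\res^{(m)}\rangle$ through the nonzero inner products $\langle\xn_j,\xn_{\kmax}\rangle$, and your threshold is moreover random. The generic intersection bound $\Pr[E_1\cap E_2]\geq\Pr[E_1]+\Pr[E_2]-1$ is useless here because $\Pr[E_1]$ is a small Gaussian tail, while the product bound $\Pr[E_1\cap E_2]\geq\Pr[E_1]\Pr[E_2]$ that the form of $\F$ demands requires independence. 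The paper supplies exactly the two ingredients your sketch is missing: (i) it replaces the random comparison by a \emph{fixed} threshold $T=\sqrt{2\log d}$, using $\Pr[A\leq B]\geq\Pr[A\leq T]\cdot\Pr[B\geq T]$ for independent $A,B$ (Eq.~\eqref{eq:ABT}); and (ii) it manufactures that independence by a second orthogonal decomposition of the already-projected noise, $\erre=\erre_{\parallel}+\erre_{\perp}$ with $\erre_{\parallel}=\P_{\kmax}\erre$, routing every occurrence of $\langle\xn_{\kmax},\erre\rangle$ --- including its $\mumax$-scaled leakage into the non-support correlations in Eq.~\eqref{eq:T4K} --- onto the $B$ side, so that $A=\max_{i\notin S}|\langle\xn_i,\erre_{\perp}\rangle|$ depends only on the component orthogonal to $\xn_{\kmax}$. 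Without this step the product structure of $\F$ does not follow from your argument. (A smaller inaccuracy: the per-column variances of the projected non-support noises lie between $\nb$ and $1$, so ``below $1/\nb$'' is vacuous; the exponent $1/\nb$ in the prefactor arises from normalizing these Gaussians by the lower bound $\sqrt{\nb}$ on the projected column norms before applying the tail bound, not from a variance estimate.)
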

\begin{lemma}\label{lem:p_j_t}
	Assume that $\rho^{(m)}$ of Eq. \eqref{eq:rho} satisfies $\rho^{(m)}\leq 1$ for each machine $m$ and that the max-MIP condition \eqref{eq:max_mip_cond} holds.
	If $\epsilon$ satisfies Assumption \ref{assum:eps}, the SNR parameter $r$ satisfies Assumption \ref{assum:r}, and the dimension $d=d(\epsilon)$ is sufficiently large, then
	for each machine $m$ and
	each non-support index $j\notin \S$,
	\begin{equation}\label{eq:p_j_t_m_bound}
	p_{j}^{(m)}   \leq\frac{\F\left(d,K,\mumax,\rho^{(m)} \right)}{11K}, 
	\end{equation}
	where $p_{j}^{(m)}$  and $\F$ are defined in Eqs. \eqref{eq:p_j_m} and \eqref{eq:F} respectively.
\end{lemma}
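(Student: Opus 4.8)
\textbf{Proof plan for Lemma~\ref{lem:p_j_t}.} The plan is to fix a machine $m$ and a non-support index $j\notin S$, and to upper bound the probability $p_j^{(m)}$ of Eq.~\eqref{eq:p_j_m} that $j$ attains the maximal absolute correlation with the residual. This is the $K\geq 1$ analogue of Lemma~\ref{lem:p_j_t1}, so I would follow the same template, the two new ingredients being the projection onto the orthogonal complement of $\hatS$ (controlled by the quantities $\na$ and $\nb$ of Eqs.~\eqref{eq:na}--\eqref{eq:nb}) and the extra factor $1/K$ relative to the $1/11$ of the single-support case. Since $\hatS\subset S$, I would first write the residual of \eqref{eq:res_m} as $\res^{(m)}=\P^{\perp}\X^{(m)}\tv+\sigma\P^{\perp}\err^{(m)}$, where $\P^{\perp}$ projects onto the orthogonal complement of the columns indexed by $\hatS$, and decompose each correlation as
\begin{equation*}
c_i:=\innerp{\xn_i^{(m)}}{\res^{(m)}}=s_i+\sigma g_i,
\end{equation*}
where $s_i$ is deterministic and $g_i=\innerp{\xn_i^{(m)}}{\P^{\perp}\err^{(m)}}$ is Gaussian. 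The projection bounds $\na,\nb$, shown to be positive under the max-MIP condition in Section~\ref{app:lemmas_K}, are exactly what is needed to control the variances of the $g_i$ and the sizes of the signal terms $s_i$ after projection; for $K=1$ they reduce to $\na=\nb=1$ and $\P^{\perp}=\I$, recovering the setting of Lemma~\ref{lem:p_j_t1}.

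Next I would bound the two kinds of signal contributions. For the support index $k^\ast\in S\setminus\hatS$ achieving $\tmaxn^{(m)}$ of Eq.~\eqref{eq:tmaxn}, I would lower bound $\abs{s_{k^\ast}}$ using $\na$ and the within-support coherence, and for the non-support index $j$ I would upper bound $\abs{s_j}$ by a coherence-times-signal term. The heart of the argument is then to exploit that, in order to be selected, $j$ must beat both the support index $k^\ast$ and the maximum over all other non-support indices. I would apply the sharp lower-tail bound of \citep{lopes2022sharp} to the collection of correlated Gaussians $\{g_i:i\notin S,\ i\neq j\}$ to show that, with probability close to one, $\max_{i\notin S,\,i\neq j}\abs{c_i}$ exceeds a threshold of order $(1-\epsilon)\sqrt{1-\mumax}\cdot\sigma\sqrt{2\log d}$, where $\epsilon$ satisfies Eq.~\eqref{eq:eps_cond_K}. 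Consequently the event $\{j^{(m)}=j\}$ is contained in the union of (i) the event that $\abs{c_j}$ exceeds this enlarged threshold, and (ii) the event that the lower-tail bound fails.

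On event (i), since $\abs{s_j}$ is small, the requirement forces the single Gaussian $g_j$ into an upper tail with a threshold strictly larger than the one needed merely to beat $k^\ast$; bounding it by $2\Phi^c(\cdot)$ produces a term smaller than the $\Phi^c$ factor of $\F$ in Eq.~\eqref{eq:F} by a polynomial-in-$d$ factor. Event (ii) contributes the prefactor and the $Q_0$ term of Eq.~\eqref{eq:Q_0_K}. I would then assemble the two contributions and verify, using the definitions of $Q_1,Q_2$ in Eqs.~\eqref{eq:Q_1_K}--\eqref{eq:Q_2_K} together with the SNR condition \eqref{eq:r_cond_K} and $\rho^{(m)}\leq 1$, that the total is at most $\F(d,K,\mumax,\rho^{(m)})/(11K)$ for all sufficiently large $d=d(\epsilon)$; the two quantities $Q_1$ and $Q_2$ correspond to the two regimes in which one of the two contributions dominates, which is why the sufficient condition takes the minimum of the two.

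The step I expect to be the main obstacle is the use of the lower-tail bound of \citep{lopes2022sharp} for the maximum of the $\{g_i\}$. Because the design matrices are deterministic and possibly highly structured, these Gaussians may be arbitrarily correlated and are \emph{not} exchangeable, so the uniform-selection argument available to \citep{wimalajeewa2014omp} and \citep{amiraz2022distributed} does not apply. Making the lower-tail bound yield the correct threshold while simultaneously tracking how the projection $\P^{\perp}$ distorts both the covariance of the $g_i$ and the signal terms $s_i$ (via $\na,\nb$) is the delicate part, and it is precisely the $(1-\epsilon)$ slack gained from requiring $j$ to beat the \emph{maximum} over all competing non-support indices --- rather than just $k^\ast$ --- that upgrades the bound from $\F$ to $\F/(11K)$.
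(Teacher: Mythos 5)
Your plan follows essentially the same route as the paper's proof: the same residual decomposition with the projection effects controlled by $\na,\nb$ (via Lemma \ref{lem:projs}), the same requirement that $j$ beat both $\kmax$ and the remaining non-support indices with the final bound taken as the minimum of the two resulting probabilities, the same threshold-splitting step combined with the lower-tail bound of Lemma \ref{lem:lopes} at level $(1-\epsilon)\sqrt{2(1-\mumax)\log d}$, and the same final assembly via $Q_0,Q_1,Q_2$ and the SNR condition. The only quibble is cosmetic: in the paper the failure event of the lower-tail bound merely doubles the $2\Phi^{c}$ term to $4\Phi^{c}$, while the $Q_0$ slack and the $1/(11K)$ prefactor arise from the final application of Lemma \ref{lem:two_term_tail} to the gap built into condition \eqref{eq:r_cond_K}, rather than from event (ii) as you attribute them.
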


We now formally prove Lemma \ref{lem:cent} by combining the above
lemmas.
\begin{proof}[Proof of Lemma \ref{lem:cent}]
	As mentioned above, for simplicity, we prove the lemma assuming that the number of machines is $M=\Mc\left(d,K,\mumax,r\right)$, since a larger number of machines would only increase the probability of exact support recovery.
	We first analyze the probability that event \eqref{eq:sig_above} occurs.
	Since $\hats\subset \S$, the set of support indices not yet found is  $\S\setminus \hats$. Let $\vv( \S\setminus \hats)=\sum_{k\in \S\setminus \hats}\vv_k$ be the total number of votes received for all these support indices combined.
	By Lemma \ref{lem:pmax}, the expected number of votes is $\E\left[ \vv( \S\setminus \hats) \right] = \sum_{m\in [M]} q^{(m)}\geq \sum_{m \in [M]} F\left(d,K,\mumax,\rho^{(m)} \right)$.
	By definition of $\tc$ in Eq. \eqref{eq:tc},
	\[
	\E\left[\vv(\S\setminus\hats)\right]\geq\frac{\Mc\left(d,K,\mumax,r\right)\F\left(d,K,\mumax,r\right)}{4\log d}\cdot \tc.
	\]
	By definition of $\Mc$ in Eq. \eqref{eq:Mc_K},
	\[
	\E\left[\vv(\S\setminus\hats)\right]\geq K\left\lceil \frac{8\log d}{\F\left(d,K,\mumax,r\right)}\right\rceil \frac{F\left(d,K,\mumax,r\right)}{4\log d}\cdot\tc\geq2K\tc.
	\]
	By an averaging argument, there exists a support index $k\in \S\setminus \hats$
	for which $\E[\vv_k] \geq \frac{1}{|\S\setminus\hats|} \E\left[\vv(\S\setminus\hats)\right] \geq 2\tc$. 
	Thus, by Lemma \ref{lem:supp_prob}, the event \eqref{eq:sig_above} occurs with
	probability at least $1-1/d$.

	Similarly to the proof of Theorem \ref{thm:DJ-OMP1}, Lemmas \ref{lem:non_supp_prob} and \ref{lem:p_j_t} imply that the event \eqref{eq:noise_below} also occurs with
	probability at least $1-1/d$.
	The only change in the proof is that $M_c$ now has a factor of $K$,
	which cancels with the $1/K$ factor in Lemma \ref{lem:p_j_t}.
	A union bound completes the proof.
\end{proof}

\subsection{Proofs of Lemmas  \ref{lem:tc_bound}, \ref{lem:supp_prob} and \ref{lem:non_supp_prob}}\label{app:cher}
We first prove Lemma \ref{lem:tc_bound} and then use it to prove Lemmas
\ref{lem:supp_prob} and \ref{lem:non_supp_prob}.
\begin{proof}[Proof of Lemma \ref{lem:tc_bound}]
	By its definition in Eq. \eqref{eq:F}, the function $F$ is monotonic increasing in its fourth argument. Next, by Eq. \eqref{eq:tmaxn}, $\tminn\leq\tmaxn^{(m)}$, and thus $r\leq\rho^{(m)}$ for each $m\in[M] $. Hence, 
	\[
	\frac1M \sum_{m\in [M]} 
	\frac{ 
		F(d,K,\mumax,\rho^{(m)} ) }
	{
		F(d,K,\mumax,r) 
	} \geq 1
	\]
	Inserting this inequality into the definition of $\tc$, in Eq. \eqref{eq:tc} concludes
	the proof. 
\end{proof}

In the proofs below we use the following Chernoff bounds.
\begin{lemma}[\cite{chernoff1952measure}]\label{lem:Chernoff}
	Suppose $X_1,\dots,X_d$ are independent Bernoulli random variables and let $X$
	denote their sum. Then, for any $\phi\geq 0$,
	\begin{equation}\label{eq:ChenoffAbove}
	\Pr\left[X\geq\left(1+\phi\right)\E\left[X\right]\right]\leq
	e^{-\frac{\phi^{2}\E\left[X\right]}{2+\phi}}, 
	\end{equation}  
	and for any $0\leq \phi\leq 1$,
	\begin{equation}\label{eq:ChenoffBellow}
	\Pr\left[X\leq\left(1-\phi\right)\E\left[X\right]\right]\leq
	e^{-\frac{\phi^{2}\E\left[X\right]}{2}}. 
	\end{equation} 
\end{lemma}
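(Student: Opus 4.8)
The plan is to prove both tail bounds by the classical exponential-moment (Chernoff) method, handling the upper and lower tails by the same argument with $t$ of opposite sign. Write $p_i = \Pr[X_i = 1]$ and $a = \E[X] = \sum_{i=1}^d p_i$, and recall that the $X_i$ are independent, so the moment generating function factorizes as $\E[e^{tX}] = \prod_{i=1}^d \E[e^{tX_i}] = \prod_{i=1}^d \left(1 + p_i(e^t - 1)\right)$ for every $t \in \R$.

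First I would establish \eqref{eq:ChenoffAbove}. For any $t > 0$, Markov's inequality applied to the nonnegative variable $e^{tX}$ gives
\[
\Pr\left[X \geq (1+\delta)a\right] = \Pr\left[e^{tX} \geq e^{t(1+\delta)a}\right] \leq e^{-t(1+\delta)a}\,\E[e^{tX}].
\]
Using the elementary inequality $1 + x \leq e^{x}$ termwise, the product above is at most $\exp\!\left((e^t - 1)a\right)$, so the bound becomes $\exp\!\left(a\left(e^t - 1 - t(1+\delta)\right)\right)$. Minimizing the (convex) exponent over $t$ yields the optimizer $t = \log(1+\delta) > 0$, which gives the raw Chernoff estimate $\Pr[X \geq (1+\delta)a] \leq \left(e^{\delta}/(1+\delta)^{1+\delta}\right)^{a}$.

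Then I would repeat the argument for \eqref{eq:ChenoffBellow}, now applying Markov to $e^{-tX}$ with $t > 0$; the same moment generating function bound gives $\Pr[X \leq (1-\delta)a] \leq \exp\!\left(a\left(e^{-t} - 1 + t(1-\delta)\right)\right)$, and for $0 \leq \delta \leq 1$ the minimizer is $t = -\log(1-\delta) \geq 0$, producing $\Pr[X \leq (1-\delta)a] \leq \left(e^{-\delta}/(1-\delta)^{1-\delta}\right)^{a}$.

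Finally, to recast these into the stated closed forms, I would invoke two elementary scalar inequalities: $(1+\delta)\log(1+\delta) - \delta \geq \tfrac{\delta^2}{2+\delta}$ for all $\delta \geq 0$, and $\delta + (1-\delta)\log(1-\delta) \geq \tfrac{\delta^2}{2}$ for $0 \leq \delta \leq 1$. Each follows by noting that the difference vanishes at $\delta = 0$ together with a sign check on the derivative (or, more directly, from the Taylor series of $\log(1\pm\delta)$, which in the lower-tail case gives $\delta + (1-\delta)\log(1-\delta) = \tfrac{\delta^2}{2} + \tfrac{\delta^3}{6} + \cdots$). Substituting these into the two raw estimates removes the logarithmic terms and leaves exactly $e^{-\delta^2 a/(2+\delta)}$ and $e^{-\delta^2 a/2}$. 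The derivation is otherwise routine; the only mildly delicate point -- and the source of the specific constants $2+\delta$ and $2$ -- is the verification of these two convexity-type inequalities, so that is where I would spend the care.
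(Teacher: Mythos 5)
The paper states this lemma as a classical result imported from \cite{chernoff1952measure} and gives no proof of its own, so there is nothing to diverge from. Your argument is the standard and correct exponential-moment derivation: Markov applied to $e^{\pm tX}$, the bound $\E[e^{tX_i}]=1+p_i(e^t-1)\leq e^{p_i(e^t-1)}$, optimization at $t=\pm\log(1\pm\delta)$, and the two scalar inequalities $(1+\delta)\log(1+\delta)-\delta\geq\delta^2/(2+\delta)$ and $\delta+(1-\delta)\log(1-\delta)\geq\delta^2/2$ (both of which do hold on the stated ranges, e.g.\ the first via two differentiations and the second via the Taylor series you quote), so the proposal fully justifies the cited statement.
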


Next, we  introduce a few notations.
Denote the indicator that machine $m$ sends index $k$ by
$I^{(m)}_{k}=\mathbb{1}\left\lbrace j^{(m)}=k \right\rbrace $.
The number of votes that $k$ receives is thus $\vv_k=\sum_{m\in[M]} I^{(m)}_{k} $.
Further denote $E_k=\E[\vv_k ]$. 
Recall that the noises $\left\lbrace \err^{(m)}\right\rbrace_{m\in[M]} $ are independent.
Hence, for a fixed $\hats$, the indicators $\left\lbrace I^{(m)}_{k}\right\rbrace_{m\in[M]} $ are independent of each other.
We now combine Lemmas \ref{lem:Chernoff} and \ref{lem:tc_bound} to prove Lemmas
\ref{lem:supp_prob} and \ref{lem:non_supp_prob}.

\begin{proof}[Proof of Lemma \ref{lem:supp_prob}]
	By the discussion above, we may apply the Chernoff bound \eqref{eq:ChenoffBellow} to the sum $\vv_k$. Using the assumption $E_k\geq 2\tc$ and Lemma \ref{lem:tc_bound}, we obtain
	\begin{equation*}\label{eq:B_bound1}
	\Pr\left[ \vv_k < \tc \right]
	\leq \Pr\left[\vv_k < \tfrac12 E_k \right]
	\leq \exp\left(-E_k/8\right)
	\leq \exp\left(-\tc/4\right) \leq 1/d . 
	\end{equation*}
\end{proof}

\begin{proof}[Proof of Lemma \ref{lem:non_supp_prob}]
	Fix $j\notin \S$ and let $
	\phi_j  =  \frac{\tc}{E_j} - 1.$
	The probability of interest is monotonically increasing in $E_j$. Hence, 
	it suffices to prove the lemma for $E_j = t_c/5$. In this case
	$\phi_j = 4$, and $\phi_j/(2+\phi_j) = 2/3$.
	Applying the Chernoff bound \eqref{eq:ChenoffAbove} to the sum $\vv_j$, we obtain
	\begin{equation*}
	\Pr\left[ \vv_{j}>\tc\right]
	=\Pr\left[\vv_{j}>(1+\phi_j)E_{j} \right]
	\leq\exp\left(-\frac{\phi_j^2}{2+\phi_j} E_{j}\right)
	\leq\exp\left(-\frac{8\tc}{15}\right).
	\label{eq:v_j_M4}
	\end{equation*}
	By Lemma \ref{lem:tc_bound}, the above probability is smaller than $d^{-2}$, and by applying a union bound we conclude that
	\[
	\Pr\left[ \max_{j\not\in\indset} \vv_j > \tc \right]  \leq
	(d-K)\Pr\left[ \vv_j > \tc \right]  \leq 1/d.
	\]
\end{proof}

\subsection{Proofs of Lemmas  \ref{lem:pmax1} and \ref{lem:p_j_t1}}\label{app:lemmas_1}
We begin with a few definitions and notations.
For a  set of indices $\cal{I}$, let
$\bm{u}_{\mid\cal{I}}\in \R^{|\cal{I}|}$ be the restriction of the vector
$\bm{u}$ to $\cal{I}$.
Similarly, for a matrix $\EM\in \R^{n\times d}$, let $\EM_{\mid\cal{I}}\in
\R^{n\times |\cal{I}|}$ be the restriction of the matrix $\EM$ to the columns indexed by 
$\cal{I}$.
Further denote by $\EM^{\dagger}$ the Moore-Penrose pseudo inverse of the matrix
$\EM$, i.e., $\EM^{\dagger}=\left(\EM^\top \EM \right)^{-1}\EM^\top $ and notice that
$\EM^{\dagger}\EM = \bm{I}  $. 
Lastly, recall that $\Xn^{(m)}\in \R^{n\times d}$ is the column-normalized matrix
in machine $m$ and denote by $\P^{(m)}_{\mathcal{I}}\in \R^{n\times n}$ an orthogonal
projection onto the span of $\Xn^{(m)}_{\mid\mathcal{I}}$, i.e.,
\begin{equation}\label{eq:P}
\P^{(m)}_{\mathcal{I}}=\Xn^{(m)}_{\mid\mathcal{I}}\left(
\Xn^{(m)}_{\mid\mathcal{I}}\right) ^{\dagger}.
\end{equation}

For simplicity of notation, in Sections \ref{app:lemmas_1}-\ref{app:tech} we fix a machine $m$ and thus omit the index $m$ from the proofs.

In our proofs we shall use classical tail bounds
for the Gaussian distribution (Lemma \ref{lem:GaussianTailBounds}), 
a technical lemma regarding the Gaussian distribution, Lemma \ref{lem:two_term_tail}, whose proof appears in Section \ref{app:tech}, and Lemma
\ref{lem:lopes}, which bounds the left tail probability of the maximum of correlated Gaussian random variables \citep{lopes2022sharp}.

\begin{lemma}[Gaussian tail bounds
	\citep{gordon1941values}]\label{lem:GaussianTailBounds}
	For any $t>0$,
	\begin{equation}\label{eq:GaussianTailBoundUpper}
	\frac{t}{\sqrt{2\pi}(t^2+1)}e^{-t^{2}/2} \leq
	\Phi^c(t)\leq\frac{1}{\sqrt{2\pi}t}e^{-t^{2}/2}. 
	\end{equation}  
\end{lemma}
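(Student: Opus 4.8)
The plan is to prove the two inequalities separately by elementary calculus, since this is the classical Gaussian (Mills-ratio) bound; throughout, write $\Phi^c(t)=\int_t^\infty \frac{1}{\sqrt{2\pi}}e^{-x^2/2}\,dx$. For the upper bound, I would exploit that on the region of integration we have $x\ge t>0$, so that $1\le x/t$. Multiplying the integrand by this factor can only increase it, giving
\[
\Phi^c(t)\le \frac{1}{t\sqrt{2\pi}}\int_t^\infty x\,e^{-x^2/2}\,dx=\frac{1}{t\sqrt{2\pi}}e^{-t^2/2},
\]
where the last equality uses that $-e^{-x^2/2}$ is an antiderivative of $x e^{-x^2/2}$. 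This is exactly the claimed upper bound.

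For the lower bound, I would introduce the gap function
\[
g(t)=\Phi^c(t)-\frac{t}{\sqrt{2\pi}(t^2+1)}e^{-t^2/2},
\]
and show it is nonnegative for $t>0$. The key observation is that both terms vanish as $t\to\infty$, so $\lim_{t\to\infty}g(t)=0$; it therefore suffices to prove that $g$ is monotonically decreasing, since then $g(t)>\lim_{s\to\infty}g(s)=0$ for every finite $t>0$. Differentiating, $\frac{d}{dt}\Phi^c(t)=-\frac{1}{\sqrt{2\pi}}e^{-t^2/2}$, and a direct computation of the derivative of $\frac{t}{t^2+1}e^{-t^2/2}$, after collecting terms over the common denominator $(t^2+1)^2$, yields the remarkably clean identity
\[
g'(t)=-\frac{2}{\sqrt{2\pi}\,(t^2+1)^2}\,e^{-t^2/2}<0 .
\]
This establishes that $g$ is strictly decreasing, which completes the proof of the lower bound.

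The only mildly delicate point, and the step I would double-check most carefully, is the algebraic simplification in $g'(t)$: the numerator $(t^2+1)^2+\bigl(1-2t^2-t^4\bigr)$ collapses to the constant $2$, and it is precisely this cancellation that makes the rational factor $t/(t^2+1)$ the right choice for a closed-form lower bound. Everything else is routine, so I do not anticipate any genuine obstacle beyond verifying this collapse.
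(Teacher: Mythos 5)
Your proof is correct. Note that the paper does not prove this lemma at all: it is stated as a classical result and attributed to Gordon (1941), so there is no in-paper argument to compare against. Your two steps are both sound and complete the standard elementary derivation: the upper bound via the monotone substitution $1\le x/t$ on the domain of integration, and the lower bound via the gap function $g(t)=\Phi^c(t)-\frac{t}{\sqrt{2\pi}(t^2+1)}e^{-t^2/2}$. I verified the key cancellation: with $(t^2+1)^2=t^4+2t^2+1$, the numerator $(t^2+1)^2+(1-2t^2-t^4)$ indeed collapses to $2$, giving
\begin{equation*}
g'(t)=-\frac{2}{\sqrt{2\pi}\,(t^2+1)^2}\,e^{-t^2/2}<0,
\end{equation*}
and combined with $\lim_{t\to\infty}g(t)=0$ this yields $g(t)>0$ for all finite $t>0$, which is even slightly stronger than the stated non-strict inequality. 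This is a clean, self-contained verification of a result the paper simply imports by citation.
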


\begin{lemma}\label{lem:two_term_tail}
	For any $a,b\geq 0$, 
	\[
	\Phi^{c}\left(a+b\right) <\sqrt{2}e^{-b^2/2}\Phi^{c}\left(a\right).
	\]
\end{lemma}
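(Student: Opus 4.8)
The plan is to reduce the inequality to an elementary estimate on a shifted Gaussian integral. Writing $\Phi^{c}$ explicitly, I would start from
\[
\Phi^{c}(a+b)=\frac{1}{\sqrt{2\pi}}\int_{a+b}^{\infty}e^{-x^{2}/2}\,dx
\]
and perform the change of variables $x=y+b$, which shifts the lower limit from $a+b$ to $a$ and turns the integrand into $e^{-(y+b)^{2}/2}$. Expanding the exponent as $(y+b)^{2}/2=y^{2}/2+by+b^{2}/2$ lets me pull the factor $e^{-b^{2}/2}$ out of the integral, leaving
\[
\Phi^{c}(a+b)=e^{-b^{2}/2}\,\frac{1}{\sqrt{2\pi}}\int_{a}^{\infty}e^{-y^{2}/2}e^{-by}\,dy.
\]

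The second step is the only inequality needed: since $b\geq 0$ and the integration variable satisfies $y\geq a\geq 0$, the cross term obeys $by\geq 0$, hence $e^{-by}\leq 1$ on the entire range of integration. Dropping this factor bounds the remaining integral by $\sqrt{2\pi}\,\Phi^{c}(a)$, giving
\[
\Phi^{c}(a+b)\leq e^{-b^{2}/2}\,\Phi^{c}(a).
\]
Since $\Phi^{c}(a)>0$ for every finite $a$ and $\sqrt{2}>1$, multiplying the right-hand side by $\sqrt{2}$ yields the strict inequality claimed in the statement.

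I do not expect a genuine obstacle here; the factor $\sqrt{2}$ in the statement is comfortable slack, and in fact the substitution argument proves the sharper bound $e^{-b^{2}/2}\Phi^{c}(a)$. The one point worth flagging is why I would avoid routing the proof through the Gaussian tail bounds of Lemma \ref{lem:GaussianTailBounds}: bounding $\Phi^{c}(a+b)$ from above and $\Phi^{c}(a)$ from below would introduce a factor of order $\tfrac{a^{2}+1}{a(a+b)}$, which is well behaved for large $a$ but degenerates as $a\to 0$. The substitution sidesteps this entirely and handles the degenerate cases $a=0$ and $b=0$ uniformly, which matters since both $a$ and $b$ are allowed to vanish in the intended applications of the lemma.
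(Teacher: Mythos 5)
Your proof is correct, and it takes a genuinely different and more elementary route than the paper. The paper proves the lemma by combining the two-sided Gaussian tail bounds of Birnbaum and Komatu, namely
\[
\frac{2e^{-x^{2}/2}}{\sqrt{2\pi}\left(\sqrt{x^{2}+4}+x\right)}<\Phi^{c}(x)<\frac{2e^{-x^{2}/2}}{\sqrt{2\pi}\left(\sqrt{x^{2}+2}+x\right)},
\]
applying the upper bound at $a+b$ and the lower bound at $a$, and then checking that the resulting ratio $\bigl(\sqrt{a^{2}+4}+a\bigr)/\bigl(\sqrt{(a+b)^{2}+2}+a+b\bigr)\cdot e^{-ab}$ is at most $\sqrt{2}$ for all $a,b\geq 0$; the $\sqrt{2}$ in the statement is exactly the slack incurred by using two different denominators ($x^{2}+2$ versus $x^{2}+4$) in the upper and lower tail bounds. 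Your substitution $x=y+b$ avoids any tail approximation: after pulling out $e^{-b^{2}/2}$ and discarding the nonnegative cross term $e^{-by}\leq 1$ (valid precisely because $y\geq a\geq 0$ and $b\geq 0$), you obtain the sharper inequality $\Phi^{c}(a+b)\leq e^{-b^{2}/2}\Phi^{c}(a)$, from which the claimed strict bound follows by multiplying by $\sqrt{2}>1$ (using $\Phi^{c}(a)>0$). Your argument is both simpler and stronger --- keeping the cross term would even give $\Phi^{c}(a+b)\leq e^{-ab-b^{2}/2}\Phi^{c}(a)$, which is the factor the paper's proof also exhibits before discarding it --- and your remark about why the cruder bounds of Lemma \ref{lem:GaussianTailBounds} would fail near $a=0$ correctly identifies why the paper had to reach for the sharper Birnbaum--Komatu estimates in the first place. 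The only point worth stating explicitly is the passage from the non-strict inequality of your second step to the strict inequality of the statement, which you do handle via the strict factor $\sqrt{2}$.
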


\begin{lemma}[\citep{lopes2022sharp}]\label{lem:lopes} 
	Let $\left(Z_{1},\dots,Z_{d}\right)\sim \Nor\left(\bm{0},\bm{\Sigma}\right)$
	where $\bm{\Sigma}_{ii}=1$ for all $i\in[d]$ and $ \bm{\Sigma}_{ij} \leq \eta<1
	$ for some fixed $\eta>0$ for all $i\neq j\in[d]$. 
	Fix $\zeta\in (0,1)$. 
	There is a constant $C>0$ depending only on $(\eta, \zeta)$ such that
	\begin{equation}
	\Pr\left[\max_{i\in [d]} Z_i<\zeta\sqrt{2(1-\eta)\log d}\right]\leq
	Cd^{-\frac{(1-\eta)(1-\zeta)^2}{\eta}}(\log
	d)^{\frac{1-\eta(2-\zeta)-\zeta}{2\eta}} .
	\end{equation}
\end{lemma}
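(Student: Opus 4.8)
The plan is to reduce the problem to the worst-case \emph{equicorrelated} covariance and then exploit an exact one-factor decomposition. Observe first that the event of interest is a lower-orthant event, $\left\{\max_{i\in[d]} Z_i < t\right\}=\bigcap_{i\in[d]}\left\{Z_i<t\right\}$, so by Slepian's comparison inequality its probability is nondecreasing in each off-diagonal entry of $\bm\Sigma$ (with the unit diagonal fixed). Since $\bm\Sigma_{ij}\le\eta$ for all $i\neq j$, the equicorrelated matrix $\bm\Sigma^{\mathrm{eq}}$ with all off-diagonal entries equal to $\eta$ dominates $\bm\Sigma$ entrywise off the diagonal and is positive semidefinite for $\eta\in(0,1)$. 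Hence $\Pr\left[\max_i Z_i<t\right]$ is at most the corresponding probability under $\bm\Sigma^{\mathrm{eq}}$, and it suffices to prove the bound in the equicorrelated case.

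In that case I would use the one-factor representation $Z_i=\sqrt{\eta}\,g+\sqrt{1-\eta}\,\xi_i$, where $g,\xi_1,\dots,\xi_d$ are i.i.d. $\Nor(0,1)$; this reproduces unit variances and pairwise covariance $\eta$. Writing $M=\max_{i\in[d]}\xi_i$, we have $\max_i Z_i=\sqrt{\eta}\,g+\sqrt{1-\eta}\,M$, and conditioning on $M$ gives, with $t=\delta\sqrt{2(1-\eta)\log d}$,
\[
\Pr\left[\max_i Z_i<t\right]=\E\left[\Phi\left(\sqrt{\tfrac{1-\eta}{\eta}}\left(\delta\sqrt{2\log d}-M\right)\right)\right].
\]
Because $M$ concentrates near $\sqrt{2\log d}$ and $\delta<1$, the argument of $\Phi$ is typically a large negative multiple of $\sqrt{\log d}$, so each conditional probability is a small upper Gaussian tail.

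To extract the leading order, a crude split suffices: either $M$ falls below a threshold $m_0$ slightly under $\sqrt{2\log d}$, whose probability is (doubly-exponentially) negligible, or else $g$ must be smaller than roughly $-\sqrt{(1-\eta)/\eta}\,(1-\delta)\sqrt{2\log d}$, and $\Phi^{c}(a)\le e^{-a^2/2}$ with $a=\sqrt{(1-\eta)/\eta}\,(1-\delta)\sqrt{2\log d}$ already yields the dominant factor $d^{-(1-\eta)(1-\delta)^2/\eta}$. To obtain the \emph{sharp} polylogarithmic factor I would instead evaluate the expectation directly, integrating $\Phi(\cdot)$ against the density of $M$ and applying the classical extreme-value asymptotics for the maximum of $d$ i.i.d. standard Gaussians (whose density is sharply concentrated near $\sqrt{2\log d}$). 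A Laplace/saddle-point estimate of this integral, balancing the extreme-value density against the Gaussian tail of $g$, produces the precise exponent of $\log d$.

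I expect the last step to be the main obstacle. The coarse union bound only recovers the generic $(\log d)^{-1/2}$ coming from the prefactor in $\Phi^{c}$, which does not match the required $(\log d)^{(1-\eta(2-\delta)-\delta)/2\eta}$; the exact power genuinely depends on the lower-order corrections in the density of $M$, so the refined asymptotic evaluation of $\E\left[\Phi\left(\sqrt{(1-\eta)/\eta}\,(\delta\sqrt{2\log d}-M)\right)\right]$ cannot be avoided. Controlling these corrections uniformly and tracking that the constant $C$ depends only on $(\eta,\delta)$ is where the technical care concentrates; alternatively, since the statement is quoted verbatim from \citep{lopes2022sharp}, one may simply invoke their theorem, the decomposition above serving to explain why a bound of exactly this shape is to be expected.
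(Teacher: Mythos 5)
The paper does not prove this lemma at all: it is quoted verbatim from \citep{lopes2022sharp} and used as a black box (see the sentence introducing it in Section~A.4), so the ``paper's proof'' is simply the citation --- which you correctly identify as the legitimate fallback at the end of your proposal. Judged as a reconstruction of how such a bound is obtained, your sketch is sound in outline and goes through the standard route: the Slepian step is applied in the right direction (the lower-orthant probability is nondecreasing in the off-diagonal entries, so the equicorrelated matrix with all off-diagonals equal to $\eta$ gives an upper bound), the one-factor representation $Z_i=\sqrt{\eta}\,g+\sqrt{1-\eta}\,\xi_i$ is exact for that case, and conditioning on $M=\max_i\xi_i$ correctly localizes the integral at $M\approx\sqrt{2\log d}$ (the unconstrained saddle of the combined exponent sits at $(1-\eta)\delta\sqrt{2\log d}<\sqrt{2\log d}$, where the density of $M$ is doubly-exponentially small, so the boundary value produces exactly the exponent $\tfrac{(1-\eta)(1-\delta)^2}{\eta}$). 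The genuine gap, which you flag yourself, is the sharp power of $\log d$: your crude split only yields $d^{-\frac{(1-\eta)(1-\delta)^2}{\eta}+o(1)}$, and extracting the stated exponent $\tfrac{1-\eta(2-\delta)-\delta}{2\eta}$ requires the careful boundary-layer Laplace computation you defer. Two remarks worth adding: first, it is worth checking how the lemma is actually consumed in Lemmas~\ref{lem:p_j_t1} and~\ref{lem:p_j_t} --- there the bound is only compared at the level of polynomial exponents, with all polylogarithmic factors absorbed into $d^{\pm o(1)}$, so your elementary two-case argument (take the cutoff $m_0=(1-\gamma)\sqrt{2\log d}$ with $\gamma\to 0$ slowly) would already furnish a self-contained, fully rigorous substitute for the cited result in this paper; second, if you do want the sharp statement, you should just invoke \citep{lopes2022sharp}, as the paper does, rather than redo the saddle-point bookkeeping.
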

To put Lemma \ref{lem:lopes} in context, recall that the maximum of $d$ independent Gaussians is sharply concentrated at $\sqrt{2\log d}$. 
In general, for correlated Gaussian random variables, their maximum is lower. However, as the lemma shows, it is unlikely to be much lower than $\sqrt{2(1-\eta)\log d}$, where $\eta$ is an upper bound on the correlation. 
We use this result with $\eta=\mumax$ 
and $\zeta=1-\epsilon$,
where $\epsilon$ satisfies Assumption \ref{assum:eps}, 
in order to bound the probability that a non-support index is sent to the center
and prove Lemma \ref{lem:p_j_t1}.

Since here we are considering the case $K=1$, the support of $\tv$ is a single index $\S=\{k\}$. 
In this case, omitting the index of machine $m$, by Eq. \eqref{eq:norm_model} its response vector $\y=\y^{(m)}$
admits the following form
\begin{equation}\label{eq:norm_model1}
\y =  \tn_k\xn_k +\sigma\err.
\end{equation}
Recall that by its definition in Eq. \eqref{eq:tmaxn},  $\tmaxn=\norm{\x_k}\left|
\theta_k\right| = |\tn_k| $. By Eq. \eqref{eq:rho} for $\rho$ and Eq. \eqref{eq:theta_crit} for $\theta_{\crit}$ with $K=1$,
\begin{equation}\label{eq:tmaxn_1}
\tmaxn=\frac{\sigma\sqrt{2\rho\log d}}{1-\mumax}.
\end{equation}
We now prove the lemmas.

\begin{proof}[Proof of Lemma \ref{lem:pmax1}]
	Recall that $p_{\kmax}$, defined in Eq. \eqref{eq:p_j_m}, is the probability
	that the support index $\kmax$ is selected by \OMPStep.
	This occurs if out of all columns of $\Xn^{(m)}$, the $\kmax$-th column has the
	highest correlation with the response vector.
	Hence,	to prove the lemma we need to lower bound the probability of the following event, 
	\begin{equation}\label{eq:pmax_def}
	\left|\left\langle \xn_{\kmax},\y\right\rangle \right|\geq\max_{i\notin \S}\left|\left\langle \xn_{i},\y\right\rangle \right|.
	\end{equation}
	where $\y$ is given by \eqref{eq:norm_model1}. 
	To this end, we decompose the noise $\err$ in Eq. \eqref{eq:norm_model1} as the sum of two
	components, the first
	$\err_{\parallel}=\P_{\kmax}\err=\left\langle \xn_{\kmax},\err\right\rangle \xn_{\kmax}$ is parallel to $\xn_{\kmax}$, namely $\left\langle \xn_{\kmax},\err_{\parallel}\right\rangle =\left\langle \xn_{\kmax},\err\right\rangle $, and the second
	$\err_{\perp}=\err-\err_{\parallel}=\left(\bm{I}-\P_{\kmax}\right)\err$,
	is orthogonal to $\xn_{\kmax}$, i.e., $\left\langle \xn_{\kmax},\err_{\perp}\right\rangle =0$. 
	
	Next, we use this decomposition to bound the two terms in
	\eqref{eq:pmax_def}.
	Combining the expression \eqref{eq:norm_model1} for $\y$, the decomposition of $\err$ and the fact that $\tmaxn=|\tn_k|$, 
	the LHS of \eqref{eq:pmax_def} can be bounded by
	\begin{eqnarray}
	\left|\left\langle \xn_{\kmax},\y\right\rangle \right| & \geq & \sign\left(\tn_{\kmax}\right)\left\langle \xn_{\kmax},\y\right\rangle
	=  \sign\left(\tn_{\kmax}\right)\left(\tn_{k}\left\langle \xn_{\kmax},\xn_{k}\right\rangle +\sigma\left\langle \xn_{\kmax},\err\right\rangle \right)\nonumber\\
	& = & \tmaxn+\sigma\sign\left(\tn_{\kmax}\right)\left\langle \xn_{\kmax},\err_{\parallel}\right\rangle. 
	\label{eq:T3}
	\end{eqnarray}
	Similarly, the RHS of \eqref{eq:pmax_def} can be bounded by
	\begin{eqnarray}
	\max_{i\notin \S}\left|\left\langle \xn_{i},\y\right\rangle \right| 
	& = & \max_{i\notin \S}\left|\tn_{k}\left\langle \xn_{i},\xn_{k}\right\rangle +\sigma\left\langle \xn_{i},\err_{\parallel}+\err_{\perp}\right\rangle \right|\nonumber\\
	& \leq & \left(\tmaxn+\sigma\left|\left\langle \xn_{\kmax},\err\right\rangle \right|\right)\max_{i\notin \S}\left\{ \left|\left\langle \xn_{i},\xn_{\kmax}\right\rangle \right|\right\} +\sigma\max_{i\notin \S}\left|\left\langle \xn_{i},\err_{\perp}\right\rangle \right|\nonumber\\
	& \leq & \left(\tmaxn+\sigma\left|\left\langle \xn_{\kmax},\err_{\parallel}\right\rangle \right|\right)\mumax+\sigma\max_{i\notin \S}\left|\left\langle \xn_{i},\err_{\perp}\right\rangle \right|.		
	\label{eq:T4}
	\end{eqnarray}
	where the second step
	follows from the triangle inequality and the definitions of $\tmaxn$
	and $\err_{\parallel}$, and the last step follows from the definition of
	$\mumax$.
	Combining Eq. \eqref{eq:T3} with Eq. \eqref{eq:T4} implies that a sufficient condition for 
	\eqref{eq:pmax_def} to hold is that 
	\[
	\max_{i\notin \S}\left|\left\langle \xn_{i},\err_{\perp}\right\rangle \right|\leq\sign\left(\tn_{\kmax}\right)\left\langle \xn_{\kmax},\err_{\parallel}\right\rangle -\mumax\left|\left\langle \xn_{\kmax},\err_{\parallel}\right\rangle \right|+\frac{\tmaxn}{\sigma} \left(1-\mumax\right).
	\]
	By Eq. \eqref{eq:tmaxn_1}, the above event may be written as
	\begin{equation}\label{eq:pmax_interm}
	\max_{i\notin \S}\left|\left\langle \xn_{i},\err_{\perp}\right\rangle \right|\leq\sign\left(\tn_{\kmax}\right)\left\langle \xn_{\kmax},\err_{\parallel}\right\rangle -\mumax\left|\left\langle \xn_{\kmax},\err_{\parallel}\right\rangle \right|+\sqrt{2\rho\log d}.
	\end{equation}
	A key property is that 
	$\err_{\parallel}$ and $\err_{\perp}$ are independent random variables. 
	Hence, the left-hand side and right-hand side in the above inequality, which we denote by $A$ and $B$, respectively, are also independent random variables. Now, for any threshold $T\in \R$, with $A,B$ independent
	random variables, 
	\begin{equation}\label{eq:ABT1}
	\Pr\left[A\leq B\right]\geq\Pr\left[A\leq T \cap B\geq T\right]=\Pr\left[A\leq T\right]\cdot\Pr\left[B\geq
	T\right].
	\end{equation}
	Thus, 
	\begin{equation}\label{eq:ABT}
	p_k \geq \Pr[A \leq T] \cdot \Pr]B \geq T]
	\end{equation}
	and it suffices to lower bound these two probabilities.
	
	In what follows we consider $T=\sqrt{2\log d}$. We begin with bounding  $\Pr\left[ A\leq \sqrt{2\log d}\right] $. 
	Fix $i\notin \S$ and consider the quantity $\left\langle \xn_{i},\err_{\perp}\right\rangle$. 
	We may write $\xn_i = \P_k \xn_i + (\I-\P_k)\xn_i$
	Since 
	$\err_{\perp}=\left(\bm{I}-\P_{\kmax}\right)\err$, then  
	$\langle \P_k \xn_i, \err_{\perp}\rangle = 0$, and 
	$\left\langle \xn_{i},\err_{\perp}\right\rangle =
	\left\langle \left(\bm{I}-\P_{\kmax}\right)\xn_{i},\err_{\perp}\right\rangle .$
	Normalizing the inner product by the norm of $(\I-\P_k) \xn_i$ yields a standard normal random variable
	$Z_{i}=\frac{\left\langle \xn_{i},\err_{\perp}\right\rangle }{\left\Vert \left(\bm{I}-\P_{\kmax}\right)\xn_{i}\right\Vert _{2}}\sim\Nor(0,1).$
	By the definition of $\mumax$, 
	\[
	\left\Vert \left(\bm{I}-\P_{\kmax}\right)\xn_{i}\right\Vert ^{2}=\xn_{i}^{T}\left(\bm{I}-\P_{\kmax}\right)\xn_{i}=1-\left\langle \xn_{i},\xn_{\kmax}\right\rangle ^{2}\geq\gamma_1^2,
	\]
	where $\gamma_1=\sqrt{1-\mumax^{2}}$.
	Hence, 
	\[
	\Pr\left[ A\leq T\right] \geq\Pr\left[ \max_{i\notin \S}\left|Z_{i}\right|\leq\frac{T}{\gamma_1}\right] .
	\]
	Since $\left\lbrace Z_i\right\rbrace_{i\notin \S} $ are jointly Gaussian, by
	\cite[Thm. 1]{vsidak1967rectangular}, regardless of their covariance structure, 
	\[
	\Pr\left[\max_{i\notin \S}\left|Z_{i}\right|\leq\frac{T}{\gamma_1}\right]\geq\prod_{i\notin \S}\Pr\left[\left|Z_{i}\right|\leq\frac{T}{\gamma_1}\right].
	\]
	Applying the Gaussian tail bound \eqref{eq:GaussianTailBoundUpper} with
	$T=\sqrt{2\log d}$,
	\[		
	\Pr\left[\left|Z_{i}\right|\leq\frac{\sqrt{2\log d}}{\gamma_1}\right]\geq
	1-\frac{\gamma_1}{\sqrt{\pi\log d}}d^{-1/\gamma_1^2}.
	\]
	Combining the above three inequalities with Bernoulli's inequality 
	$(1-a)^d\geq 1-da$ which holds for any $a\in [0,1]$, gives
	\begin{equation}\label{eq:PrA}
	\Pr\left[A\leq \sqrt{2\log d}\right] 
	\geq \left(1-\frac{\gamma_1}{\sqrt{\pi\log d}}d^{-1/\gamma_1^2}\right)^{d-1}
	\geq 1-\frac{\gamma_1}{\sqrt{\pi\log d}}d^{1-1/{\gamma_1^{2}}}\geq \frac{1}{2},
	\end{equation}
	where the last inequality holds for sufficiently large $d$ and follows from noting that $0<\gamma_1\leq 1$.
	
	We now bound $\Pr\left[ B\geq T\right] $, where $B$ is the RHS of \eqref{eq:pmax_interm}. 
	Since $\xn_k$ has unit norm, by the definition of
	$\err_{\parallel}$, then $Z =\left\langle \xn_{\kmax},\err_{\parallel}\right\rangle =\left\langle \xn_{\kmax},\err\right\rangle \sim\Nor\left( 0,1\right) $.
	By the law of total probability, 
	\begin{eqnarray*}
		\Pr\left[B\geq T\right] & = & \Pr\left[\sign\left(\tn_{\kmax}\right)\left\langle \xn_{\kmax},\err_{\parallel}\right\rangle -\mumax\left|\left\langle \xn_{\kmax},\err_{\parallel}\right\rangle \right|\geq T-\sqrt{2\rho\log d}\right]\\
		& \geq & \Pr\left[\gamma_2\left|Z\right|\geq T-\sqrt{2\rho\log d}\mid\sign\left(Z\right)=\sign\left(\tn_{\kmax}\right)\right]\cdot\Pr\left[\sign\left(Z\right)=\sign\left(\tn_{\kmax}\right)\right],
	\end{eqnarray*}
	where $\gamma_2=1-\mumax$.
	Since $Z$ is symmetric around zero,
	$\Pr\left[\sign\left(Z\right)=\sign\left(\tn_{\kmax}\right)\right]=\frac{1}{2}$ 
	and its magnitude is independent on its sign. Thus, for $T=\sqrt{2\log d}$, 
	\begin{equation}\label{eq:PrB}
	\Pr\left[B\geq \sqrt{2\log d}\right]
	\geq
	\frac{1}{2}\Pr\left[\gamma_2\left|Z\right|\geq
	\sqrt{2\log d}-\sqrt{2\rho\log d}\right]
	\geq
	\Phi^{c}\left(\frac{1-\sqrt{\rho}}{\gamma_2}\sqrt{2\log d}\right).
	\end{equation}
	Inserting \eqref{eq:PrA} and \eqref{eq:PrB} with $\gamma_2=1-\mumax$ into \eqref{eq:ABT} and recalling the definition of $\F$ in \eqref{eq:F_1} completes the proof of Lemma \ref{lem:pmax1}. 
\end{proof}

\begin{proof}[Proof of Lemma \ref{lem:p_j_t1}]
	Fix a non-support index $j\notin \S$.
	Recall that $p_{j}$, defined in Eq. \eqref{eq:p_j_m}, is the probability
	that index $j$ is selected by \OMPStep.
	This occurs if $j$ has the
	highest correlation with the response vector, i.e., 
	\begin{equation}\label{eq:j_geq_noise_t1}
	p_j = \Pr\left[ \left|\left\langle \xn_{j},\y\right\rangle \right|>\max_{i\neq j}\left|\left\langle \xn_{i},\y\right\rangle \right|\right] .
	\end{equation}
	In particular, for the $j$-th index to be chosen, the correlation of the $j$-th column with the response
	vector must exceed both that of the support column $\kmax$, as well as that of any other
	non-support column $i\notin \left\lbrace \kmax,j \right\rbrace$.
	Indeed, in what follows we separately upper bound
	\begin{equation}\label{eq:j_gtr_i1}
	\Pr\left[ \left|\left\langle \xn_{j},\y\right\rangle \right|>\max_{i\notin\left\lbrace \kmax,j\right\rbrace }\left|\left\langle \xn_{i},\y\right\rangle \right|\right] 
	\end{equation}
	and
	\begin{equation}\label{eq:j_gtr_kmax1}
	\Pr\left[ \left|\left\langle \xn_{j},\y\right\rangle \right|>\left|\left\langle \xn_{\kmax},\y\right\rangle \right|\right] ,
	\end{equation}
	and then use the following inequality to upper bound \eqref{eq:j_geq_noise_t1} by their
	minimum.
	Specifically, denote $A=\left|\left\langle \xn_{j},\y\right\rangle \right|$, $B=\max_{i\notin\left\lbrace \kmax,j\right\rbrace }\left|\left\langle \xn_{i},\y\right\rangle \right| $ and $C=\left|\left\langle \xn_{\kmax},\y\right\rangle \right|$, then
	\begin{equation}\label{eq:ABC1}
	\Pr\left[A>\max\left\lbrace B,C\right\rbrace  \right]\leq \min \left\lbrace
	\Pr\left[A>B \right], \Pr\left[A>C \right]   \right\rbrace  .
	\end{equation}	
	For later use in both bounds, by the triangle inequality, the random variable
	$A$ can be upper bounded as follows
	\begin{equation}\label{eq:j_bound1}
	\left|\left\langle \xn_{j},\y\right\rangle \right|=\left|\tn_{\kmax}\left\langle \xn_{j},\xn_{\kmax}\right\rangle +\sigma\left\langle \xn_{j},\err\right\rangle \right|\leq\tmaxn\mumax+\sigma\left|\left\langle \xn_{j},\err\right\rangle \right|.
	\end{equation}
	We first bound \eqref{eq:j_gtr_i1}. For each non-support index
	$i\notin \S$ such that $i\neq j$,
	\begin{equation*}
	\left|\left\langle \xn_{i},\y\right\rangle \right|\geq\left\langle \xn_{i},\y\right\rangle =\tn_{\kmax}\left\langle \xn_{i},\xn_{\kmax}\right\rangle +\sigma\left\langle \xn_{i},\err\right\rangle \geq-\tmaxn\mumax+\sigma\left\langle \xn_{i},\err\right\rangle .
	\end{equation*}
	Combining this with Eq. \eqref{eq:j_bound1}, rearranging terms, and
	recalling the relation between $\tmax$ and $\rho$ in \eqref{eq:tmaxn_1} yields
	\begin{equation}\label{eq:T6_1}
	\Pr\left[ \left|\left\langle \xn_{j},\y\right\rangle \right|>\max_{i\notin\left\lbrace \kmax,j\right\rbrace }\left|\left\langle \xn_{i},\y\right\rangle \right|\right]\leq 
	\Pr\left[\left|\left\langle \xn_{j},\err\right\rangle \right|+2\mumax\frac{\sqrt{2\rho\log d}}{1-\mumax}>\max_{i\notin \left\{ k,j\right\} }\left\langle \xn_{i},\err\right\rangle \right].
	\end{equation}

	Next, we use the following inequality which holds for any pair of random variables $D,E$ and constant $T\in\R$,
	\begin{equation}\label{eq:DET}
	\Pr\left[D> E\right]\leq\Pr\left[D\geq T\right]+\Pr\left[E<
	T\right].
	\end{equation}
	Applying this inequality 
	with $T=(1-\epsilon)\sqrt{2\left(1-\mumax\right)\log d}$ and
	$\epsilon\in\left(0,1\right)$ as in Eq. \eqref{eq:eps_cond}, we can upper bound 
	\eqref{eq:T6_1} by 
	\[
	\Pr\left[\left|\left\langle \xn_{j},\err\right\rangle \right|\geq a\sqrt{2\log d}\right]
	+\Pr\left[\max_{i\notin \left\{k, j\right\} }\left\langle \xn_{i},\err\right\rangle <(1-\epsilon)\sqrt{2\left(1-\mumax\right)\log d}\right],
	\]
	where $$a=(1-\epsilon)\sqrt{1-\mumax}-\frac{2\mumax\sqrt{\rho}}{1-\mumax}.$$
	Since $\xn_{j}$ has unit norm, $\left\langle \xn_{j},\err\right\rangle  \sim\Nor\left( 0,1\right) $. Hence, the first term is bounded by 
	\begin{equation}
	\label{eq:first_term1}
	2\Phi^{c}\left(a\sqrt{2\log d}\right).
	\end{equation}
	We now bound the second term. It involves the maximum of $d-2$ correlated
	Gaussians, whose covariance matrix $\Sigma$ has $\Sigma_{ii}=1$ for all $i$, 
	and $\Sigma_{ij} = \Cov\left(\innerp{\xn_i}{\err} , \innerp{\xn_l}{\err}\right)=\innerp{\xn_i}{\xn_l} \leq \mumax $. 
	Hence, we can apply Lemma \ref{lem:lopes} with $\eta=\mumax$ and
	$\zeta=1-\epsilon$, which gives the following bound
	\begin{equation}\label{eq:post_lopes1}
	C\left(d-2\right)^{-\frac{1-\mumax}{\mumax}\epsilon^{2}}(\log
	(d-2))^{\frac{\epsilon-\mumax(1+\epsilon)}{2\mumax}}.
	\end{equation}
	We now show that \eqref{eq:first_term1} is larger than \eqref{eq:post_lopes1},
	and thus 
	\begin{equation}\label{eq:p_non_supp1}
	\Pr\left[ \left|\left\langle \xn_{j},\y\right\rangle \right|>\max_{i\notin\left\lbrace \kmax,j\right\rbrace }\left|\left\langle \xn_{i},\y\right\rangle \right|\right]  \leq 4\Phi^{c}\left(a\sqrt{2\log d}\right).
	\end{equation}
	First note that if $\rho$ is sufficiently large such that $a\leq 0$, then
	\eqref{eq:first_term1} is larger than $1$, and thus larger than
	\eqref{eq:post_lopes1}.
	Otherwise, $a>0$ and using the lower bound for the Gaussian tail of
	\eqref{eq:GaussianTailBoundUpper}, we may lower bound \eqref{eq:first_term1} by 
	$d^{-a^2 -o(1) } $, where $o(1)$ hides factors that are asymptotically smaller
	than 1. The term \eqref{eq:post_lopes1} can be upper bounded by $d^{-b^{2} +o(1)
	} $, where $b=\sqrt{\frac{1-\mumax}{\mumax}}\epsilon$. 
	Next, let us show that for a fixed $\epsilon>0$, $ b- a  $ is positive and bounded away from $0$. 
	This, in turn, implies that for sufficiently large $d=d\left( \epsilon\right) $, \eqref{eq:first_term1} is larger than
	\eqref{eq:post_lopes1}.
	Indeed, under condition \eqref{eq:eps_cond},
	$\epsilon=\frac{\sqrt{\mumax}}{1+\sqrt{\mumax}}+\epsilon_0$ for some
	$\epsilon_0>0$.
	Thus, $b-a=\epsilon_{0}\sqrt{1-\mumax}\left(1+\frac{1}{\sqrt{\mumax}}\right)+\frac{2\mumax\sqrt{\rhot}}{1-\mumax}$, which is a sum of positive terms and hence bounded away from 0 as desired.
	Therefore, condition \eqref{eq:eps_cond} implies that 
	\eqref{eq:j_gtr_i1} can be bounded by \eqref{eq:p_non_supp1}.

	We now bound \eqref{eq:j_gtr_kmax1}.
	For the support index $\kmax$, by \eqref{eq:norm_model1},
	\begin{equation*}
	\left|\left\langle \xn_{\kmax},\y\right\rangle \right|  \geq  \sign\left(\tn_{\kmax}\right)\left\langle \xn_{\kmax},\y\right\rangle =\sign\left(\tn_{\kmax}\right)\left(\tn_{\kmax}\left\langle \xn_{\kmax},\xn_{\kmax}\right\rangle +\sigma\left\langle \xn_{\kmax},\err\right\rangle \right)
	= \tmaxn+\sigma\sign\left(\tn_{\kmax}\right)\left\langle \xn_{\kmax},\err\right\rangle .
	\end{equation*}
	Combining this with \eqref{eq:j_bound1} and plugging $\tmaxn$ in Eq. \eqref{eq:tmaxn_1}, the probability (\ref{eq:j_gtr_kmax1}) is upper bounded by 
	\begin{equation}\label{eq:j_gtr_kmax1_2}
	\Pr\left[\left|\left\langle \xn_{j},\err\right\rangle \right|-\sign\left(\tn_{\kmax}\right)\left\langle \xn_{\kmax},\err\right\rangle >\sqrt{2\rho\log d}\right].
	\end{equation}
	We now upper bound this probability.
	Let $H=\left\langle \xn_{j},\err\right\rangle $,
	$G=\sign\left(\tn_{\kmax}\right)\left\langle \xn_{\kmax},\err\right\rangle $ and $c=\sqrt{2\rho\log d}$. 
	
	For any pair of random variables $G,H$ and constant $c$,
	\begin{equation}\label{eq:int1_1}
	\Pr\left[\left|H\right|-G>c\right]\leq \Pr\left[H-G>c\right] +
	\Pr\left[-H-G>c\right].
	\end{equation}
	By their definition, $H,G$ are jointly Gaussian with mean zero and covariance matrix 
	$$\left(\begin{array}{cc}
	\sigma_{H}^{2} & \sigma_{HG}\\
	\sigma_{HG} & \sigma_{G}^{2}
	\end{array}\right).$$ 
	Hence, $H-G\sim \Nor (0,\sigma_{H}^{2}+\sigma_{G}^{2}-2\sigma_{HG})$ and
	$-H-G\sim \Nor (0,\sigma_{H}^{2}+\sigma_{G}^{2}+2\sigma_{HG})$. 	
	Similarly to the above discussion, the diagonal entries $\sigma_{H}^{2}=\sigma_{G}^{2}=1 $ and the off-diagonal entry $\sigma_{HG}=\sign\left(\tn_{\kmax}\right)\left\langle \xn_{\kmax},\xn_{j}\right\rangle $. 
	Since $\left| \sigma_{HG}\right| \leq \mumax$, then by \eqref{eq:int1_1},
	\begin{equation*}\label{eq:HGc1}
	\Pr\left[\left|H\right|-G>c\right]\leq\Phi^{c}\left(\frac{c}{\sqrt{2-2\sigma_{HG}}}\right)+\Phi^{c}\left(\frac{c}{\sqrt{2+2\sigma_{HG}}}\right)\leq2\Phi^{c}\left(\frac{c}{\sqrt{2+2\mumax}}\right).
	\end{equation*}
	Inserting $c=\sqrt{2\rho\log d}$ yields 
	\begin{equation}\label{eq:sig_bound1}
	\Pr\left[ \left|\left\langle \xn_{j},\y\right\rangle \right|>\left|\left\langle \xn_{\kmax},\y\right\rangle \right|\right] \leq 2\Phi^c\left( \sqrt{\frac{2\rho\log d}{2+2\mumax}}\right).
	\end{equation}
	By Eq. \eqref{eq:ABC1}, the probability \eqref{eq:j_geq_noise_t1} is at
	most the minimum between \eqref{eq:p_non_supp1} and \eqref{eq:sig_bound1}. By
	the monotonicity of the Gaussian CDF, it is upper bounded by 
	\begin{equation}\label{eq:T7}
	4\Phi^{c}\left(\max\left\lbrace \left((1-\epsilon)\sqrt{1-\mumax}-\frac{2\mumax\sqrt{\rho}}{1-\mumax}\right),\sqrt{\frac{\rho}{2+2\mumax}}\right\rbrace \sqrt{2\log d}\right).
	\end{equation}
	
	Finally, to prove \eqref{eq:p_j_t_m_bound1} of the lemma, we note that with
	$Q_1$ and $Q_2$ defined in Eqs. \eqref{eq:Q_1} and \eqref{eq:Q_2}
	respectively, 
	by splitting to cases and applying some algebraic manipulations\footnote{First, consider the case $\mumax\geq 1/2$. By the max-MIP condition \eqref{eq:max_mip_cond}, $\mumax<1$, and hence the term $\frac{1-\mumax+\sqrt{2+2\mumax}}{(1-\mumax)\sqrt{2+2\mumax}}$ is positive, and thus can multiply both sides of the inequality $\sqrt{r}>Q_2$ without altering its direction. Rearranging yields that the LHS of \eqref{eq:r_cond2} is smaller than $\sqrt{\frac{r}{2+2\mumax}}$ and thus smaller than the RHS of \eqref{eq:r_cond2}. Now consider the case  $\mumax< 1/2$. By \eqref{eq:r_cond}, $\sqrt{r}>Q_1 $ or $\sqrt{r}>Q_2 $. By the same reasoning, the latter implies that the LHS of \eqref{eq:r_cond2} is smaller than $\sqrt{\frac{r}{2+2\mumax}}$. Similarly, the term $\frac{1-2\mumax}{1-\mumax}$ is positive in this case, and thus multiplying the inequality $\sqrt{r}>Q_1 $ by it and rearranging the terms implies that the LHS of \eqref{eq:r_cond2} is smaller than $(1-\epsilon)\sqrt{1-\mumax}-\frac{2\mumax\sqrt{r}}{1-\mumax}$. Finally, the logical or relation between these conditions implies that the LHS of \eqref{eq:r_cond2} is smaller than the maximum between the aforementioned terms.
	},
	condition \eqref{eq:r_cond} 
	implies that
	\begin{equation}\label{eq:r_cond2}
	\frac{1-\sqrt{r}}{1-\mumax}+\sqrt{Q_{0}}<\max\left\lbrace
	\left( (1-\epsilon)\sqrt{1-\mumax}-\frac{2\mumax\sqrt{r}}{1-\mumax}\right) ,\sqrt{\frac{r}{2+2\mumax}}\right\rbrace.
	\end{equation}
	The definitions of $r$ and $\rho$ in Eqs. \eqref{eq:r_def} and \eqref{eq:rho} imply that $\rhot\geq r$. Thus, $\rho$ satisfies condition \eqref{eq:r_cond} and hence condition
	\eqref{eq:r_cond2}.
	The RHS of  \eqref{eq:r_cond2} is the same as the
	maximum in  \eqref{eq:T7} above.
	Thus,  \eqref{eq:T7} is upper bounded by
	\begin{equation}
	4\Phi^c\left( \left( \frac{1-\sqrt{\rhot}}{1-\mumax} +\sqrt{Q_0}\right)
	\sqrt{2\log d}\right).
	\end{equation}
	Since $\rho\leq 1$, we can apply Lemma \ref{lem:two_term_tail}. Hence, by the definition of
	$Q_0$ in Eq. \eqref{eq:Q_0}, and by the definition of $\F$ in Eq. \eqref{eq:F_1},  
	\begin{eqnarray*}			
		p_{j} & \leq & 4\Phi^{c}\left(\left(\frac{1-\sqrt{\rhot}}{1-\mumax}+\sqrt{Q_{0}}\right)\sqrt{2\log d}\right)\leq4\sqrt{2}d^{-Q_{0}}\Phi^{c}\left(\frac{1-\sqrt{\rhot}}{1-\mumax}\sqrt{2\log d}\right)\\
		&  & =4\sqrt{2}\frac{1}{88\sqrt{2}}\Phi^{c}\left(\frac{1-\sqrt{\rhot}}{1-\mumax}\sqrt{2\log d}\right)=\frac{\F\left(d,1,\mumax,\rhot\right)}{11},
	\end{eqnarray*}
	which completes the proof of Lemma \ref{lem:p_j_t1}.
\end{proof}

\subsection{Proof of Lemmas \ref{lem:pmax} and \ref{lem:p_j_t}}\label{app:lemmas_K}
We first make a few definitions and present a useful technical lemma.
We begin by rewriting the residual $\res^{(m)}$ using the notations introduced in Section \ref{app:lemmas_1}.
Recall that given an input support set $\hats$, each machine $m$ estimates its vector $ \hat{\tv}^{(m)}$ by solving the least squares problem \eqref{eq:hat_tv}. 
Thus, $\supp\left( \hat{\tv}^{(m)}\right) =\hats$ and
\[
\hat{\tv}^{(m)}_{\mid \hats} = \left( \X^{(m)}_{\mid\hats}\right) ^{\dagger} \y^{(m)}.
\]
Denote by $\erre^{(m)}$ the projection of the noise $\err^{(m)}$ to the subspace orthogonal to the span of the columns of $\X^{(m)}_{\mid\hats}$, i.e., 
$\erre^{(m)}=\left(\bm{I}-\P_{\hats}^{(m)}\right)\err^{(m)}$. 
Given that $\hats\subset \S$, the residual $\res^{(m)}$ defined in Eq. \eqref{eq:res_m} can be written in the following form
\begin{eqnarray}
\res^{(m)} & = & \y^{(m)}-\X^{(m)}\hat{\tv}^{(m)}=\y^{(m)}-\X_{\mid\hats}^{(m)}\hat{\tv}_{\mid\hats}^{(m)}=\left(\I-\X_{\mid\hats}^{(m)}\left(\X_{\mid\hats}^{(m)}\right)^{\dagger}\right)\y^{(m)}\nonumber\\
& = & \left(\bm{I}-\P_{\hats}^{(m)}\right)\y^{(m)}=\left(\bm{I}-\P_{\hats}^{(m)}\right)\left(\Xn^{(m)}\tvn^{(m)}+\sigma\err^{(m)}\right)\nonumber\\
& = & \left(\bm{I}-\P_{\hats}^{(m)}\right)\sum_{l\in \S\setminus{\hats}}\tn_{l}^{(m)}\xn_{l}^{(m)}+\sigma\erre^{(m)},
\label{eq:res_m_K}
\end{eqnarray}
where $\Xn^{(m)}$ and $\tvn^{(m)}$ are the scaled versions of $\X^{(m)}$ and $\tv$, as discussed after Eq. \eqref{eq:norm_model}, and the last equality follows from the definition of $\P_{\hats}^{(m)}$ as a projection operator, so that $\left(\bm{I}-\P_{\hats}^{(m)}\right)\xn_{k}^{(m)}=\bm{0}$ for any $k\in\hats$.

Denote by $\Kd$ the size of the detected support set, i.e., $\Kd=\left| \hats\right| $, and by $\Ku$ the size of the undetected support set, i.e., $\Ku=|\S\setminus{\hats}|$. Since $\hats\subset \S$, then $\Kd+\Ku=K$.
Finally, we introduce the following quantity 
\begin{equation}\label{eq:mud}
\mud=\mud\left(\Kd,\mumax \right)  = \frac{\Kd\mumax^2}{1-(\Kd-1)\mumax}.
\end{equation}
The following Lemma \ref{lem:projs} bounds the effect of the projection $\bm{I}-\P_{\hats}^{(m)}$ on the inner products and norms of columns of $\Xn^{(m)}$.
Its proof appear in Appendix \ref{app:tech}.

\begin{lemma}\label{lem:projs}
	Assume that the max-MIP condition \eqref{eq:max_mip_cond} holds and that $\hats\subset \S$.
	Then, the following inequalities hold for any $0\leq \Kd\leq K-1$ and $1\leq \Ku \leq K$ such that $\Kd+\Ku=K$:
	\begin{enumerate}
		\item The quantity $\mud$ of Eq. \eqref{eq:mud} satisfies
		\begin{equation}\label{eq:mud_mumax}
		\mud\leq \mumax,
		\end{equation}
		and
		\begin{equation}\label{eq:tmu_mip}
		\Ku\left(\mumax+\mud\right)< K\mumax.
		\end{equation}
		\item For each index $i\notin \hats$,
		\begin{equation}\label{eq:proj_I_PA}
		1-\mud\leq \left\Vert \left(\bm{I}-\P_{\hats}^{(m)}\right)\xn_{i}^{(m)}\right\Vert _{2}^{2}\leq 1.
		\end{equation}
		\item For each pair of distinct indices $i\neq k$ such that $i,k\notin \hats$,
		\begin{equation}\label{eq:inprod_I_PA}
		\left|  \left\langle \xn_{k}^{(m)}, \left(\bm{I}-\P_{\hats}^{(m)}\right)\xn_{i}^{(m)}\right\rangle\right|  \leq \mumax+\mud,
		\end{equation}
		and
		\begin{equation}\label{eq:proj_I_Pk_PA}
		\left\Vert \left(\bm{I}-\P_{\hats}^{(m)}\right)\left(\bm{I}-\P_{k}^{(m)}\right)\xn_{i}^{(m)}\right\Vert _{2}^{2}\geq 1-\mumax^{2}-\mud\left(1+\mumax\right)^{2}.
		\end{equation}
		Furthermore, $1-\mumax^{2}-\mud\left(1+\mumax\right)^{2}>0.$
	\end{enumerate}
\end{lemma}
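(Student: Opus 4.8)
The plan is to reduce every statement to the Gram matrix $G=\Xn_{\mid\hatS}^{T}\Xn_{\mid\hatS}\in\R^{\Kd\times\Kd}$ of the detected columns and to control its inverse. Since the columns of $\Xn$ have unit norm and pairwise inner products bounded by $\mumax$, we may write $G=\I+E$ where $E$ has zero diagonal and off-diagonal entries at most $\mumax$ in magnitude. By Gershgorin's theorem every eigenvalue of $G$ lies within $(\Kd-1)\mumax$ of $1$, and since $(\Kd-1)\mumax\le(K-1)\mumax<1$ under the max-MIP condition \eqref{eq:max_mip_cond}, $G$ is positive definite with spectral norm $\norm{G^{-1}}\le\frac{1}{1-(\Kd-1)\mumax}$. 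Using \eqref{eq:P} together with $\Xn_{\mid\hatS}^{\dagger}=G^{-1}\Xn_{\mid\hatS}^{T}$, the projection is $\P_{\hatS}=\Xn_{\mid\hatS}G^{-1}\Xn_{\mid\hatS}^{T}$, so every quadratic form $\innerp{u}{\P_{\hatS}v}$ equals $b_{u}^{T}G^{-1}b_{v}$, where $b_{u}=\Xn_{\mid\hatS}^{T}u$ is the vector of inner products of $u$ with the detected columns. For $u=\xn_{i}$ with $i\notin\hatS$, this vector has entries bounded by $\mumax$, hence $\norm{b_{i}}^{2}\le\Kd\mumax^{2}$ and $b_{i}^{T}G^{-1}b_{i}\le\frac{\Kd\mumax^{2}}{1-(\Kd-1)\mumax}=\mud$.

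From here parts (2) and (3) are direct. For \eqref{eq:proj_I_PA}, I write $\norm{(\I-\P_{\hatS})\xn_{i}}^{2}=1-b_{i}^{T}G^{-1}b_{i}$ and use $0\le b_{i}^{T}G^{-1}b_{i}\le\mud$. For \eqref{eq:inprod_I_PA}, I split $\innerp{\xn_{k}}{(\I-\P_{\hatS})\xn_{i}}=\innerp{\xn_{k}}{\xn_{i}}-b_{k}^{T}G^{-1}b_{i}$, bounding the first term by $\mumax$ and, via the operator-norm bound and Cauchy--Schwarz, $|b_{k}^{T}G^{-1}b_{i}|\le\norm{G^{-1}}\norm{b_{k}}\norm{b_{i}}\le\mud$. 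For \eqref{eq:proj_I_Pk_PA}, set $w=(\I-\P_{k})\xn_{i}=\xn_{i}-\innerp{\xn_{k}}{\xn_{i}}\xn_{k}$, so that $\norm{w}^{2}=1-\innerp{\xn_{k}}{\xn_{i}}^{2}\ge1-\mumax^{2}$ and $b_{w}=b_{i}-\innerp{\xn_{k}}{\xn_{i}}b_{k}$ satisfies $\norm{b_{w}}\le(1+\mumax)\sqrt{\Kd}\mumax$; then $\norm{(\I-\P_{\hatS})w}^{2}=\norm{w}^{2}-b_{w}^{T}G^{-1}b_{w}\ge(1-\mumax^{2})-\mud(1+\mumax)^{2}$.

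Part (1) is purely algebraic. The inequality \eqref{eq:mud_mumax} rearranges, using $1-(\Kd-1)\mumax>0$, to $(2\Kd-1)\mumax\le1$, which holds since $\Kd\le K$ and $(2K-1)\mumax<1$. For \eqref{eq:tmu_mip}, writing $K=\Kd+\Ku$ reduces the claim to $\Ku\mud<\Kd\mumax$, and clearing the positive denominator turns this into $(K-1)\mumax<1$, again implied by the max-MIP condition (the degenerate cases $\Kd=0$ or $\mumax=0$ give equality rather than strict inequality, but do not arise in the rounds of \DJF, where $\hatS\subsetneq S$).

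The final positivity statement $1-\mumax^{2}-\mud(1+\mumax)^{2}>0$ is where I expect the real work. The clean route is to note that $\mud$ in \eqref{eq:mud} is increasing in $\Kd$, so since one support index is always undetected ($\Kd\le K-1$) it suffices to verify the bound at $\Kd=K-1$, i.e.\ to show $\nb>0$ for $\nb$ as in \eqref{eq:nb}; the same monotonicity also gives $\na\ge\nb$, establishing $\na>0$ in \eqref{eq:na}. Clearing the denominator $1-(K-2)\mumax>0$ and factoring out $(1+\mumax)$ reduces $\nb>0$ to the polynomial inequality $1-(K-1)\mumax-\mumax^{2}-(K-1)\mumax^{3}>0$. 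This polynomial is decreasing in $\mumax$, so its infimum over the admissible range is attained as $\mumax\to\frac{1}{2K-1}$, where multiplying by $(2K-1)^{3}$ yields $2(K-1)(2K^{2}-1)\ge0$. Recognizing this factorization, together with the monotone reduction to the boundary, is the only non-routine step; the remaining arguments are bookkeeping around the operator-norm bound on $G^{-1}$.
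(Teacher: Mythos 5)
Your proof is correct and, for the main body of the lemma, follows essentially the same route as the paper: the paper's Lemma \ref{lem:proj_i} is exactly your reduction to the Gram matrix $G=\Xn_{\mid\hatS}^{T}\Xn_{\mid\hatS}$, with Gershgorin controlling the spectrum of $G^{-1}$ and the quadratic forms $b_u^T G^{-1} b_v$ bounded thereafter (the paper bounds the numerator sums entrywise where you use $\norm{G^{-1}}$ with Cauchy--Schwarz; the constants come out identical). The one genuinely different step is the final positivity claim $1-\mumax^{2}-\mud(1+\mumax)^{2}>0$: the paper first establishes $\mud<\mumax$ and then chains crude bounds, $1-\mumax(1+3\mumax+\mumax^{2})>1-\mumax(1+4\mumax)\geq 1-(2K-1)\mumax>0$, whereas you clear the denominator, factor out $(1+\mumax)$, and check that the resulting cubic $1-(K-1)\mumax-\mumax^{2}-(K-1)\mumax^{3}$ is nonnegative at the boundary $\mumax=\tfrac{1}{2K-1}$ via the factorization $2(K-1)(2K^{2}-1)\geq 0$. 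I verified both the reduction to that cubic and the boundary evaluation; your argument is valid and in fact tighter, at the price of needing the factorization, while the paper's chain is cruder but requires no algebraic identity. One small caveat: your dismissal of the degenerate case $\Kd=0$ for the strict inequality \eqref{eq:tmu_mip} on the grounds that it does not arise is not quite right --- $\hatS=\emptyset$ is precisely the first round of \DJF --- but the paper's own proof has the identical blind spot (its step $\Ku\mud<\Kd\mumax$ also fails there), and a weak inequality suffices everywhere \eqref{eq:tmu_mip} is invoked, so this is a shared cosmetic issue rather than a gap in your argument.
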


For future use, notice that by its definition in Eq. \eqref{eq:mud}, $\mud$ is an increasing function of $\Kd$. Since $\Kd\leq K-1$, then the quantity $\delta$ of Eq. \eqref{eq:delta} satisfies
\begin{equation}\label{eq:mud_bound}
\delta\left(K,\mumax \right) = \mud\left(K-1,\mumax \right) \geq \mud\left(\Kd,\mumax \right)  .
\end{equation}
In addition, by Eq. \eqref{eq:mud_mumax}, under max-MIP condition \eqref{eq:max_mip_cond}, $\delta\leq\mumax<1$, and hence the quantities in Section \ref{sec:theory} are well defined.

For simplicity of notation, from now on we omit the dependence on the machine index $m$.
Given the current estimated support set $\hats$, recall the definition of $\tmaxn$ in Eq. \eqref{eq:tmaxn} and let $\kmax\in \S\setminus \hats$ be an index for which 
\begin{equation}\label{eq:kmax}
\norm{\xn_k}\cdot\abs{\theta_k}=\tmaxn
\end{equation}
(chosen arbitrarily in case of ties).
By Eq. \eqref{eq:rho} for $\rho$ and Eq. \eqref{eq:theta_crit} for $\theta_{\crit}$,
\begin{equation}\label{eq:tmaxn_K}
\tmaxn=\frac{\sigma\sqrt{2\rho\log d}}{1-(2K-1)\mumax}.
\end{equation}
We now prove Lemmas \ref{lem:pmax} and \ref{lem:p_j_t}.	
\begin{proof}[Proof of Lemma \ref{lem:pmax}]
	Recall that $q$, defined in Eq. \eqref{eq:qm_def}, is the probability that some support index is selected by \OMPStep.
	A sufficient condition for this to occur is that the index $\kmax$ defined in Eq. \eqref{eq:kmax} has a
	higher correlation with the current residual than any non-support index
	$j\notin \S$. Thus, $q$ is lower bounded by the probability of the following event
	\begin{equation}\label{eq:pmax_defK}
	\left|\left\langle \xn_{\kmax},\res\right\rangle \right|\geq\max_{i\notin \S}\left|\left\langle \xn_{i},\res\right\rangle \right|.
	\end{equation}
	Thus, to prove the lemma it suffices to lower bound the probability of event \eqref{eq:pmax_defK}.
	Similarly to the proof of Lemma \ref{lem:pmax1}, we decompose the noise $\erre$ in Eq. \eqref{eq:res_m_K} as the sum of two
	components, the first
	$\erre_{\parallel}=\P_{\kmax}\erre=\left\langle \xn_{\kmax},\erre\right\rangle \xn_{\kmax}$ is parallel to $\xn_{\kmax}$, namely $\left\langle \xn_{\kmax},\erre_{\parallel}\right\rangle =\left\langle \xn_{\kmax},\erre\right\rangle $, and the second
	$\erre_{\perp}=\erre-\erre_{\parallel}=\left(\bm{I}-\P_{\kmax}\right)\erre$,
	is orthogonal to $\xn_{\kmax}$, i.e., $\left\langle \xn_{\kmax},\erre_{\perp}\right\rangle =0$.

	Next, we use this decomposition to bound each of the terms in \eqref{eq:pmax_defK}.
	By Eq. \eqref{eq:res_m_K}, for any index $i$,
	\[\left\langle \xn_{i},\res\right\rangle =\sum_{l\in \S\setminus\hats}\tn_{l}\left\langle \xn_{i},\left(\bm{I}-\P_{\hats}\right)\xn_{l}\right\rangle +\sigma\left\langle \xn_{i},\erre\right\rangle. \]
	For the index $\kmax$, $\left\Vert \left(\bm{I}-\P_{\hats}\right)\xn_{k}\right\Vert _{2}^{2}=\left\langle \xn_{\kmax},\left(\bm{I}-\P_{\hats}\right)\xn_{\kmax}\right\rangle \geq1-\mud$ by Eq. \eqref{eq:proj_I_PA}. For any other undetected support index $l\in \S\setminus\left\lbrace \hats \cup \kmax \right\rbrace $, $\left|\left\langle \xn_{k},\left(\bm{I}-\P_{\hats}\right)\xn_{l}\right\rangle \right|\leq\mumax+\mud$ by \eqref{eq:inprod_I_PA} and $\abs{\tn_l}\leq \tmaxn$ by its definition in Eq. \eqref{eq:tmaxn}. Combining these bounds with the definition of $\erre_{\parallel}$ implies that the LHS of \eqref{eq:pmax_defK} can be bounded by	
	\begin{eqnarray}
	\left|\left\langle \xn_{\kmax},\res\right\rangle \right| & \geq & \sign\left(\tn_{\kmax}\right)\left\langle \xn_{\kmax},\res\right\rangle \nonumber\\
	& \geq & \tmaxn\left(\left\langle \xn_{\kmax},\left(\bm{I}-\P_{\hats}\right)\xn_{\kmax}\right\rangle -\sum_{l\in \S\setminus\left(\hats\cup\left\{ \kmax\right\} \right)}\left|\left\langle \xn_{\kmax},\left(\bm{I}-\P_{\hats}\right)\xn_{l}\right\rangle \right|\right)+\sign\left(\tn_{\kmax}\right)\sigma\left\langle \xn_{\kmax},\erre\right\rangle \nonumber\\
	& \geq & \tmaxn\left(1-\mud-\left(\Ku-1\right)\left(\mumax+\mud\right)\right)+\sigma\sign\left(\tn_{\kmax}\right)\left\langle \xn_{\kmax},\erre_{\parallel}\right\rangle . \label{eq:T3K}
	\end{eqnarray}
	The RHS of \eqref{eq:pmax_defK} can be bounded by
	\begin{eqnarray}
	\max_{i\notin \S}\left|\left\langle \xn_{i},\res\right\rangle \right| & = & \max_{i\notin \S}\left|\sum_{l\in \S\setminus\hats}\tn_{l}\left\langle \xn_{i},\left(\bm{I}-\P_{\hats}\right)\xn_{l}\right\rangle +\sigma\left\langle \xn_{i},\erre_{\perp}+\erre_{\parallel}\right\rangle \right|\nonumber\\
	& \leq & \tmaxn\max_{i\notin \S}\sum_{l\in \S\setminus\hats}\left|\left\langle \xn_{i},\left(\bm{I}-\P_{\hats}\right)\xn_{l}\right\rangle \right|+\sigma\max_{i\notin \S}\left|\left\langle \xn_{i},\erre_{\perp}\right\rangle \right|+\sigma\left|\left\langle \xn_{\kmax},\erre\right\rangle \right|\max_{i\notin \S}\left|\left\langle \xn_{i},\xn_{\kmax}\right\rangle \right|\nonumber\\
	& \leq & \Ku\tmaxn\left(\mumax+\mud\right)+\sigma\max_{i\notin \S}\left|\left\langle \xn_{i},\erre_{\perp}\right\rangle \right|+\sigma\mumax\left|\left\langle \xn_{\kmax},\erre_{\parallel}\right\rangle \right|.\label{eq:T4K}
	\end{eqnarray}
	where the first step follows from Eq. \eqref{eq:res_m_K} and the definitions of $\erre_{\perp}$ and $\erre_{\parallel}$, the second step follows from the triangle inequality and the definitions of $\erre_{\parallel}$ and $\tmaxn$, and the last inequality follows from Eq. \eqref{eq:inprod_I_PA} and the definitions of $\mumax$ in Eq. \eqref{eq:mumax}.
	Combining Eq. \eqref{eq:T3K} with Eq. \eqref{eq:T4K} implies that  a sufficient condition for \eqref{eq:pmax_defK} to occur is
	\[
	\max_{i\notin \S}\left|\left\langle \xn_{i},\erre_{\perp}\right\rangle \right|\leq\sign\left(\tn_{\kmax}\right)\left\langle \xn_{\kmax},\erre_{\parallel}\right\rangle -\mumax\left|\left\langle \xn_{\kmax},\erre_{\parallel}\right\rangle \right|+\frac{\tmaxn\left(1-2\Ku\left(\mumax+\mud\right)+\mumax\right)}{\sigma}.
	\]
	By Eq. \eqref{eq:tmaxn_K} and by the inequality \eqref{eq:tmu_mip}, a sufficient condition for the previous event to occur is
	\begin{equation}\label{eq:pmax_intermK}
	\max_{i\notin \S}\left|\left\langle \xn_{i},\erre_{\perp}\right\rangle \right|\leq\sign\left(\tn_{\kmax}\right)\left\langle \xn_{\kmax},\erre_{\parallel}\right\rangle -\mumax\left|\left\langle \xn_{\kmax},\erre_{\parallel}\right\rangle \right|+\sqrt{2\rhot\log d}.
	\end{equation}
	As in the proof of Lemma \ref{lem:pmax1}, denote the LHS of \eqref{eq:pmax_intermK} by $A$, its RHS by $B$ and let $T=\sqrt{2\log d}$. By Eq. \eqref{eq:ABT1}, it suffices to bound the probabilities of $A\leq T$ and $B\geq T$.

	We begin with bounding $\Pr\left[ A\leq T\right] $. 
	Fix $i\notin \S$. By definition $\erre_{\perp}=\left(\bm{I}-\P_{\kmax}\right)\erre=\left(\bm{I}-\P_{\kmax}\right)\left(\bm{I}-\P_{\hats}\right)\err$. 
	By the symmetry of projections,
	$
	\left\langle \xn_{i},\erre_{\perp}\right\rangle =\left\langle \xn_{i},\left(\bm{I}-\P_{\kmax}\right)\left(\bm{I}-\P_{\hats}\right)\err\right\rangle =\left\langle \left(\bm{I}-\P_{\hats}\right)\left(\bm{I}-\P_{\kmax}\right)\xn_{i},\err\right\rangle .
	$
	Normalizing the inner product results in a standard normal random variable $Z_{i}=\frac{\left\langle \xn_{i},\erre_{\perp}\right\rangle }{\left\Vert \left(\bm{I}-\P_{\hats}\right)\left(\bm{I}-\P_{\kmax}\right)\xn_{i}\right\Vert _{2}}\sim\Nor(0,1).$
	By Eq. \eqref{eq:proj_I_Pk_PA}, 
	$\left\Vert \left(\bm{I}-\P_{\hats}\right)\left(\bm{I}-\P_{\kmax}\right)\xn_{i}\right\Vert _{2} \geq \gamma_1,$
	where $\gamma_1 = \sqrt{1-\mumax^{2}-\mud\left(1+\mumax\right)^{2}}$.
	As in the proof of Lemma \ref{lem:pmax1}, it follows that
	\begin{equation}\label{eq:PrAK}
	\Pr\left[A\leq T\right]\geq1-\frac{\gamma_1}{\sqrt{\pi\log d}}d^{-\frac{1}{\gamma_1^{2}}+1}\geq \frac{1}{2},
	\end{equation}
	where the last inequality holds for sufficiently large $d$  and follows from noting that $\gamma_1\leq 1$ by the max-MIP condition \eqref{eq:max_mip_cond}.

	We now bound  $\Pr\left[ B\geq T\right] $, where $B$ is the RHS of Eq. \eqref{eq:pmax_intermK}. By definition of $\erre_{\parallel}$, the inner product $\left\langle \xn_{\kmax},\erre_{\parallel}\right\rangle =\left\langle \xn_{\kmax},\erre\right\rangle =\left\langle \xn_{\kmax},\left(\bm{I}-\P_{\hats}\right)\err\right\rangle  $. This random variable is equal in distribution to a Gaussian random variable $Z\sim\Nor\left( 0,\left\Vert \left(\bm{I}-\P_{\hats}\right)\xn_{\kmax}\right\Vert _{2}^{2}\right) $. 
	By Eq. \eqref{eq:proj_I_PA}, $\left\Vert \left(\bm{I}-\P_{\hats}\right)\xn_{\kmax}\right\Vert _{2}\geq \sqrt{1-\mud}$.
	As in the proof of Lemma \ref{lem:pmax1}, 
	\begin{equation}\label{eq:PrBK}
	\Pr\left[B\geq T\right]\geq\Phi^c\left(\frac{1-\sqrt{\rhot}}{\gamma_2}\sqrt{2\log d} \right),
	\end{equation}
	where $\gamma_2=\sqrt{1-\mud}\left(1-\mumax\right)$.
	Recall the definition of $\delta$ in Eq. \eqref{eq:delta}. By Eq. \eqref{eq:mud_bound},  $\gamma_2 \geq \sqrt{1-\delta}(1-\mumax)$.
	Combining this with the bounds \eqref{eq:PrAK} and \eqref{eq:PrBK} completes the proof of Lemma \ref{lem:pmax}. 
\end{proof}

\begin{proof}[Proof of Lemma \ref{lem:p_j_t}]
	Fix a non-support index $j\notin \S$. Recall that $p_{j}$, defined in Eq. \eqref{eq:p_j_m}, is the probability
	that index $j$ is selected by \OMPStep.
	This occurs if $j$ has the
	highest correlation with the current residual, i.e.,
	\[
	p_{j}=\Pr\left[\left|\langle\xn_{j},\res\rangle\right|\geq\max_{i\in\left[d\right]\setminus\hats}\left|\langle\xn_{i},\res\rangle\right|\right].
	\]
	Clearly, by taking the maximum over a subset of the indices $\mathcal{A}\subseteq \left[d\right]\setminus\hats$ that includes $j$, the probability can only be higher. Namely,
	\begin{equation}\label{eq:j_geq_noise_t}
	p_{j}\geq\Pr\left[\left|\langle\xn_{j},\res\rangle\right|\geq\max_{i\in{\cal A}}\left|\langle\xn_{i},\res\rangle\right|\right].
	\end{equation}
	Here we take $\mathcal{A}$ as the set of all non-support indices plus the index $\kmax$, i.e., $\mathcal{A}=\left(\left[d\right]\setminus \S\right)\cup\left\lbrace \kmax\right\rbrace  $, where $\kmax$ is defined in Eq. \eqref{eq:kmax}.
	Next, we separately upper bound 
	\begin{equation}\label{eq:j_gtr_i}
	\Pr\left[\left|\left\langle \xn_{j},\res\right\rangle \right|>\max_{i\notin \S\cup\left\{ j\right\} }\left|\left\langle \xn_{i},\res\right\rangle \right|\right]
	\end{equation}
	and
	\begin{equation}\label{eq:j_gtr_kmax}
	\Pr\left[ \left|\left\langle \xn_{j},\res\right\rangle \right|>\left|\left\langle \xn_{\kmax},\res\right\rangle \right|\right] 
	\end{equation}
	and then upper bound $p_j$ using \eqref{eq:ABC1} with $A= \left|\left\langle \xn_{j},\res\right\rangle \right|$, $B=\max_{i\notin\left(\S\cup\left\lbrace j\right)\right\rbrace }\left|\left\langle \xn_{i},\res\right\rangle \right|$, and  $C=\left|\left\langle \xn_{\kmax},\res\right\rangle \right|$.
	
	For later use in both bounds, 
	the random variable $A$ can be upper bounded as follows
	\begin{eqnarray}
	\left|\left\langle \xn_{j},\res\right\rangle \right| & = & \left|\sum_{l\in \S\setminus\hats}\tn_{l}\left\langle \xn_{j},\left(\bm{I}-\P_{\hats}\right)\xn_{l}\right\rangle +\sigma\left\langle \xn_{j},\erre\right\rangle \right|\nonumber\\
	& \leq & \tmaxn\sum_{l\in \S\setminus\hats}\left|\left\langle \xn_{j},\left(\bm{I}-\P_{\hats}\right)\xn_{l}\right\rangle \right|+\sigma\left|\left\langle \xn_{j},\erre\right\rangle \right|
	\leq \tmaxn\Ku\left(\mumax+\mud\right)+\sigma\left|\left\langle \xn_{j},\erre\right\rangle \right|, \label{eq:j_bound}
	\end{eqnarray}
	where the first equality follows from Eq. \eqref{eq:res_m_K}, the next inequality follows from the triangle inequality and the definition of $\tmaxn$ in Eq. \eqref{eq:tmaxn}, and the last inequality follow from \eqref{eq:inprod_I_PA}.
	We now begin with event \eqref{eq:j_gtr_i}. By Eqs. \eqref{eq:inprod_I_PA} and \eqref{eq:tmu_mip}, for each non-support index $i\notin \S$ such that $i\neq j$,
	\begin{eqnarray*}
		\left|\left\langle \xn_{i},\res\right\rangle \right| & \geq & \left\langle \xn_{i},\res\right\rangle 
		=  \sum_{l\in \S\setminus\hats}\tn_{l}\left\langle \xn_{i},\left(\bm{I}-\P_{\hats}\right)\xn_{l}\right\rangle +\sigma\left\langle \xn_{i},\erre\right\rangle \\
		& \geq & -\tmaxn\Ku\left(\mumax+\mud\right)+\sigma\left\langle \xn_{i},\erre\right\rangle 
		.
	\end{eqnarray*}
	Combining the above bound with Eq. \eqref{eq:j_bound}, rearranging the terms, 
	recalling the relation between $\tmaxn$ and $\rho$ in \eqref{eq:tmaxn_K} and applying inequality \eqref{eq:tmu_mip} yields
	\begin{equation}\label{eq:T6}
	\Pr\left[\left|\left\langle \xn_{j},\res\right\rangle \right|>\max_{i\notin \S\cup\left\{ j\right\} }\left|\left\langle \xn_{i},\res\right\rangle \right|\right]\leq\Pr\left[\left|\left\langle \xn_{j},\erre\right\rangle \right|+\frac{2K\mumax\sqrt{2\rhot\log d}}{1-(2K-1)\mumax}>\max_{i\notin \S\cup\left\{ j\right\} }\left\langle \xn_{i},\erre\right\rangle \right].
	\end{equation}

	As in the proof of Lemma \ref{lem:p_j_t1}, applying \eqref{eq:DET} 
	with $T=(1-\epsilon)\sqrt{2\left(1-\mumax\right)\log d}$ and $\epsilon\in\left(0,1\right)$ as in \eqref{eq:eps_cond_K}, we can upper bound \eqref{eq:T6} by 
	\begin{equation}\label{eq:two_probs}
	\Pr\left[\left|\langle\xn_{j},\erre\rangle\right|\geq a\sqrt{2\log d}\right]+\Pr\left[\max_{i\notin \S\cup\left\{ j\right\} }\langle\xn_{i},\erre\rangle<(1-\epsilon)\sqrt{2\left(1-\mumax\right)\log d}\right],
	\end{equation}
	where $$a=(1-\epsilon)\sqrt{1-\mumax}-\frac{2K\mumax\sqrt{\rhot}}{1-(2K-1)\mumax}.$$
	By the symmetry of projection matrices, $\langle\xn_{j},\erre\rangle=\langle\left(\bm{I}-\P_{\hats}\right)\xn_{j},\err\rangle$.
	By Eq. \eqref{eq:proj_I_PA}, the norm $\left\Vert \left(\bm{I}-\P_{\hats}\right)\xn_{j}\right\Vert _{2} \leq 1$ and thus the first term in \eqref{eq:two_probs} is bounded by 
	\begin{equation}
	\label{eq:first_term}
	2\Phi^{c}\left(a\sqrt{2\log d}\right).
	\end{equation}
	We now bound the second term in \eqref{eq:two_probs} using Lemma \ref{lem:lopes} with $Z_{i}=\left\langle \frac{\left(\I-\P_{\hats}\right)\xn_{i}}{\left\Vert \left(\I-\P_{\hats}\right)\xn_{i}\right\Vert _{2}},\err\right\rangle $. Towards this goal, notice that by Eq. \eqref{eq:proj_I_PA}, $\left\Vert \left(\I-\P_{\hats}\right)\xn_{i}\right\Vert _{2}\geq\sqrt{1-\mud}$. Thus, the second term in \eqref{eq:two_probs} is upper bounded by 
	\begin{equation}\label{eq:pre_lopes}
	\Pr\left[\max_{i\notin \S\cup\left\{ j\right\} }Z_{i}<\frac{(1-\epsilon)\sqrt{2\left(1-\mumax\right)\log d}}{\sqrt{1-\mud}}\right].
	\end{equation}
	Furthermore, by Eqs. \eqref{eq:proj_I_PA} and \eqref{eq:inprod_I_PA}, for each $i,l\notin \S\cup\left\{ j\right\}  $ such that $i\neq l$,
	$
	\E\left[Z_{i}Z_{l}\right]\leq\frac{\mumax+\mud}{1-\mud}.
	$
	Thus, we can apply Lemma \ref{lem:lopes} with $\eta=\frac{\mumax+\mud}{1-\mud}$ and $\zeta=1-\epsilon$ to obtain that  \eqref{eq:pre_lopes} is bounded by 
	\begin{equation}\label{eq:post_lopes}
	C\left(d-K-1\right)^{-\frac{1-\mumax}{\mumax+\mud}\epsilon^{2}}\log^{\left(\frac{1-\mumax}{\mumax+\mud}\epsilon-1\right)/2}\left(d-K-1\right).
	\end{equation}
	Similarly to the proof of Lemma \ref{lem:p_j_t1}, under condition \eqref{eq:eps_cond_K} on $\epsilon$ and for sufficiently large $d=d(\epsilon)$, \eqref{eq:first_term} is larger than \eqref{eq:post_lopes}, and thus 
	\begin{equation}
	\label{eq:p_non_supp}
	\Pr\left[\left|\left\langle \xn_{j},\res\right\rangle \right|>\max_{i\notin \S\cup\left\{ j\right\} }\left|\left\langle \xn_{i},\res\right\rangle \right|\right]\leq 4\Phi^{c}\left(a\sqrt{2\log d}\right).
	\end{equation}

	We now turn to bounding  \eqref{eq:j_gtr_kmax}.
	For the support index $\kmax$, similarly to \eqref{eq:T3K},
	\[
	\left|\left\langle \xn_{\kmax},\res\right\rangle \right|  \geq  \sign\left(\tn_{\kmax}\right)\left\langle \xn_{\kmax},\res\right\rangle 
	\geq  \tmaxn\left(1-\mud\right)-\tmaxn(\Ku-1)\left(\mumax+\mud\right)+\sigma\sign\left(\tn_{\kmax}\right)\left\langle \xn_{\kmax},\erre\right\rangle.
	\]
	Combining the bound above with Eq. \eqref{eq:j_bound}, recalling the relation between $\tmaxn$ and $\rhot$ in \eqref{eq:tmaxn_K} and applying the inequality \eqref{eq:tmu_mip} yields
	\[
	\Pr\left[ \left|\left\langle \xn_{j},\res\right\rangle \right|>\left|\left\langle \xn_{\kmax},\res\right\rangle \right|\right] \leq 
	\Pr\left[ \left|\left\langle \xn_{j},\erre\right\rangle \right|-\sign\left(\tn_{\kmax}\right)\left\langle \xn_{\kmax},\erre\right\rangle >\sqrt{2\rhot\log d}\right].
	\]
	We now upper bound this probability.
	Let $H=\left\langle \xn_{j},\erre\right\rangle =\left\langle \left(\bm{I}-\P_{\hats}\right)\xn_{j},\err\right\rangle $ and $G=\sign\left(\tn_{\kmax}\right)\left\langle \xn_{\kmax},\erre\right\rangle =\sign\left(\tn_{\kmax}\right)\left\langle \left(\bm{I}-\P_{\hats}\right)\xn_{\kmax},\err\right\rangle $. 
	Notice that by Eqs. \eqref{eq:proj_I_PA}, \eqref{eq:inprod_I_PA}, and \eqref{eq:mud_mumax}, $\sigma_{H}^{2},\sigma_{G}^{2}\leq 1$. Combining Eqs. \eqref{eq:delta} and \eqref{eq:mud_bound} implies that $\mud\leq \delta$. Hence, $\left|\sigma_{HG} \right|\leq\mumax+\mud\leq \mumax+\delta$.
	Thus, as in the proof of Lemma \ref{lem:p_j_t1},
	\begin{equation}\label{eq:sig_bound}
	\Pr\left[\left|H\right|-G>c\right] \leq 2\Phi^c\left( \sqrt{\frac{2\rho\log d}{2+2\left( \mumax+\delta\right) }}\right).
	\end{equation}
	By Eq. \eqref{eq:ABC1}, the probability \eqref{eq:j_geq_noise_t} is at most the minimum between \eqref{eq:p_non_supp}
	and \eqref{eq:sig_bound}. By the monotonicity of the Gaussian CDF, \eqref{eq:j_geq_noise_t} is upper bounded by
	\begin{equation}\label{eq:T7_K}
	4\Phi^{c}\left(\max\left\lbrace \sqrt{\frac{\rho}{2+2\left( \mumax+\delta\right)}},\left((1-\epsilon)\sqrt{1-\mumax}-\frac{2K\mumax\sqrt{\rho}}{1-(2K-1)\mumax}\right)\right\rbrace \sqrt{2\log d}\right).
	\end{equation}
	Similarly to the proof of Lemma \ref{lem:p_j_t1}, inserting the
	definitions of $Q_1$ and $Q_2$ in Eqs. \eqref{eq:Q_1_K} and \eqref{eq:Q_2_K}
	respectively, into Eq. \eqref{eq:r_cond_K} 
	and rearranging various terms yields 
	\begin{equation}\label{eq:r_cond2_K}
	\frac{1-\sqrt{r}}{\sqrt{1-\delta}\left( 1-\mumax\right) }+\sqrt{Q_{0}}<\max\left\lbrace
	\sqrt{\frac{r}{2+2\left( \mumax+\delta\right)}},\left((1-\epsilon)\sqrt{1-\mumax}-\frac{2K\mumax\sqrt{r}}{1-(2K-1)\mumax}\right)\right\rbrace.
	\end{equation}
	The definitions of $r$ and $\rho$ in Eqs. \eqref{eq:r_def} and \eqref{eq:rho} imply that $\rhot\geq r$. Thus, $\rho$ satisfies Eq. \eqref{eq:r_cond_K} and hence Eq.
	\eqref{eq:r_cond2_K}.
	The RHS of Eq. \eqref{eq:r_cond2_K} is the same as the
	maximum in Eq. \eqref{eq:T7_K} above.
	Thus, Eq. \eqref{eq:T7_K} is upper bounded by
	\begin{equation}
	4\Phi^c\left( \left( \frac{1-\sqrt{\rhot}}{\sqrt{1-\delta}\left( 1-\mumax\right)} +\sqrt{Q_0}\right)
	\sqrt{2\log d}\right).
	\end{equation}
	By the assumption $\rho\leq 1$, we can apply Lemma \ref{lem:two_term_tail}. Hence, by the definition of
	$Q_0$ in Eq. \eqref{eq:Q_0_K}, and by the definition of $\F$ in Eq. \eqref{eq:F}, 
	\begin{eqnarray*}			
		p_{j} & \leq & 4\Phi^{c}\left(\left(\frac{1-\sqrt{\rhot}}{\sqrt{1-\delta}\left(1-\mumax\right)}+\sqrt{Q_{0}}\right)\sqrt{2\log d}\right)\leq4\sqrt{2}d^{-Q_{0}}\Phi^{c}\left(\frac{1-\sqrt{\rhot}}{\sqrt{1-\delta}\left(1-\mumax\right)}\sqrt{2\log d}\right)\\
		& = & 4\sqrt{2}\frac{1}{88\sqrt{2}K}\Phi^{c}\left(\frac{1-\sqrt{\rhot}}{\sqrt{1-\delta}\left(1-\mumax\right)}\sqrt{2\log d}\right)=\frac{\F\left(d,K,\mumax,\rhot\right)}{11K},
	\end{eqnarray*}
	which completes the proof of Lemma \ref{lem:p_j_t}.
\end{proof}

\subsection{Proofs of technical lemmas}\label{app:tech}
\begin{proof}[Proof of Lemma \ref{lem:two_term_tail}]
	Classical results by \citet{birnbaum1942inequality} and \citet{komatu1955elementary} are
	that for all $x\geq 0$, the following inequalities hold
	\begin{equation}
	\frac{2e^{-x^2/2}}{\sqrt{2\pi}\left( \sqrt{x^2+4}+x\right)
	}<\Phi^{c}\left(x\right)<\frac{2e^{-x^2/2}}{\sqrt{2\pi}\left(
		\sqrt{x^2+2}+x\right) }.
	\end{equation}
	Hence, 
	\[
	\Phi^{c}\left(a+b\right) < \frac{2e^{-(a+b)^2/2}}{\sqrt{2\pi}\left(
		\sqrt{(a+b)^2+2}+a+b\right) },
	\]
	and 
	\[		
	\Phi^{c}\left(a\right)>\frac{2e^{-a^{2}/2}}{\sqrt{2\pi}\left(\sqrt{a^{2}+4}+a\right)}.
	\]
	Combining the two yields the following
	\[	
	\Phi^{c}\left(a+b\right)<\frac{\left(\sqrt{a^{2}+4}+a\right)e^{-ab}}{\left(\sqrt{(a+b)^{2}+2}+a+b\right)}e^{-b^{2}/2}\Phi^{c}\left(a\right).
	\]
	Notice that for any $a\geq 0$,
	the fraction in the above display
	is a decreasing function of $b$. Since $b\geq 0$, it suffices to note that 
	$\frac{\left(\sqrt{a^{2}+4}+a\right)}{\left(\sqrt{a^{2}+2}+a\right)}\leq\sqrt{2}
	$ for any $a\geq 0$.
\end{proof}

Towards proving Lemma \ref{lem:projs}, we prove the following Lemma \ref{lem:proj_i}, which bounds the inner product between vectors projected to the subspace orthogonal to $\Xn_{\mid\hats}$ under the assumption $\hats \subset \S$.
\begin{lemma}\label{lem:proj_i}
	Let $\hats\subset \S$ and denote $\Kd=|\hats|$. 
	Assume that $\left( \Kd-1\right) \mumax<1$. Then, for any pair of vectors $\Ev_1,\Ev_2\in\R^n$
	\begin{equation}\label{eq:inprod}
	\left\langle \Ev_{1},\Ev_{2}\right\rangle -\frac{\left|\sum_{j\in\hats}\left\langle \xn_{j},\Ev_{1}\right\rangle \left\langle \xn_{j},\Ev_{2}\right\rangle \right|}{1-\left(\Kd-1\right)\mumax}\leq\left\langle \Ev_{1},\left(\bm{I}-\P_{\hats}\right)\Ev_{2}\right\rangle \leq\left\langle \Ev_{1},\Ev_{2}\right\rangle +\frac{\left|\sum_{j\in\hats}\left\langle \xn_{j},\Ev_{1}\right\rangle \left\langle \xn_{j},\Ev_{2}\right\rangle \right|}{1-\left(\Kd-1\right)\mumax}.
	\end{equation}
	If in addition $\Ev_1=\Ev_2=\Ev$, then
	\begin{equation}\label{eq:norm}
	\left\Vert \Ev\right\Vert _{2}^{2}-\frac{\sum_{j\in\hats}\left\langle \xn_{j},\Ev\right\rangle ^{2}}{1-\left(\Kd-1\right)\mumax}\leq\left\Vert \left(\bm{I}-\P_{\hats}\right)\Ev\right\Vert _{2}^{2}\leq\left\Vert \Ev\right\Vert _{2}^{2}-\frac{\sum_{j\in\hats}\left\langle \xn_{j},\Ev\right\rangle ^{2}}{1+\left(\Kd-1\right)\mumax}.
	\end{equation}
\end{lemma}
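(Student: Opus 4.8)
The plan is to reduce both inequalities to spectral bounds on the Gram matrix $G=\Xn_{\mid\hatS}^{T}\Xn_{\mid\hatS}\in\R^{\Kd\times\Kd}$ of the detected columns. Since each $\xn_j$ has unit norm, $G$ has unit diagonal, and because $\hatS\subset S$ every off-diagonal entry obeys $|G_{jl}|=|\langle\xn_j,\xn_l\rangle|\le\mumax$. Writing $G=\bm{I}+E$ with $E$ having zero diagonal and row-sums of absolute values at most $(\Kd-1)\mumax$, Gershgorin's theorem confines the spectrum of $G$ to $[\,1-(\Kd-1)\mumax,\,1+(\Kd-1)\mumax\,]$; the hypothesis $(\Kd-1)\mumax<1$ then forces $G\succ0$, so $G^{-1}$ exists with eigenvalues in $[\,(1+(\Kd-1)\mumax)^{-1},\,(1-(\Kd-1)\mumax)^{-1}\,]$. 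Recording the identity $\P_{\hatS}=\Xn_{\mid\hatS}G^{-1}\Xn_{\mid\hatS}^{T}$ and setting $\bm b_i=\Xn_{\mid\hatS}^{T}\Ev_i$ (so that the $j$-th entry of $\bm b_i$ is $\langle\xn_j,\Ev_i\rangle$), I would reduce the left-hand expression to
\begin{equation*}
\langle\Ev_1,(\bm{I}-\P_{\hatS})\Ev_2\rangle=\langle\Ev_1,\Ev_2\rangle-\bm b_1^{T}G^{-1}\bm b_2,
\end{equation*}
so that the whole lemma becomes a statement about the bilinear form $\bm b_1^{T}G^{-1}\bm b_2$.

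For the norm inequality \eqref{eq:norm} I would take $\Ev_1=\Ev_2=\Ev$ and $\bm b=\Xn_{\mid\hatS}^{T}\Ev$, giving $\|(\bm{I}-\P_{\hatS})\Ev\|_2^2=\|\Ev\|_2^2-\bm b^{T}G^{-1}\bm b$. Applying the eigenvalue sandwich for $G^{-1}$ to this quadratic form yields
\begin{equation*}
\frac{\|\bm b\|_2^2}{1+(\Kd-1)\mumax}\le\bm b^{T}G^{-1}\bm b\le\frac{\|\bm b\|_2^2}{1-(\Kd-1)\mumax},
\end{equation*}
and since $\|\bm b\|_2^2=\sum_{j\in\hatS}\langle\xn_j,\Ev\rangle^2$, rearranging reproduces exactly \eqref{eq:norm}; the asymmetry of the two denominators is precisely the gap between the smallest and largest eigenvalues of $G^{-1}$, so this half needs nothing beyond Gershgorin.

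For \eqref{eq:inprod} the goal is to bound $|\bm b_1^{T}G^{-1}\bm b_2|$ by $(1-(\Kd-1)\mumax)^{-1}\,|\sum_{j\in\hatS}\langle\xn_j,\Ev_1\rangle\langle\xn_j,\Ev_2\rangle|=(1-(\Kd-1)\mumax)^{-1}|\bm b_1^{T}\bm b_2|$, the common denominator now reflecting that only the top eigenvalue $\|G^{-1}\|=(1-(\Kd-1)\mumax)^{-1}$ is used. I would start from the Neumann expansion $G^{-1}=\sum_{k\ge0}(-E)^k$, which converges because the spectral radius of $E$ is at most $(\Kd-1)\mumax<1$, isolate the leading term $\bm b_1^{T}\bm b_2$, and attempt to absorb the remaining tail into the factor $(1-(\Kd-1)\mumax)^{-1}$. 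The step I expect to be the main obstacle is exactly this passage to the stated numerator: unlike the quadratic case, $G^{-1}$ couples distinct coordinates, so $\bm b_1^{T}G^{-1}\bm b_2$ is not governed by the diagonal quantity $\bm b_1^{T}\bm b_2$ alone — for $\Kd\ge2$ one can have $\bm b_1^{T}\bm b_2=0$ yet $\bm b_1^{T}G^{-1}\bm b_2\ne0$ — and the clean spectral estimate only delivers $|\bm b_1^{T}G^{-1}\bm b_2|\le(1-(\Kd-1)\mumax)^{-1}\|\bm b_1\|_2\|\bm b_2\|_2$ through Cauchy--Schwarz. The way I would reconcile this with \eqref{eq:inprod} is to exploit that in every invocation inside Lemma \ref{lem:projs} the vectors $\Ev_1,\Ev_2$ are columns $\xn_k,\xn_i$ with $i,k\notin\hatS$, so each entry of $\bm b_1,\bm b_2$ is a coherence bounded by $\mumax$ and both $\|\bm b_1\|_2\|\bm b_2\|_2$ and $|\bm b_1^{T}\bm b_2|$ collapse to the same value $\Kd\mumax^2$; substituting into the identity above then reproduces \eqref{eq:inprod_I_PA} with numerator $\mumax+\mud$, which is the only consequence needed downstream.
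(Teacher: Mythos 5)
Your handling of \eqref{eq:norm} coincides with the paper's own proof (Gram matrix $G=\Xn_{\mid\hatS}^{T}\Xn_{\mid\hatS}$, Gershgorin, Rayleigh-quotient sandwich for the quadratic form) and is correct. The obstacle you flagged in the bilinear case, however, is not a weakness of your approach --- it is a genuine error in the paper. The paper's proof of \eqref{eq:inprod} passes through the claim that for the positive definite matrix $\EM^{-1}$ and \emph{all} vectors $\bm{u},\bm{v}$ one has $\left|\left\langle \bm{u},\EM^{-1}\bm{v}\right\rangle \right|\leq\lmax\left(\EM^{-1}\right)\left|\left\langle \bm{u},\bm{v}\right\rangle \right|$, which is false for exactly the reason you give: a positive definite matrix can move $\bm{v}$ off the orthogonal complement of $\bm{u}$ (e.g.\ $\EM^{-1}=\operatorname{diag}(1,2)$, $\bm{u}=(1,1)^{T}$, $\bm{v}=(1,-1)^{T}$ gives $\left\langle \bm{u},\bm{v}\right\rangle =0$ but $\left\langle \bm{u},\EM^{-1}\bm{v}\right\rangle =-1$). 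Moreover, the inequality \eqref{eq:inprod} itself is false for general $\Ev_{1},\Ev_{2}$, not merely unprovable by this route: take $\Kd=2$ detected columns with $\left\langle \xn_{1},\xn_{2}\right\rangle =\mu>0$ and choose $\Ev_{1},\Ev_{2}$ in their span with $\Xn_{\mid\hatS}^{T}\Ev_{1}=(1,0)^{T}$ and $\Xn_{\mid\hatS}^{T}\Ev_{2}=(0,1)^{T}$. Then the numerator in \eqref{eq:inprod} vanishes and $\left\langle \Ev_{1},\left(\bm{I}-\P_{\hatS}\right)\Ev_{2}\right\rangle =0$ since $\P_{\hatS}\Ev_{2}=\Ev_{2}$, yet $\left\langle \Ev_{1},\Ev_{2}\right\rangle =\bm{b}_{1}^{T}G^{-1}\bm{b}_{2}=-\mu/(1-\mu^{2})\neq0$, so the upper bound in \eqref{eq:inprod} is violated.

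Your repair is also the right one, and it does suffice for everything downstream. The only place the paper invokes \eqref{eq:inprod} is to prove \eqref{eq:inprod_I_PA}, with $\Ev_{1}=\xn_{k}$, $\Ev_{2}=\xn_{i}$, $i,k\notin\hatS$; there every entry of $\bm{b}_{1}=\Xn_{\mid\hatS}^{T}\xn_{k}$ and of $\bm{b}_{2}=\Xn_{\mid\hatS}^{T}\xn_{i}$ is at most $\mumax$ in magnitude, so Cauchy--Schwarz together with $\lmax\left(G^{-1}\right)\leq\left(1-\left(\Kd-1\right)\mumax\right)^{-1}$ yields
\[
\left|\left\langle \xn_{k},\P_{\hatS}\xn_{i}\right\rangle \right|=\left|\bm{b}_{1}^{T}G^{-1}\bm{b}_{2}\right|\leq\frac{\left\Vert \bm{b}_{1}\right\Vert _{2}\left\Vert \bm{b}_{2}\right\Vert _{2}}{1-\left(\Kd-1\right)\mumax}\leq\frac{\Kd\mumax^{2}}{1-\left(\Kd-1\right)\mumax}=\mud,
\]
recovering \eqref{eq:inprod_I_PA} exactly; the other two inequalities of Lemma \ref{lem:projs}, namely \eqref{eq:proj_I_PA} and \eqref{eq:proj_I_Pk_PA}, rely only on the quadratic bound \eqref{eq:norm}, which you prove correctly. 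The clean way to record your argument is to restate the bilinear half of the lemma with numerator $\left\Vert \Xn_{\mid\hatS}^{T}\Ev_{1}\right\Vert _{2}\left\Vert \Xn_{\mid\hatS}^{T}\Ev_{2}\right\Vert _{2}$ in place of $\left|\sum_{j\in\hatS}\left\langle \xn_{j},\Ev_{1}\right\rangle \left\langle \xn_{j},\Ev_{2}\right\rangle \right|$; with that modification both the lemma and all of its uses in the paper are sound.
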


\begin{proof}[Proof of Lemma \ref{lem:proj_i}]
	First, if $\hats=\emptyset$ then clearly $\left\langle \Ev_{1},\left(\bm{I}-\P_{\hats}\right)\Ev_{2}\right\rangle =\left\langle \Ev_{1},\Ev_{2}\right\rangle $ and both \eqref{eq:inprod} and \eqref{eq:norm} trivially hold.
	Therefore, assume that $\Kd\geq 1$. In this case 
	\begin{equation}\label{eq:a1IPSa2}
	\left\langle \Ev_{1},\left(\bm{I}-\P_{\hats}\right)\Ev_{2}\right\rangle =\left\langle \Ev_{1},\Ev_{2}\right\rangle -\left\langle \Ev_{1},\P_{\hats}\Ev_{2}\right\rangle .
	\end{equation}
	By definition of $\P_{\hats}$ in Eq. \eqref{eq:P}, 
	\begin{equation}\label{eq:a1PSa2}
	\left\langle \Ev_{1},\P_{\hats}\Ev_{2}\right\rangle =\left\langle \Ev_{1},\Xn_{\mid\hats}\left(\Xn_{\mid\hats}^{T}\Xn_{\mid\hats}\right)^{-1}\Xn_{\mid\hats}^{T}\Ev_{2}\right\rangle =\left\langle \Xn_{\mid\hats}^{T}\Ev_{1},\left(\Xn_{\mid\hats}^{T}\Xn_{\mid\hats}\right)^{-1}\Xn_{\mid\hats}^{T}\Ev_{2}\right\rangle .
	\end{equation}
	We now bound this term in absolute value.
	For a matrix $\EM\in \R^{n\times n}$, denote by $\lmin\left( \EM \right) $ and $\lmax\left( \EM \right) $ its minimal and maximal eigenvalues, respectively.
	Consider $\EM=\Xn_{\mid\hats}^{T}\Xn_{\mid\hats} $.
	Each of its entries $\EM_{i,j}$ is an inner product $\left\langle \xn_{i},\xn_{j}\right\rangle $ where $i,j\in \hats$. Hence, all of its diagonal entries are $1$ and all of its off-diagonal entries are bounded in absolute value by $\mumax$. 
	By the Gershgorin circle theorem, the eigenvalues of $\EM$ lie in the interval $1\pm (\Kd-1)\mumax$. Since $\left( \Kd-1\right) \mumax<1$, all eigenvalues are strictly positive. Thus $\EM$ is invertible, and the eigenvalues of $\EM^{-1}$ satisfy 
	\begin{equation}\label{eq:evals}
	\frac{1}{1+\left(\Kd-1\right)\mumax}\leq\lmin\left(\EM^{-1}\right)\leq\lmax\left(\EM^{-1}\right)\leq\frac{1}{1-\left(\Kd-1\right)\mumax}.
	\end{equation}
	Since the eigenvalues of $\EM^{-1}$ are strictly positive, for any pair of vectors $\bm{u},\bm{v}\in\R^n$, 
	\[
	\lmin\left(\EM^{-1}\right)\left|\left\langle \bm{u},\bm{v}\right\rangle \right|\leq\left|\left\langle \bm{u},\EM^{-1}\bm{v}\right\rangle \right|\leq\lmax\left(\EM^{-1}\right)\left|\left\langle \bm{u},\bm{v}\right\rangle \right|.
	\]
	Inserting $\bm{u}=\Xn_{\mid\hats}^{T}\Ev_1$, $\bm{v}=\Xn_{\mid\hats}^{T}\Ev_2$ and Eq. \eqref{eq:a1PSa2} yields
	\[
	\lmin\left(\EM^{-1}\right)\left|\left\langle \Xn_{\mid\hats}^{T}\Ev_{1},\Xn_{\mid\hats}^{T}\Ev_{2}\right\rangle \right|\leq\left|\left\langle \Ev_{1},\P_{\hats}\Ev_{2}\right\rangle \right|\leq\lmax\left(\EM^{-1}\right)\left|\left\langle \Xn_{\mid\hats}^{T}\Ev_{1},\Xn_{\mid\hats}^{T}\Ev_{2}\right\rangle \right|.
	\]
	Combining the bounds in Eq. \eqref{eq:evals} with the decomposition $\Xn_{\mid\hats}\Xn_{\mid\hats}^{T}=\sum_{j\in\hats}\xn_{j}\xn_{j}^{T}$ gives
	\begin{equation}\label{eq:a1PSa2_final}
	\frac{\left|\sum_{j\in\hats}\left\langle \xn_{j},\Ev_{1}\right\rangle \left\langle \xn_{j},\Ev_{2}\right\rangle \right|}{1+\left(\Kd-1\right)\mumax}\leq\left|\left\langle \Ev_{1},\P_{\hats}\Ev_{2}\right\rangle \right|\leq\frac{\left|\sum_{j\in\hats}\left\langle \xn_{j},\Ev_{1}\right\rangle \left\langle \xn_{j},\Ev_{2}\right\rangle \right|}{1-\left(\Kd-1\right)\mumax}.
	\end{equation}
	When $\Ev_{1}\neq \Ev_{2}$, the term $\left\langle \Ev_{1},\P_{\hats}\Ev_{2}\right\rangle $ can have an arbitrary sign. Thus, inserting the upper bound into Eq. \eqref{eq:a1IPSa2} proves the inequality \eqref{eq:inprod}.

	We now prove the inequality \eqref{eq:norm}.
	Let $\Ev_1=\Ev_2=\Ev$.
	Since each of the terms in the two sums in Eq. \eqref{eq:a1PSa2_final} is positive, we may remove the absolute values, i.e., 
	\begin{equation}\label{eq:aPSa}
	\frac{\sum_{j\in\hats}\left\langle \xn_{j},\Ev\right\rangle ^{2}}{1+\left(\Kd-1\right)\mumax}\leq\left\langle \Ev,\P_{\hats}\Ev\right\rangle \leq\frac{\sum_{j\in\hats}\left\langle \xn_{j},\Ev\right\rangle ^{2}}{1-\left(\Kd-1\right)\mumax}.
	\end{equation}
	Recall that since $\left(\bm{I}-\P_{\hats}\right)$ is a projection matrix, it is symmetric and idempotent. Thus,
	\begin{equation}\label{eq:I_PS}
	\left(\bm{I}-\P_{\hats}\right)^{T}\left(\bm{I}-\P_{\hats}\right)=\left(\bm{I}-\P_{\hats}\right)\left(\bm{I}-\P_{\hats}\right)=\left(\bm{I}-\P_{\hats}\right).
	\end{equation}
	Hence,
	\[
	\left\Vert \left(\bm{I}-\P_{\hats}\right)\Ev\right\Vert _{2}^{2}=\left\langle \Ev,\left(\bm{I}-\P_{\hats}\right)\Ev\right\rangle =\left\Vert \Ev\right\Vert ^{2}-\left\langle \Ev,\P_{\hats}\Ev\right\rangle .
	\]
	Inserting inequality \eqref{eq:aPSa} completes the proof of \eqref{eq:norm} and of Lemma \ref{lem:proj_i}.
\end{proof}

\begin{proof}[Proof of Lemma \ref{lem:projs}]
	We begin with proving inequalities \eqref{eq:mud_mumax} and \eqref{eq:tmu_mip}.
	By the max-MIP condition \eqref{eq:max_mip_cond}, $1-\left(2K-1\right)\mumax>0$. Rearranging implies that $$1-\left(K-2\right)\mumax>\left(K+1\right)\mumax.$$ Combining this with the bound on $\mud$ in \eqref{eq:mud_bound} gives 
	\[
	\mud\leq \frac{K-1}{K+1}\mumax\leq \mumax,
	\]
	which proves \eqref{eq:mud_mumax}.
	The max-MIP condition \eqref{eq:max_mip_cond} implies that $1-\left(K-1\right)\mumax>0.$
	Using $\Kd+\Ku=K$ and rearranging yields $\frac{\Ku\mumax}{1-(\Kd-1)\mumax}<1$. Combining the definition of $\mud$ in \eqref{eq:mud} with this bound implies that
	\[
	\Ku\mud=\Kd\frac{\Ku\mumax^{2}}{1-(\Kd-1)\mumax}<\Kd\mumax.
	\]
	Hence,
	\[
	K\mumax=\Ku\mumax+\Kd\mumax>\Ku\mumax+\Ku\mud,
	\]
	which proves \eqref{eq:tmu_mip}.
	
	We now prove the remaining inequalities using Lemma \ref{lem:proj_i}, beginning with \eqref{eq:proj_I_PA}.
	Since $\P_{\hats}$ is a projection matrix, for any index $i\notin \hats$, 
	\[
	\left\Vert \left(\bm{I}-\P_{\hats}\right)\xn_{i}\right\Vert _{2}^{2}\leq\left\Vert \xn_{i}\right\Vert _{2}^{2}=1.
	\]
	Recall that for any distinct pair of indices $i\neq j $, it holds that $0\leq \left\langle \xn_{j},\xn_{i}\right\rangle^2\leq \mumax^2$.
	By Eq. \eqref{eq:norm} with $\Ev=\xn_i$,
	\begin{equation*}\label{eq:i_PS_i}
	\left\Vert \left(\bm{I}-\P_{\hats}\right)\xn_{i}\right\Vert _{2}^{2}\geq1-\frac{\sum_{j\in\hats}\left\langle \xn_{j},\xn_{i}\right\rangle ^{2}}{1-\left(\Kd-1\right)\mumax}\geq1-\frac{\Kd\mumax^{2}}{1-\left(\Kd-1\right)\mumax}=1-\mud,
	\end{equation*}
	which concludes the proof of \eqref{eq:proj_I_PA}.

	Next, we prove inequality \eqref{eq:inprod_I_PA}.
	By the right inequality in \eqref{eq:inprod} with $\Ev_1=\xn_k$ and $\Ev_2=\xn_i$, 
	\[
	\left\langle \xn_{k},\left(\bm{I}-\P_{\hats}\right)\xn_{i}\right\rangle \leq\left\langle \xn_{k},\xn_{i}\right\rangle +\frac{\left|\sum_{j\in\hats}\left\langle \xn_{j},\xn_{k}\right\rangle \left\langle \xn_{j},\xn_{i}\right\rangle \right|}{1-\left(\Kd-1\right)\mumax}.
	\]
	Thus, by the triangle inequality and by the definitions of $\mumax$ and $\mud$ in Eqs. \eqref{eq:mumax} and \eqref{eq:mud} respectively,
	\begin{eqnarray*}
		\left|\left\langle \xn_{k},\left(\bm{I}-\P_{\hats}\right)\xn_{i}\right\rangle \right| & \leq & \left|\left\langle \xn_{k},\xn_{i}\right\rangle \right|+\frac{\left|\sum_{j\in\hats}\left\langle \xn_{j},\xn_{k}\right\rangle \left\langle \xn_{j},\xn_{i}\right\rangle \right|}{1-\left(\Kd-1\right)\mumax}\\
		& \leq & \mumax+\frac{\Kd\mumax^{2}}{1-\left(\Kd-1\right)\mumax}=\mumax+\mud.
	\end{eqnarray*}

	Finally, we prove inequality \eqref{eq:proj_I_Pk_PA}.
	Recall that by the max-MIP condition \eqref{eq:max_mip_cond}, $\mumax<\frac{1}{K-1}$.
	For any distinct pair of indices $i\neq k $ such that $i,k\notin\hats$,
	Eq. \eqref{eq:norm} with $\Ev=\left(\I-\P_k \right) \xn_i$ gives
	\begin{eqnarray*}
		\left\Vert \left(\bm{I}-\P_{\hats}\right)\left(\bm{I}-\P_{k}\right)\xn_{i}\right\Vert _{2}^{2} & \geq & \left\Vert \left(\bm{I}-\P_{k}\right)\xn_{i}\right\Vert _{2}^{2}-\frac{\sum_{j\in\hats}\left\langle \xn_{j},\left(\bm{I}-\P_{k}\right)\xn_{i}\right\rangle ^{2}}{1-\left(\Kd-1\right)\mumax}\\
		& = & 1-\left\langle \xn_{k},\xn_{i}\right\rangle ^{2}-\frac{\sum_{j\in\hats}\left(\left\langle \xn_{j},\xn_{i}\right\rangle -\left\langle \xn_{k},\xn_{i}\right\rangle \left\langle \xn_{j},\xn_{k}\right\rangle \right)^{2}}{1-\left(\Kd-1\right)\mumax}\\
		& \geq & 1-\mumax^{2}-\frac{\Kd\left(\mumax+\mumax^{2}\right)^{2}}{1-\left(\Kd-1\right)\mumax}\\
		&= & 1-\mumax^{2}-\mud(1+\mumax)^2,
	\end{eqnarray*}
	which concludes the proof of Eq. \eqref{eq:proj_I_Pk_PA}.
	It remains to prove that $$1-\mumax^{2}-\mud(1+\mumax)^2>0.$$
	First, let $K=1$. This implies that $\hats=\emptyset$ and thus $$\left\Vert \left(\bm{I}-\P_{\hats}\right)\left(\bm{I}-\P_{\kmax}\right)\xn_{i}\right\Vert _{2}=\left\Vert \left(\bm{I}-\P_{\kmax}\right)\xn_{i}\right\Vert _{2}\geq1-\mumax^{2},$$ which is positive by the max-MIP condition \eqref{eq:max_mip_cond}.
	Now let $K>1$. By the max-MIP condition \eqref{eq:max_mip_cond}, $\mumax<1$ and $\frac{\Kd\mumax}{1-\left(\Kd-1\right)\mumax}<1$.
	Thus,
	\begin{eqnarray*}
		1-\mumax^{2}-\mud(1+\mumax)^{2} & > & 1-\mumax^{2}-\mumax\left(1+\mumax\right)^{2}\\
		& = & 1-\mumax\left(1+3\mumax+\mumax^{2}\right)\\
		& > & 1-\mumax\left(1+4\mumax\right).
	\end{eqnarray*} 
	Note that for each $K>1$, it holds that $\mumax<\frac{K-1}{2}$. 
	Thus,
	\[
	1-\mumax\left(1+4\mumax\right)>1-\mumax\left(1+2K-2\right)>0,
	\]
	where the last inequality is another application of the max-MIP condition \eqref{eq:max_mip_cond}.		
\end{proof}

\section{ADDITIONAL SIMULATION RESULTS}\label{app:add_exp}
Theorem \ref{thm:DJ-OMP} holds under the max-MIP condition \eqref{eq:max_mip_cond} and assumptions \ref{assum:M}-\ref{assum:r}.
However, in practice, \DJ succeeds even if these assumptions are not met. 
For example, the max-MIP condition does not hold in the setting used in Figure \ref{fig:succ}(b), and thus none of the additional assumptions hold either.
To examine assumption \ref{assum:M} further, we performed the following simulation, whose results are depicted in Figure \ref{fig:M_vs_d}.
As described in Section \ref{sec:sim}, we generated matrices with i.i.d. Gaussian entries, i.e., $\alpha=0$, with a fixed number of samples $n=2000$, varying dimension $d$, varying number of machines $M$, and varying sparsity level $K$. 
In each simulation, the noise level is $\sigma=1$, and each of the $K$ nonzero values of the sparse vector $\tv$ equals $\tmin=0.06$.
We then used linear extrapolation to estimate for each dimension the number of machines needed to reach a given success probability, in our example $0.5$, and displayed them on a logarithmic scale. In addition, we display a least-squares-based linear estimation of the relation between $\log(M)$ and $\log(d)$. The small resulting sum of squared residuals (SSR) support our result that the relationship is of the form $M=O(d^{\beta})$ for some $0<\beta<1$, even when the max-MIP condition does not hold, and in fact $\beta$ is empirically smaller than the exponent derived in Eq. \eqref{eq:Mc_K-maintext}. 
In addition, the estimated number of machines increases with $K$, which is also in accordance with Eq. \eqref{eq:Mc_K-maintext}. 
We obtained similar results when the matrices were slightly correlated, with slightly higher estimated number of machines.

\begin{figure}[!t]
	\centering
	\includegraphics[width=0.75\columnwidth]{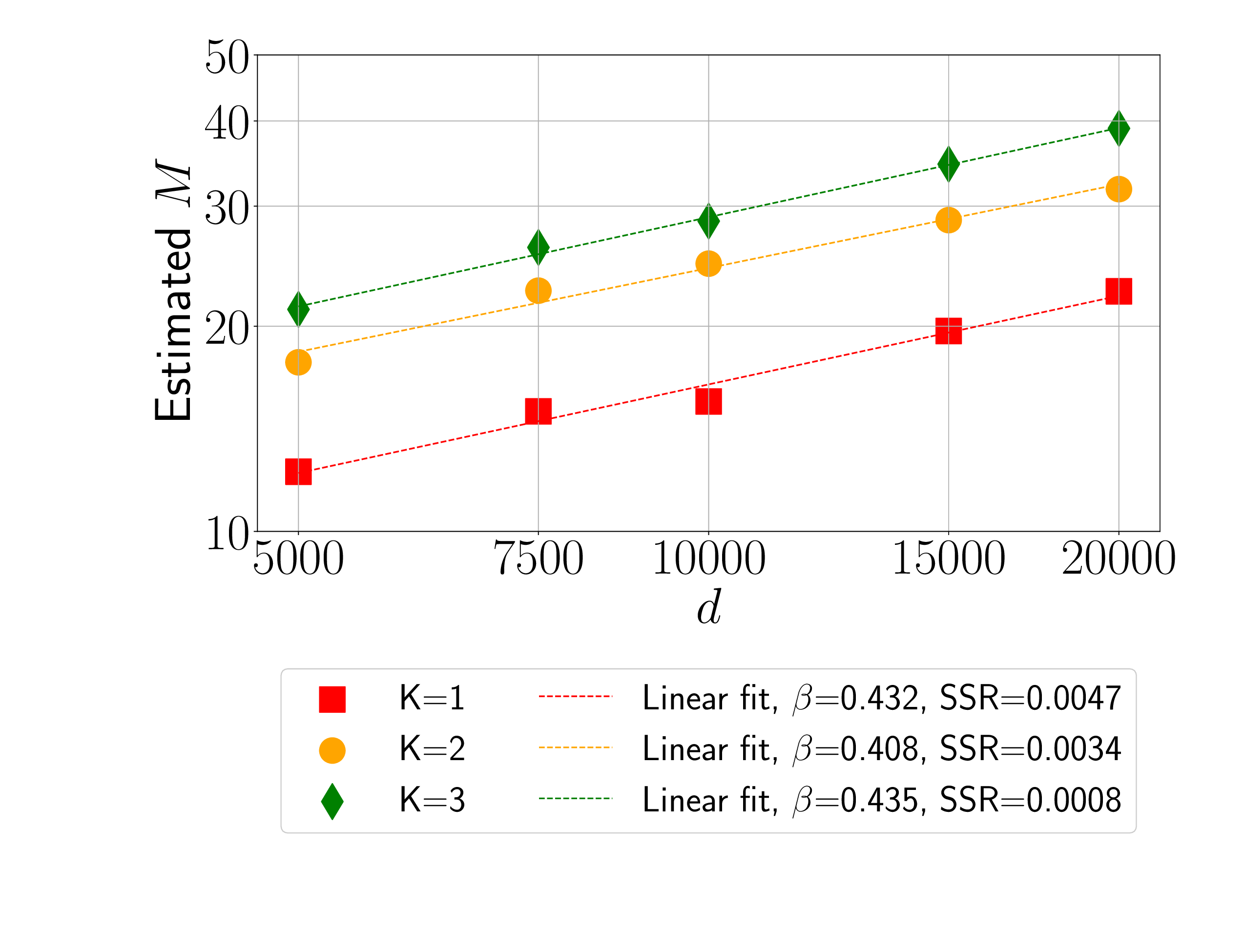}
	\caption{Number of Machines for Support Recovery by \DJ vs. Dimension}
	\label{fig:M_vs_d}
\end{figure}

\section{IMPLEMENTATION DETAILS}\label{sec:implmnt}
The code used to generate the simulations in Section \ref{sec:sim} was implemented in Python and was executed on an internal cluster \citep[v3.8;][PSF licensed]{python38}.
For \SIS-based methods, we used the SIS package by \citet{saldana2018sis}, which was implemented using R statistical software \citep[v4.0.3;][]{R2023} and embedded into the Python code using the rpy2 package (https://rpy2.github.io/), all licensed by GPL-2 licenses.
\texttt{Lasso}-based methods were implemented using the scikit-learn package by \citet[BSD License]{scikit}. 
Other libraries that were used include NumPy \citep[liberal BSD license]{harris2020array}, SciPy \citep[BSD license]{2020SciPy}, and Matplotlib \citep[BSD compatible license]{hunter2007matplotlib}.

\end{document}